\newtheorem{theorem}{Theorem}
\DeclareMathOperator{\argmin}{argmin}
\DeclareMathOperator{\argmax}{argmax}
\newcommand{\angles}[1]{\langle #1 \rangle}
\newcommand{\rightcomment}[1]{\(\triangleright\) {\small \it #1}}
\newcommand{\eqcomment}[1]{\addtocounter{equation}{1}\tag*{\rightcomment{#1}\quad(\theequation)}}
\newcommand\eqcomment*[1]{\tag*{\rightcomment{#1}}}
\renewcommand\algorithmicthen{:}
\algnewcommand{\IfThen}[2]{\State \algorithmicif\ #1\ \algorithmicthen\ #2}
\algnewcommand{\IfThenElse}[3]{\State \algorithmicif\ #1\ \algorithmicthen\ #2\ \algorithmicelse\ #3}
\algrenewcommand{\algorithmiccomment}[1]{\hfill \rightcomment{#1}}
\algnewcommand{\LineComment}[1]{\State \rightcomment{#1}}
\algnewcommand{\LinesComment}[1]{\State \rightcomment{\parbox[t]{\linewidth-\leftmargin-\widthof{\(\triangleright\) }}{#1}}\smallskip}
\algnewcommand\algorithmicinput{{\bfseries Input:}}
\algnewcommand\INPUT{\item[\algorithmicinput]}
\algnewcommand\algorithmicoutput{{\bfseries Output:}}
\algnewcommand\OUTPUT{\item[\algorithmicoutput]}
\newcounter{algorithmicH}
\let\oldalgorithmic\algorithmic
\renewcommand{\algorithmic}{%
  \stepcounter{algorithmicH}
  \oldalgorithmic}
\renewcommand{\theHALG@line}{ALG@line.\thealgorithmicH.\arabic{ALG@line}}
\newcommand{\algmargin}{\the\ALG@thistlm}
\algnewcommand{\Statepar}[1]{\State\parbox[t]{\dimexpr\linewidth-\algmargin}{\strut #1\strut}}
\newcommand{\pluseq}{\mathrel{+\!\!=}}
\xapptocmd\normalsize{
 \abovedisplayskip=11pt plus 3pt minus 9pt
 \abovedisplayshortskip=0pt plus 3pt
 \belowdisplayskip=11pt plus 3pt minus 9pt
 \belowdisplayshortskip=6.5pt plus 3.5pt minus 3pt
}{}{}
\newcolumntype{C}{>{\centering\arraybackslash}X}
\newcolumntype{R}{>{\raggedleft\arraybackslash}X}
\newcolumntype{S}{>{\raggedleft\arraybackslash\hsize=.5\hsize}X}
\crefname{equation}{equation}{equations}
\crefname{section}{section}{sections}
\crefname{footnote}{footnote}{footnotes}   
\crefname{line}{line}{lines}   
\renewcommand{\vec}[1]{{\boldsymbol{\mathbf{#1}}}}
\newcommand{\defn}[1]{\textbf{#1}}
\newcommand{\defeq}{\mathrel{\stackrel{\textnormal{\tiny def}}{=}}}
\newcommand{\E}[2][]{\mathbb{E}_{{#1}}[#2]}
\newcommand{\Real}{\mathbb{R}}
\newcommand{\Uniform}{\mathrm{Unif}}
\newcommand{\Exp}{\mathrm{Exp}}
\newcommand{\Categorical}{\mathrm{Categorical}}
\renewcommand{\th}{\textsuperscript{th}\xspace}
\newcommand{\bos}{\textsc{bos}\xspace}
\newcommand{\eos}{\textsc{eos}\xspace}
\newcommand{\comp}{\vec{x} \sqcup \vec{z}}
\newcommand{\obs}{\vec{x}}
\newcommand{\unobs}{\vec{z}}
\newcommand{\truth}{{\unobs^{*}}}
\newcommand{\estimate}{\hat{\unobs}}
\newcommand{\seqspace}{\mathcal{Z}}
\newcommand{\supk}{^{(k)}}
\newcommand{\ali}{\vec{a}}
\newcommand{\alihat}{\hat{\ali}}
\newcommand{\zunion}{\vec{z}_{\sqcup}}
\newcommand{\history}{\mathcal{H}}
\newcommand{\future}{\mathcal{F}}
\newcommand{\kt}[2]{#1{\scriptstyle @}#2}
\newcommand{\target}{p(\unobs \mid \obs)}
\newcommand{\proposal}{q(\unobs \mid \obs)}
\newcommand{\model}{p_{\text{model}}}
\newcommand{\pmiss}{p_{\text{miss}}}
\newcommand{\Obs}{\mathit{X}}
\newcommand{\Unobs}{\mathit{Z}}
\newcommand{\Comp}{\mathit{Y}}
\newcommand{\pmi}{\rho}
\newcommand{\inten}[2]{\lambda_{{#1}}(#2 \mid \history(#2))}
\newcommand{\intenboth}[2]{\lambda_{{#1}}(#2 \mid \history(#2), \future(#2))}
\newcommand{\intenbothq}[2]{\lambda^q_{{#1}}(#2 \mid \history(#2), \future(#2))}
\newcommand{\stat}{\underline}
\newcommand{\state}{\vec{h}}
\newcommand{\stateb}{\bar{\vec{h}}}
\newcommand{\cell}{\vec{c}}
\newcommand{\cellb}{\bar{\vec{c}}}
\newcommand{\inputgate}{\vec{i}}
\newcommand{\inputgateb}{\bar{\vec{i}}}
\newcommand{\forgetgate}{\vec{f}}
\newcommand{\forgetgateb}{\bar{\vec{f}}}
\newcommand{\zgate}{\vec{z}}
\newcommand{\zgateb}{\bar{\vec{z}}}
\newcommand{\outputgate}{\vec{o}}
\newcommand{\outputgateb}{\bar{\vec{o}}}
\newcommand{\statcell}{\stat{\vec{c}}}
\newcommand{\statcellb}{\bar{\stat{\vec{c}}}}
\newcommand{\statinputgate}{\stat{\vec{i}}}
\newcommand{\statinputgateb}{\bar{\stat{\vec{i}}}}
\newcommand{\statforgetgate}{\stat{\vec{f}}}
\newcommand{\statforgetgateb}{\bar{\stat{\vec{f}}}}
\newcommand{\decaygate}{\vec{\delta}}
\newcommand{\decaygateb}{\bar{\vec{\delta}}}
\newcommand{\loss}{L}
\newcommand*\iftodonotes{\if@todonotes@disabled\expandafter\@secondoftwo\else\expandafter\@firstoftwo\fi}
\newcommand{\Fixme}[2][]{\noindent}
\newcommand{\Notewho}[3][]{\noindent}
\newcommand{\Jason}[2][]{\noindent}
\newcommand{\Hongyuan}[2][]{\noindent}
\newcommand{\Guanghui}[2][]{\noindent}
\newlength{\extramargin}
\newcommand{\cutforspace}[1]{}
\renewcommand{\paragraph}[1]{\par\textbf{#1}\quad}
\newcommand{\orangecircle}{\begin{tikzpicture} \draw [fill=orange, thick] (0,0) circle (0.1); \end{tikzpicture}\xspace}
\newcommand{\greendiamond}{\begin{tikzpicture} \draw [fill=Green, thick, rotate=45] (0,0) rectangle (0.15, 0.15); \end{tikzpicture}\xspace}
\newcommand{\purplesquare}{\begin{tikzpicture} \draw [fill=MediumOrchid, thick] (0,0) rectangle (0.17, 0.17); \end{tikzpicture}\xspace}
\newcommand{\bluehexagon}{\begin{tikzpicture} \path node[regular polygon, regular polygon sides=6, fill=cyan, draw, thick, scale=0.7] (hexagon) {}; \end{tikzpicture}\xspace}
\newcommand{\greensolid}{\begin{tikzpicture} \draw [Green, ultra thick] (0,1) -- (0.5,1); \end{tikzpicture}\xspace}
\newcommand{\greendash}{\begin{tikzpicture} \draw [Green, dashed, ultra thick] (0,1) -- (0.5,1); \end{tikzpicture}\xspace}
\newcommand{\purplesolid}{\begin{tikzpicture} \draw [MediumOrchid, ultra thick] (0,1) -- (0.5,1); \end{tikzpicture}\xspace}
\newcommand{\purpledash}{\begin{tikzpicture} \draw [MediumOrchid, dashed, ultra thick] (0,1) -- (0.5,1); \end{tikzpicture}\xspace}
\newcommand{\greendiamondfaint}{\begin{tikzpicture} \draw [fill=Green, opacity=0.3, thick, rotate=45] (0,0) rectangle (0.15, 0.15); \end{tikzpicture}\xspace}
\newcommand{\bluedot}{\begin{tikzpicture} \draw [fill=blue] (0,0) circle (0.1); \end{tikzpicture}\xspace}
\newcommand{\reddot}{\begin{tikzpicture} \path node[regular polygon, regular polygon sides=3, fill=red, draw, red, thick, scale=0.5] (hexagon) {}; \end{tikzpicture}\xspace}
\newcommand{\blueline}{\begin{tikzpicture} \draw[arrows={-angle 60}, white, thick, rotate=180, opacity=1.0] (0,0.00) -- (0.5,0.00); \draw[arrows={-angle 60}, blue, thick, rotate=180] (0,-0.06) -- (0.5,-0.06); \end{tikzpicture}\xspace}
\newcommand{\redline}{\begin{tikzpicture} \draw[dashed, arrows={-angle 60}, white, thick, rotate=180, opacity=1.0] (0,0.00) -- (0.5,0.00); \draw[dashed, arrows={-angle 60}, red, thick, rotate=180] (0,-0.06) -- (0.5,-0.06); \end{tikzpicture}\xspace}
\newcommand{\heldout}{test\space}
\icmltitlerunning{Imputing Missing Events in Continuous-Time Event Streams}
\begin{document}
\twocolumn[
\icmltitle{Imputing Missing Events in Continuous-Time Event Streams}

\icmlsetsymbol{equal}{*}

\begin{icmlauthorlist}
	\icmlauthor{Hongyuan Mei}{to}
	\icmlauthor{Guanghui Qin}{goo}
	\icmlauthor{Jason Eisner}{to}
\end{icmlauthorlist}

\icmlaffiliation{to}{Department of Computer Science, Johns Hopkins University, USA}
\icmlaffiliation{goo}{Department of Physics, Peking University, China}

\icmlcorrespondingauthor{Hongyuan Mei}{hmei@cs.jhu.edu}

\icmlkeywords{Missing events, neural Hawkes, particle smoothing, MBR decoding}

\vskip 0.3in
]

\printAffiliationsAndNotice{}

\begin{abstract}\label{sec:abstract}
Events in the world may be caused by other, {\em unobserved} events.
We consider sequences of events in continuous time.  
Given a probability model of \emph{complete} sequences, we propose particle smoothing---a form of sequential importance sampling---to impute the missing events in an \emph{incomplete} sequence.
We develop a trainable family of proposal distributions based on a type of  bidirectional continuous-time LSTM: Bidirectionality lets the proposals condition on future observations, not just on the past as in particle filtering.  Our method can sample an ensemble of possible complete sequences (particles), from which we 
form a single consensus prediction that has low Bayes risk under our chosen loss metric.
We experiment in multiple synthetic and real domains, using different missingness mechanisms, and modeling the complete sequences in each domain with a neural Hawkes process \citep{mei-17-neuralhawkes}.  On held-out incomplete sequences, our method is effective at inferring the ground-truth unobserved events, with particle smoothing consistently improving upon particle filtering.
\end{abstract}

\section{Introduction}\label{sec:intro}
\defn{Event streams} of discrete events in continuous time are often {\em partially} observed.  We would like to impute the missing events $\unobs$.  Suppose we know the prior distribution $\model$ of complete event streams, as well as the ``missingness mechanism'' $\pmiss(\unobs \mid \text{complete stream})$, which stochastically determines which of the events will not be observed.  One can then use use Bayes' Theorem, as spelled out in \cref{eqn:target} below, to define the posterior distribution $\target$ given just the observed events $\obs$.\footnote{Bayes' Theorem can be applied even if $\pmiss$ is a missing-not-at-random (MNAR) mechanism, as is common in this setting.  MNAR is only tricky if we know \emph{neither} $\model$ \emph{nor} $\pmiss$.}

\begin{figure*}[!ht]
	\begin{center}
		\caption[]{Stochastically imputing a taxi's pick-up events (\greendiamond) given its observed drop-off events (\purplesquare).  At this stage, we are trying to determine the next event after the \purplesquare at time $t_1$---either an unobserved event at $t_{1,1} \in (t_1,t_2)$ or the next observed event at $t_2$.}
		\vspace{12pt}

		\begin{subfigure}[b]{0.99\linewidth}
			\begin{subfigure}[t]{0.32\linewidth}
				\renewcommand\thesubfigure{\alph{subfigure}1}
				\includegraphics[width=0.99\linewidth]{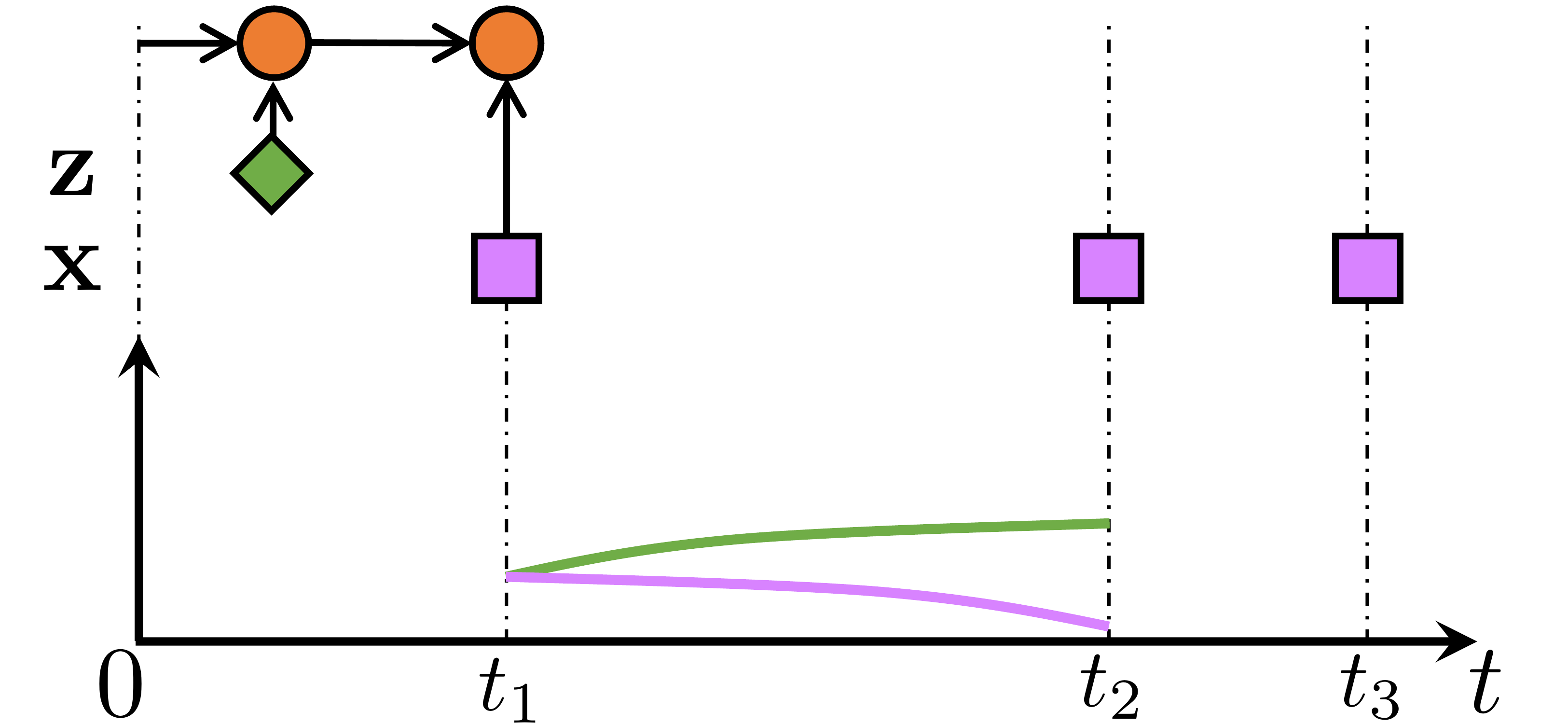}
				\caption{Both intensities are low (i.e., passengers are scarce at this time of day), 
					so no event happens to be proposed in $(t_1,t_2)$.}\label{fig:pf1}
			\end{subfigure}
			~
			\begin{subfigure}[t]{0.32\linewidth}
				\addtocounter{subfigure}{-1}
				\renewcommand\thesubfigure{\alph{subfigure}2}
				\includegraphics[width=0.99\linewidth]{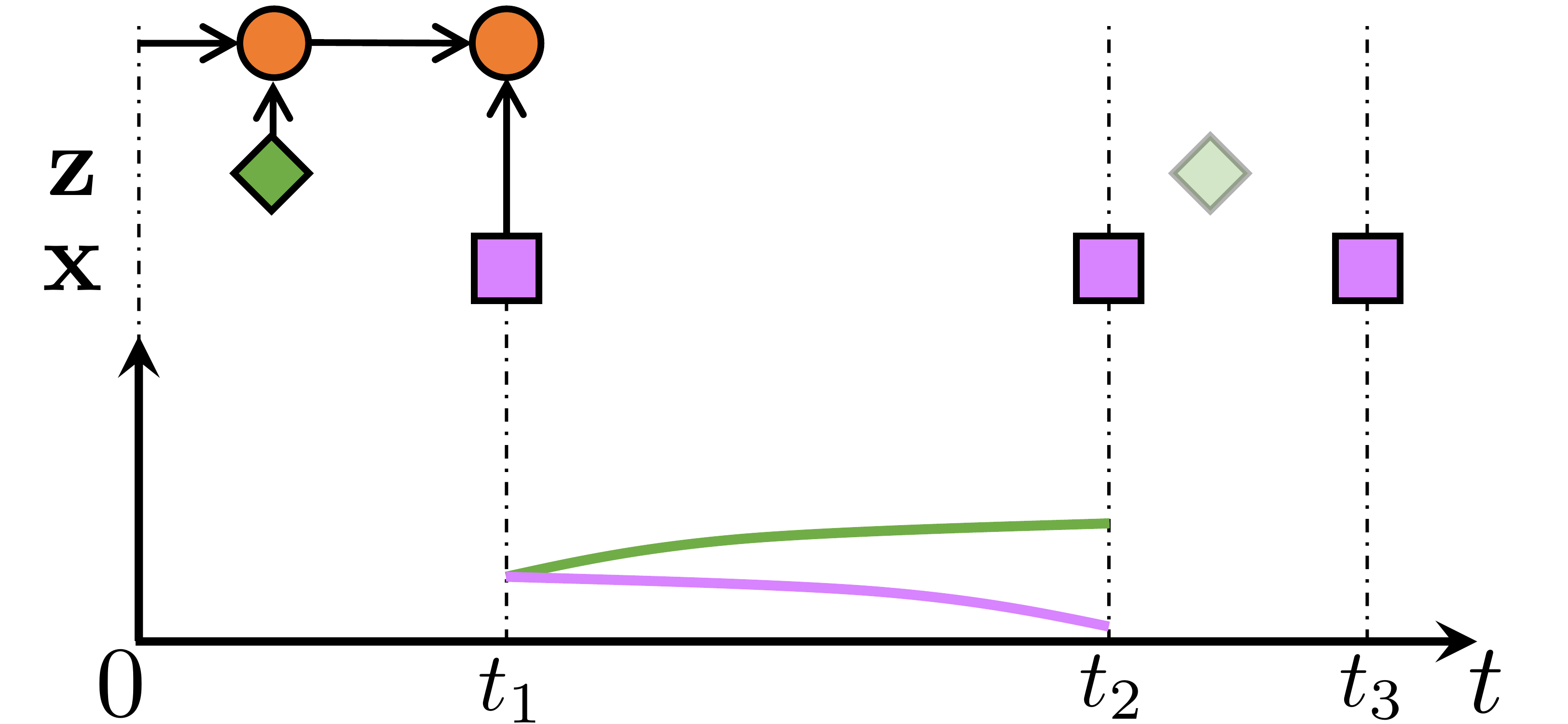}
				\caption{Specifically,
					the next proposed event (\greendiamondfaint) would be \emph{somewhere} after $t_2$,
					without bothering to determine its time precisely.
				}\label{fig:pf2}
			\end{subfigure}
			~
			\begin{subfigure}[t]{0.32\linewidth}
				\addtocounter{subfigure}{-1}
				\renewcommand\thesubfigure{\alph{subfigure}3}
				\includegraphics[width=0.99\linewidth]{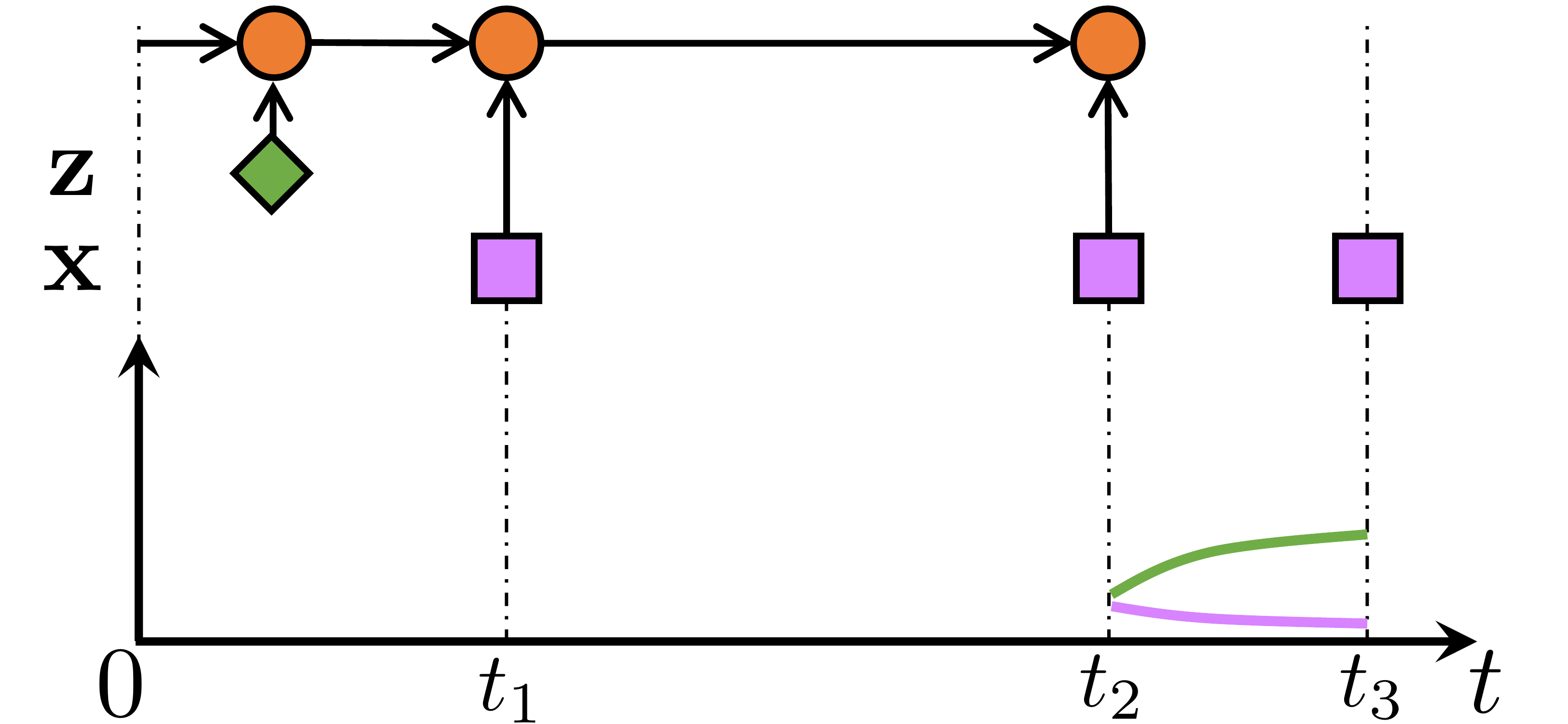}
				\caption{Thus, the next event is $\kt{\purplesquare}{t_2}$;
                                  we feed it into the LSTM, preempting \greendiamondfaint, which is discarded (\cref{line:preempt} of \cref{alg:sis}).
				}\label{fig:pf3}
			\end{subfigure}
			\addtocounter{subfigure}{-1}
			\caption{
				\textbf{Particle filtering (\cref{sec:nhpf})}. We show part of the process of drawing one particle.  Above left, the neural Hawkes process's LSTM has already read the proposed and observed events at times $\leq t_1$.  Its resulting state \orangecircle determines the model intensities \greensolid and \purplesolid of the two event types \greendiamond and \purplesquare, from which the sampler (\Cref{alg:sis} in \cref{sec:smc_details})
                                          determines that there is no unobserved event in $(t_1, t_2)$.  Above right,
				we continue to extend the particle by feeding $\kt{\purplesquare}{t_2}$ into the LSTM and 
				proposing subsequent events based on the new intensities after $t_2$.
				But because \purplesolid was low at $t_2$, the $\kt{\purplesquare}{t_2}$ was unexpected, and that results in downweighting the particle (\cref{line:downweight} of \cref{alg:sis}).  Downweighting recognizes belatedly that proposing no event in $(t_1,t_2)$ has committed us to a particle that will be improbable under the posterior, because its complete sequence includes consecutive drop-offs ($\kt{\purplesquare}{t_1}, \kt{\purplesquare}{t_2}$) far apart in time. 
			}\label{fig:pf}
		\end{subfigure}

		\vspace{8pt}

		\begin{subfigure}[b]{0.99\linewidth}
			\begin{subfigure}[t]{0.32\linewidth}
				\renewcommand\thesubfigure{\alph{subfigure}1}
				\includegraphics[width=0.99\linewidth]{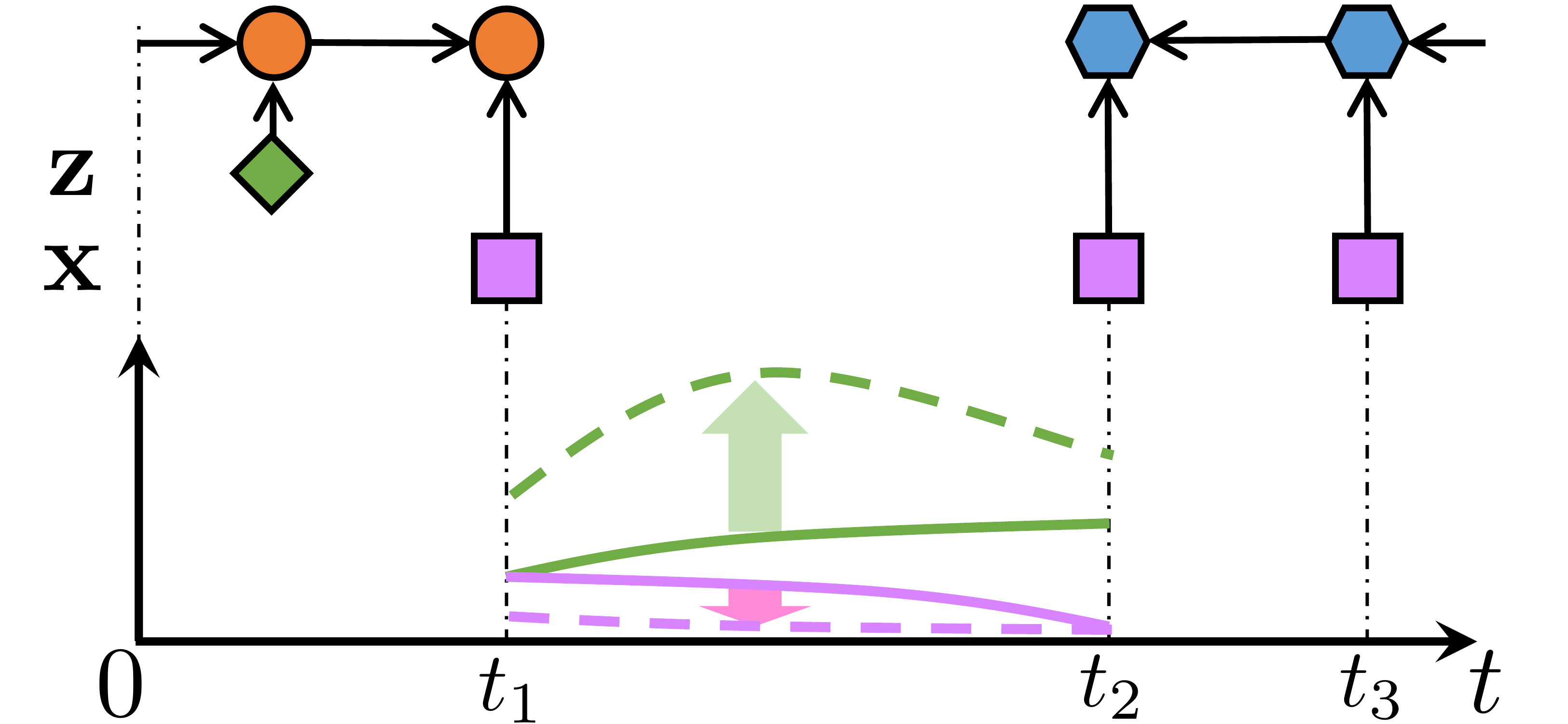}
				\caption{Since a drop-off at $t_2$ strongly suggests a pick-up before $t_2$, considering the future increases the intensity of pick-up on $(t_1,t_2)$ from \greensolid to \greendash (while decreasing that of drop-off from \purplesolid to \purpledash).}\label{fig:ps1}
			\end{subfigure}
			~
			\begin{subfigure}[t]{0.32\linewidth}
				\addtocounter{subfigure}{-1}
				\renewcommand\thesubfigure{\alph{subfigure}2}
				\includegraphics[width=0.99\linewidth]{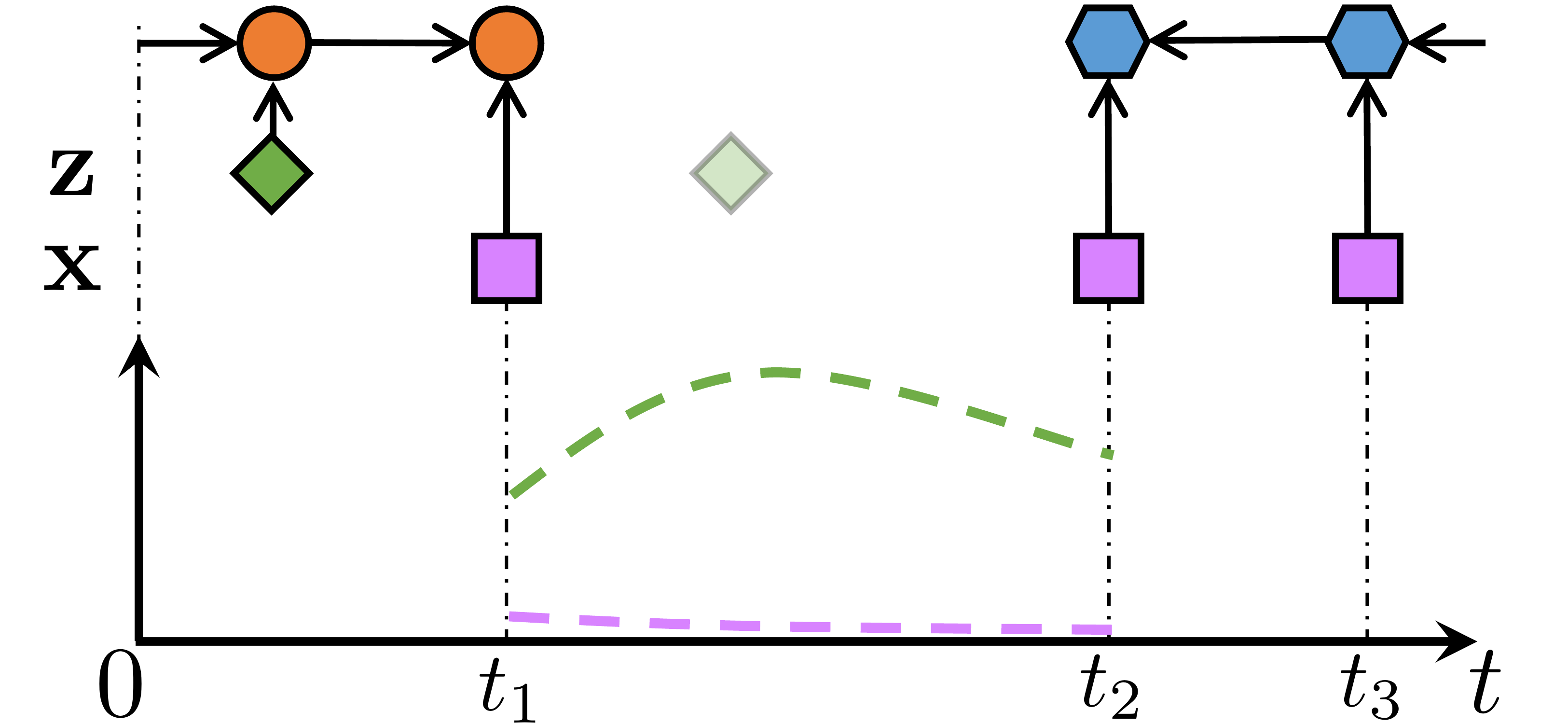}
				\caption{  Consequently, the next proposed event is more likely to be a pick-up in $(t_1,t_2)$ than it was in \cref{fig:pf}. If we stochastically generate such an event $\kt{\greendiamond}{t_{1,1}}$, it is fed into the original \orangecircle LSTM. }\label{fig:ps2}
			\end{subfigure}
			~
			\begin{subfigure}[t]{0.32\linewidth}
				\addtocounter{subfigure}{-1}
				\renewcommand\thesubfigure{\alph{subfigure}3}
				\includegraphics[width=0.99\linewidth]{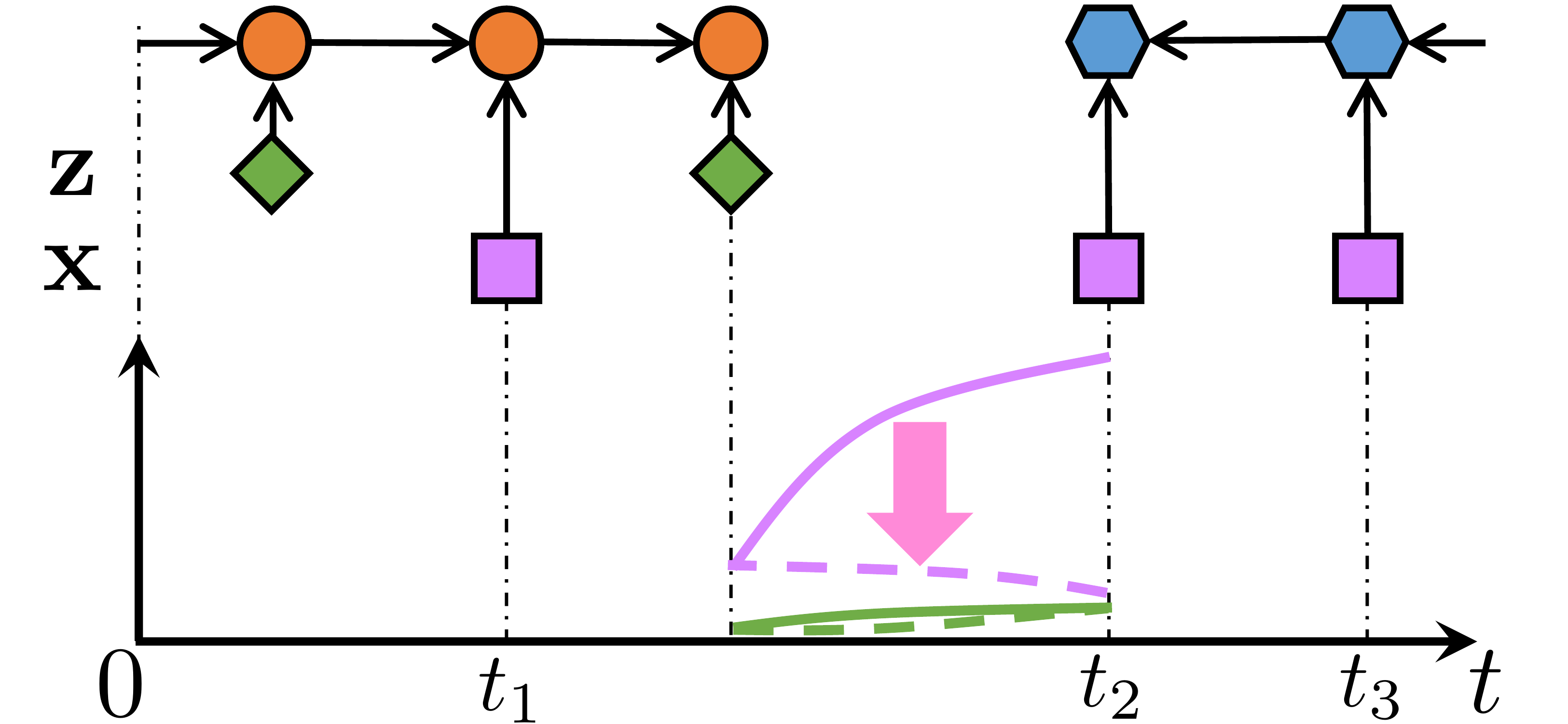}
				\caption{The updated state \orangecircle determines the new \emph{model} intensities \greensolid and \purplesolid,
					and also combines with \bluehexagon to determine the new \emph{proposal} intensities \greendash and \purpledash, which are used to sample the next event.
				}\label{fig:ps3}
			\end{subfigure}
			\addtocounter{subfigure}{-1}
			\caption{
				\textbf{Particle smoothing (\cref{sec:nhps})} samples from a better-informed proposal distribution: a second LSTM (\cref{sec:r2l}) reads the future observations from right to left, and its state \bluehexagon is used \emph{together} with \orangecircle to determine the proposal intensities \greendash and \purpledash.
			}\label{fig:ps}
		\end{subfigure}
		\vspace{-12pt}
		\label{fig:method}
	\end{center}
\end{figure*}

\paragraph{Why is this important?} The ability to impute $\unobs$ is useful in many applied domains, for example:
\begin{itemize}[noitemsep,nolistsep]
\item {\em Medical records.} Some patients record detailed symptoms, self-administered medications, diet, and sleep.  Imputing these events for other patients would produce an augmented medical record that could improve diagnosis, prognosis, treatment, and counseling.

Similar remarks apply to users of life-tracking apps (e.g., MyFitnessPal) who forget to log some of their daily activities (e.g., meals, sleep and exercise). 

\item {\em Competitive games.} In poker or StarCraft, a player lacks full information about what her opponents have acquired (cards) or done (build mines and train soldiers).  Accurately imputing hidden actions from ``what I did'' and ``what I observed others doing'' can help the player make good decisions.  Similar remarks apply to practical scenarios (e.g., military) where multiple actors compete and/or cooperate.
\item {\em User interface interactions.} Cognitive events are usually unobserved. For example, users of an online news provider (e.g., Bloomberg Terminal) may have read and remembered a displayed headline whether or not they clicked on it. Such events are expensive to observe (e.g., via gaze tracking or asking the user).  Imputing them given the observed events (e.g., other clicks) would facilitate personalization.
\item Other partially observed event streams arise in {\em online shopping}, {\em social media}, etc.
\end{itemize}

\paragraph{Why is it challenging?}
It is computationally difficult to reason about the posterior distribution $p(\unobs \mid \obs)$.  Even for a simple $\model$ like a Hawkes process \citep{hawkes-71}, Markov chain Monte Carlo (MCMC) methods are needed, and these methods obtain an efficient transition kernel only by exploiting special properties of the process \citep{shelton-18-missing}. Unfortunately, such properties no longer hold for the more flexible neural models that we will use in this paper \citep{du-16-recurrent,mei-17-neuralhawkes}.

\paragraph{What is our contribution?}
We are, to the best of our knowledge, the first to develop general sequential Monte Carlo (SMC) methods to approximate the posterior distribution over incompletely observed draws
  from a neural point process.
We begin by sketching the approach.

\citet{mei-17-neuralhawkes} give an algorithm to sample a complete sequence from a neural Hawkes process.  Each event in turn is sampled given the complete history of previous events. However, this algorithm only samples from the prior over complete sequences.
We first adapt it into a \defn{particle filtering} algorithm that samples from the posterior given all the observed events.  The basic idea (\cref{fig:pf}) is to draw the events in sequence as before, but now we \emph{force} any observed events to be ``drawn'' at the appropriate times.  That is, we add the observed events to the sequence as they happen (and they duly affect the distribution of subsequent events). There is an associated cost: if we are forced to draw an observed event that is \emph{improbable} given its past history, we must downweight the resulting complete sequence accordingly, because evidently the particular past history that we sampled was inconsistent with the observed event, and hence cannot be part of a high-likelihood complete sequence.  Using this method, we sample many sequences (or \defn{particles}) of different {\em relative} weights. 
This method applies to any temporal point process.\footnote{As long as the number of events is finite with probability 1, and it is tractable to compute the log-likelihood of a complete sequence and to estimate the log-likelihoods of its prefixes.}
\citet{linderman-17-bayesian} apply it to the classical Hawkes process.

Alas, this approach is computationally inefficient.  Sampling a complete sequence that is actually probable under the posterior requires great luck, as the proposal distribution must have the good fortune to draw only events that happen to be consistent with future observations.  Such lucky particles would appropriately get a high weight relative to other particles.  The problem is that we will rarely get such particles at all (unless we sample very many).

To get a more accurate picture of the posterior, this paper draws each event from a smarter distribution that is conditioned on the future observations (rather than drawing the event in ignorance of the future and then downweighting the particle if the future does not turn out as hoped).

This idea is called \defn{particle smoothing} \citep{doucet-09-particle}.
How does it work in our setting? The neural Hawkes process defines the distribution of the next event using the state of a \defn{continuous-time LSTM} that has read the past history from left to right.  When sampling a proposed event, we now use a modified distribution (\cref{fig:ps}) that \emph{also} considers the state of a second continuous-time LSTM that has read the future observations from right to left. As this modified distribution is still imperfect---merely a proposal distribution---we still have to reweight our particles to match the actual posterior under the model.  But this reweighting is not as drastic as for particle filtering, because the new proposal distribution was constructed and trained to resemble the actual posterior.  
Our proposal distribution could also be used with other
point process models by replacing the left-to-right LSTM state with other informative statistics of the past history.

\paragraph{What other contributions?}
We introduce an appropriate evaluation loss metric for event stream reconstruction, and then design a \defn{consensus decoder} that outputs a single low-risk prediction of the missing events by combining the sampled particles (instead of picking one of them).

\vspace{-5pt}
\section[Preliminaries]{Preliminaries\footnote{Our conventions regarding capitalization, boldface, etc.  are inherited from the notation of \protect{\citet[section 2]{mei-17-neuralhawkes}}.
}}\label{sec:notation}
\vspace{-3pt}
\subsection{Partially Observed Event Streams}\label{sec:partial}
We consider a missing-data setting \citep{little-rubin-1987}.  We are given a fixed time interval $[0,T)$ over which events can be observed.  An event of type $k \in  \{1, 2, \ldots, K\}$ at time $t \in [0,T)$ is denoted by an ordered pair written mnemonically as $\kt{k}{t}$.  
Each possible outcome in our probability distributions is a complete event sequence in which each event is designated as either ``observed'' or ``missing.'' 

We observe only the \defn{observed events}, denoted by $\obs = \{\kt{k_1}{t_1}, \kt{k_2}{t_2}, \ldots, \kt{k_I}{t_I}\}$, where $0 = t_0 < t_1 < t_2 < \ldots < t_I < t_{I+1} = T$.
We are given the observation interval $[0,T)$ in the form of two \defn{boundary events} $\kt{k_0}{t_0}$ and $\kt{k_{I+1}}{t_{I+1}}$ at its endpoints, where $k_0\!=\!0$ and $k_{I+1}\!=\!K+1$.  

Let $\kt{k_{i,0}}{t_{i,0}}$ be an alternative notation for the observed event $\kt{k_i}{t_i}$.  Following this observed event (for any $0 \leq i \leq I$), there are $J_i \geq 0$ \defn{unobserved events} $\unobs=\{\kt{k_{i,1}}{t_{i,1}}, \kt{k_{i,2}}{t_{i,2}}, \ldots, \kt{k_{i,J_i}}{t_{i,J_i}}\}$, where $t_{i,0} < t_{i,1} < \ldots < t_{i,J_i} < t_{i+1}$.  We must guess this unobserved sequence including its length $J_i$.  Let $\sqcup$ denote disjoint union.  Our hypothesized \defn{complete event sequence} $\comp$ is thus $\{\kt{k_{i,j}}{t_{i,j}}: 0 \leq i \leq I+1, 0 \leq j \leq J_i\}$, where $t_{i,j}$ increases strictly with the pair $\angles{i,j}$ in  lexicographic order.\footnote{\label{fn:setseq}In general we should allow $t_{i,j}$ to increase \emph{non}-strictly with $\angles{i,j}$. But equality happens to have probability 0 under the neural Hawkes model. So it is convenient to exclude it here, simplifying notation by allowing $\obs, \unobs,\history(t)$ to be sets, not sequences.}

In this paper, we will attempt to guess all of $\unobs$ jointly by sampling it from the posterior distribution
\begin{align*}
&p(\Unobs=\unobs \mid \Obs=\obs)  \\
&\qquad \propto \model(\Comp=\comp) \cdot \pmiss(\Unobs=\unobs \mid \Comp=\comp) 
\end{align*}
of a process that \emph{first} generates the complete sequence $\comp$ from a complete data model $\model$ (given $[0,T)$), and \emph{then} determines which events to censor with the possibly stochastic \defn{missingness mechanism} $\pmiss$. 
The random variables $\Obs$, $\Unobs$, and $\Comp$ refer respectively to the sets of observed events, missing events, and all events over $[0,T)$.  Thus $\Comp = \Obs\sqcup\Unobs$.  
Under the \cutforspace{probability }distributions we will consider,
$|\Comp|$ is almost surely finite.  
Notice that $\unobs$ denotes the set of missing events in $\Comp$ and $Z=\unobs$ denotes
the fact that they are missing.
That said, we will abbreviate our notation above in the standard way:%
\begin{align}\label{eqn:target}
p(\unobs \mid \obs) \;\propto\; \model(\comp) \cdot \pmiss(\unobs \mid \comp)
\end{align}

Note that $\comp$ is simply an undifferentiated sequence of $\kt{k}{t}$ pairs; the subscripts $\angles{i,j}$ are in effect assigned by $\pmiss$, which partitions $\comp$ into $\obs$ and $\unobs$.
To explain a sequence of 50 observed events, one hypothesis is that
$\model$ generated 73 events and then $\pmiss$ selected 23 of them to
be missing (as $\unobs$), leaving the 50 observed events (as $\obs$).

In many missing data settings, the second factor of \cref{eqn:target} can be ignored because (for the given $\obs$) it is known to be a constant function of $\unobs$.  Then the missing data are said to be \defn{missing at random (MAR)}.  For event streams, however, the second factor is generally not constant in $\unobs$ but varies with the \emph{number} of missing events $|\unobs|$.  Thus, our unobserved events are normally \defn{missing not at random (MNAR)}.  See discussion in \cref{sec:miss-mech} and \cref{sec:miss_details}.

\subsection{Choice of $\model$}\label{sec:nhp}
We need a multivariate point process model $\model(\comp)$.  We choose the \defn{neural Hawkes process} \citep{mei-17-neuralhawkes}, which has proven flexible and effective at modeling many real-world event streams.  

Whether an event happens at time $t \in [0,T)$ depends on the \defn{history} $\history(t) \defeq \{\kt{k'}{t'} \in \comp : t' < t\}$---the set of all {\em observed} and {\em unobserved} events before $t$.
Given this  history, the neural Hawkes process defines an \defn{intensity}
$\lambda_k(t \mid \history(t)) \in \Real_{\geq 0}$, which may be
thought of as the \emph{instantaneous rate} at time $t$ of events of type $k$:
\begin{align}\label{eqn:hawkes_mod_a}
	\lambda_k(t \mid \history(t)) &= f_k(\vec{v}_k^\top \state(t))
\end{align}
Here $f_k$ is a softplus function with $k$-specific scaling parameter.  The vector $\vec{h}(t) \in (-1,1)^D$ summarizes $(\history(t),t)$. It is the hidden state at time $t$ of a \defn{continuous-time LSTM} that previously read the events in $\history(t)$ \emph{as they happened}.  The state of such an LSTM evolves endogenously as it waits between events, so the state $\vec{h}(t)$ reflects not only the sequence of past events but also their \emph{timing}, including the gap between the last event in $\history(t)$ and $t$.

As \citet{mei-17-neuralhawkes} explain, the probability of an event of type $k$ in the interval $[t,t+dt)$, divided by $dt$, approaches \eqref{eqn:hawkes_mod_a} as $dt \to 0^+$.  Thus, $\lambda_k$ is similar to the intensity function of an inhomogeneous Poisson process.  Yet it is not a fixed parameter: the $\lambda_k$ function for times $\geq t$ is affected by the previously sampled events $\history(t)$.
See \cref{sec:nhp_details}.

\section{Particle Methods}\label{sec:method}
It is often intractable to sample {\em exactly} from $\target$, because 
$\obs$ and $\unobs$ can be interleaved with each other.  As an alternative, we can use normalized importance sampling, drawing many $\unobs$ values from a \defn{proposal distribution} $\proposal$ and weighting them in proportion to $\frac{\target}{\proposal}$.
\cref{fig:method} shows the key ideas in terms of an example.
Full details are spelled out in \cref{alg:sis} in \cref{sec:smc_details}.  

\Cref{alg:sis} is a \defn{Sequential Monte Carlo (SMC)}
approach \citep{moral-97-nonlinear,liu-98-sequential,doucet-00-sequential,doucet-09-particle}. 
It returns an \defn{ensemble of weighted particles} $\seqspace_M =
\{(\unobs_m, w_m)\}_{m=1}^{M}$.  Each particle $\unobs_m$ is sampled
from the \defn{proposal distribution} $\proposal$, which is defined to
support sampling via a \emph{sequential} procedure that draws one unobserved
event at a time.
The corresponding $w_m$ are \defn{importance weights}, which
are defined as follows\cutforspace{\footnote{\label{fn:droppmiss}Omitting $\pmiss$ from \cref{eqn:weight_b}---because the missingness mechanism is unknown---would require a MAR assumption that this factor is constant.  That is unlikely, as noted above.}} (and built up factor-by-factor in \cref{alg:sis}):
\begin{align}
\label{eqn:weight_b}
w_m &\propto \frac{\model(\comp_m)\;\pmiss(\unobs_m \mid \comp_m)}{q(\unobs_m \mid \obs)} \geq 0
\end{align}

where the normalizing constant is chosen to make $\sum_{m=1}^M w_m = 1$.
\Cref{eqn:target,eqn:weight_b} imply that we would have $w_m = 1/M$ if we could set $\proposal$ equal to $\target$, so that the particles were IID samples from the desired posterior distribution.
In practice, $q$ will not equal $p$, but will be easier than $p$ to sample from.  To correct for the mismatch, the importance weights $w_m$ are higher for particles that $q$ proposes less often than $p$ would have proposed them.  

The distribution implicitly formed by the ensemble, $\hat{p}(\unobs)$,
approaches $\target$ as $M \rightarrow \infty$ \citep{doucet-09-particle}.
Thus, for large $M$, the ensemble may be used to estimate the expectation of {\em any}
function $f(\unobs)$, via
\begin{align}\label{eqn:expectation}
  \E[\target]{f(\unobs)}
  &\approx \E[\hat{p}]{f(\unobs)}
   = {\textstyle \sum_{m=1}^M w_m f(\unobs_m) }
\end{align}

$f(\unobs)$ may be a function that summarizes properties of the complete stream $\comp$ on $[0,T)$, or predicts {\em future} events on $[T,\infty)$ using the sufficient statistic $\history(T) = \comp$.

In the subsections below, we will describe two specific proposal distributions $q$ that are appropriate for the neural Hawkes process,
as we sketched in \cref{sec:intro}.  These distributions define intensity functions $\lambda^q$ over time intervals.  

The trickiest part of \cref{alg:sis} (at \cref{line:thinning}) is to sample the next unobserved event from the proposal distribution $q$.  Here we use the \defn{thinning algorithm} \citep{lewis-79-sim,liniger-09-hawkes,mei-17-neuralhawkes}.  Briefly, this is a rejection sampling algorithm whose own proposal distribution uses a {\em constant} intensity $\lambda^*$, making it a homogeneous Poisson process (which is easy to sample from).  A event proposed by the Poisson process at time $t$ is accepted with probability $\lambda^q(t)/\lambda^* \leq 1$.  If it is rejected, we move on to the next event proposed by the Poisson process, continuing until we either accept such an unobserved event or are preempted by the arrival of the next observed event.

After each step, one may optionally {\em resample} a new set of particles from $\{\unobs_m\}_{m=1}^{M}$ (the $\textsc{Resample}$ procedure in \cref{alg:sis}).  This trick tends to discard low-weight particles and clone high-weight particles, so that the algorithm can explore multiple continuations of the high-weight particles.

\vspace{-2pt}
\subsection{Particle Filtering}\label{sec:nhpf}

We already have a neural Hawkes process $\model$ that was trained on complete data. This model uses a neural net to define an intensity function $\lambda^p_k(t \mid \history(t))$ for \emph{any} history $\history(t)$ of events before $t$ and each event type $k$.  

The simplest proposal distribution uses this intensity function to draw the unobserved events.  More precisely, for each $i = 0,1,\ldots,I$, for each $j = 0, 1, 2, \ldots$,  let the next event $\kt{k_{i,j+1}}{t_{i,j+1}}$ be the first event generated by any of the $K$ intensity functions $\lambda_k(t \mid \history(t))$ over the interval $t \in (t_{i,j},t_{i+1})$, where $\history(t)$ consists of all observed and unobserved events up through $\kt{k_{i,j}}{t_{i,j}}$.  If no event is generated on this interval, then the next event is $\kt{k_{i+1}}{t_{i+1}}$.  This is implemented by \cref{alg:sis} with $\mathit{smooth}=\textbf{false}$.

\vspace{-2pt}
\subsection{Particle Smoothing}\label{sec:nhps}
As motivated in \cref{sec:intro}, we would rather draw each unobserved event according to $\intenboth{k}{t}$ where the \defn{future} $\future(t) \defeq \{ \kt{k_i}{t_i} : t < t_i \leq T \}$ consists of all {\em observed} events that happen after $t$. 
Note the asymmetry with $\history(t)$, which includes observed but also unobserved events.

We use a \defn{right-to-left continuous-time LSTM} to summarize the future $\future(t)$ for any time $t$ into another hidden state vector $\stateb(t) \in (-1,1)^{D'}$.  Then we parameterize the proposal intensity using an extended variant of \cref{eqn:hawkes_mod_a}:
\begin{align}\label{eqn:both}
\rule{0pt}{12pt}\intenbothq{k}{t}
 &= f_k(\vec{v}_k^\top (\state(t) + \vec{B} \stateb(t) ) ) 
\end{align}

This extra machinery is used by \cref{alg:sis} when $\mathit{smooth} = \textbf{true}$.
Intuitively, the left-to-right $\state(t)$, as explained in \citet{mei-17-neuralhawkes}, reads the history $\history(t)$ and computes sufficient statistics for predicting events at times $\geq t$ given $\history(t)$.  But we wish to predict these events given $\history(t)$ \emph{and} $\future(t)$.  \Cref{eqn:both} approximates this Bayesian update using the right-to-left $\stateb(t)$, which is trained to carry 

back relevant information about future observations $\future(t)$.

This is a kind of neuralized forward-backward algorithm.  \citet{lin-eisner-2018-naacl} treat the discrete-time analogue, explaining why a neural forward $\model$ no longer admits tractable exact proposals as does a hidden Markov model
\citep{rabiner-89-tutorial} or linear dynamical system \citep{rauch-65-maximum}.  Like them, we fall back on training an approximate proposal distribution. Regardless of $\model$, particle smoothing is to particle filtering
as Kalman smoothing is to Kalman filtering 
\citep{kalman-60-filter,kalman-61-new}.

Our right-to-left LSTM has the same architecture as the left-to-right LSTM used
in our $\model$ (\cref{sec:nhp}), 
but a separate parameter vector.
For any time $t \in (0, T)$, it arrives at $\stateb(t)$ by reading {\em only} the {\em observed} events $\{\kt{k_i}{t_i}: t < t_i \leq T \}$, i.e., $\future(t)$, in {\em reverse} chronological order.  Formulas are given in \cref{sec:r2l}.  This architecture seemed promising for reading an {\em incomplete} sequence of events from right to left, as \citet[section 6.3]{mei-17-neuralhawkes} had already found that this architecture is predictive when used to read incomplete sequences from left to right.  

\subsubsection{Training the Proposal Distribution}\label{sec:train}
The particle smoothing proposer $q$ can be trained to approximate $\target$ by minimizing a \defn{Kullback-Leibler (KL) divergence}. Its left-to-right LSTM is fixed at $\model$, so its
trainable parameters $\vec{\phi}$ are just the parameters of the right-to-left LSTM together with the matrix $\vec{B}$ from \cref{eqn:both}.
Though $\target$ is unknown,
the gradient of \defn{inclusive KL divergence} between $\proposal$ and $\target$ is\looseness=-1
\begin{align}\label{eqn:in_kl}
  \nabla_{\phi} \text{KL}(p \mid\mid q )
  = \E[\unobs \sim \target]{ - \nabla_{\phi} \log \proposal }
\end{align}
and the gradient of \defn{exclusive KL divergence} is:
\begin{subequations}\label{eqn:ex_kl}
\begin{align}
  &\nabla_{\phi} \text{KL}(q \mid\mid p )
  = \E[\unobs \sim q]{\nabla_{\phi} \left( \textstyle{\frac{1}{2}} \left(\log \proposal - b \right)^2 \right) } \\
  &b
  = \log \model(\comp) + \log \pmiss(\unobs \mid \comp)
\end{align}
\end{subequations}
where $\log \model(\comp)$ is given in \cref{sec:nhp_details}, $\log \proposal$ is given in \cref{sec:proposal}, and $\pmiss(\unobs \mid \comp)$ is assumed to be known to us for any given pair of $\obs$ and $\unobs$.

Minimizing inclusive KL divergence aims at high recall---$\proposal$ is adjusted to assign high probabilities to all of the good hypotheses (according to $\target$).  Conversely, minimizing exclusive KL divergence aims at high precision---$\proposal$ is adjusted to assign low probabilities to poor reconstructions, so that they will not be proposed.
We seek to minimize the linearly combined divergence
\begin{equation}\label{eqn:combined_kl}
  \text{Div}  = \beta\,\text{KL}(p \| q ) + ( 1 - \beta ) \text{KL}(q \| p )\ \text{with}\ \beta \in [0, 1]
\end{equation}
and training is early-stopped when the divergence stops decreasing on the held-out development set. 

But how do we measure these divergences between $\proposal$ and $\target$?  Of course, we actually want the {\em expected} divergence when the observed sequence $\obs \sim {}$ the true distribution. 
Thus, we sample $\obs$ by starting with a \emph{fully observed} sequence from our training examples and then sampling a partition $\obs,\unobs$ from the known missingness mechanism $\pmiss$.\footnote{To get more data for training $q$, we could sample more partitions of the fully observed sequence. In this paper, we only sample one partition.  Note that the fully observed sequence is a real observation from the true complete data distribution (not the model).}
The inclusive expectation in \eqref{eqn:in_kl} uses this $\obs$ and $\unobs$.  
For the exclusive expectation in \eqref{eqn:ex_kl}, we keep this $\obs$ but sample a new $\unobs$ from our proposal distribution $q(\cdot \mid \obs)$.  

Notice that minimizing exclusive divergence here is essentially the REINFORCE algorithm \cite{williams92reinforce}, which is known to have large variance. 
In practice, when tuning our hyperparameters (\cref{sec:training}), $\beta=1$ in \eqref{eqn:combined_kl} gave the best results.  That is---perhaps unsurprisingly---our experiments effectively avoided REINFORCE altogether and placed {\em all} the weight on the inclusive KL, which has no variance issue. More training details including a bias and variance discussion can be found in \cref{sec:training}.

\Cref{sec:mcem} discusses situations where training on incomplete data by EM is possible.

\section{A Loss Function and Decoding Method}\label{sec:decode}
It is often useful to find a \emph{single} hypothesis $\estimate$ that minimizes the {\em Bayes risk}, i.e., the expected loss with respect to the {\em unknown} ground truth $\truth$.  This procedure is called \defn{minimum Bayes risk (MBR) decoding} and can be approximated with our ensemble of weighted particles:
\begin{subequations}
\begin{align}\label{eqn:mbr}
  \estimate
  &=  {\textstyle { \argmin }_{\unobs \in \seqspace} \sum_{\truth \in \seqspace} p(\truth \mid \obs) \loss(\unobs, \truth ) } \\
  &\approx  {\textstyle { \argmin }_{\unobs \in \seqspace} \sum_{m=1}^{M} w_m \loss(\unobs, \unobs_m ) } \label{eqn:decode}
\end{align}
\end{subequations}
where $\loss(\unobs, \truth)$ is the \defn{loss} of $\unobs$ with respect to $\truth$.
This procedure for combining the particles into a single prediction is sometimes called \defn{consensus decoding}.
We now propose a specific loss function $\loss$ and an approximate decoder.

\subsection{Optimal Transport Distance}\label{sec:otd}
The loss of $\unobs$ is defined as the minimum cost of editing $\unobs$ into the ground truth $\truth$. To accomplish this edit, we must identify the best \defn{alignment}---a one-to-one partial matching $\ali$---of the events in the two sequences.  
We require any two aligned events to have the same type $k$.  
We define $\ali$ as a collection of alignment edges $(t, t^*)$ where $t$ and $t^*$ are the times of the aligned events in $\unobs$ and $\truth$ respectively. 
An alignment edge between a predicted event at time $t$ (in $\unobs$) and a true event at time $t^*$ (in $\truth$) incurs a cost of $|t-t^*|$ to move the former to the correct time.  Each unaligned event in $\unobs$ incurs a deletion cost of 
$C_{\text{delete}}$, 
and each unaligned event in $\truth$ incurs an insertion cost of 
$C_{\text{insert}}$. Now 
\begin{align}\label{eqn:editdist}
\loss(\unobs, \truth) = \min_{\ali \in \mathcal{A}(\unobs, \truth)} D(\unobs, \truth, \ali)
\end{align}
where $\mathcal{A}(\unobs, \truth)$ is the set of all possible alignments between $\unobs$ and $\truth$, and $D(\unobs, \truth,\ali)$ is the total cost given the alignment $\ali$.
Notice that if $|\unobs|\neq|\truth|$, \emph{any} alignment leaves some events unaligned; also, rather than align two faraway events, it is cheaper to leave them unaligned if $C_{\text{delete}}+C_{\text{insert}} < |t-t^*|$.
\cref{alg:dp} in \cref{sec:dpdetails} uses
dynamic programming to compute the loss \eqref{eqn:editdist} and its corresponding alignment $\ali$, 
similar to edit distance \citep{levenshtein-1965-binary} or dynamic time warping \citep{sakoe-71-dtw,listgarten-05-cpm}. 
In practice we symmetrize the loss by specifying equal costs $C_{\text{insert}} = C_{\text{delete}} = C$. 

\subsection{Consensus Decoding}\label{sec:mbr}
Since aligned events must have the same type, consensus decoding \eqref{eqn:decode} decomposes into {\em separately} choosing a set $\estimate\supk$ of type-$k$ events for \emph{each} $k = 1, 2, \ldots, K$, based on the particles' sets $\unobs_m\supk$ of type-$k$ events.  Thus, we simplify the presentation by omitting $\supk$ throughout this section. 
The loss function $L$ defined in \cref{sec:otd} warrants:
\begin{theorem}\label{thm:comb}
  Given $\{\unobs_m\}_{m=1}^{M}$, if we define $\zunion=\bigsqcup_{m=1}^{M} \unobs_m$, then $\exists \estimate \subseteq \zunion$ such that
  \begin{align*}
    {\textstyle \sum_{m=1}^{M} w_m L( \estimate, \unobs_m ) } 
    = 
    {\textstyle \min_{\unobs \in \seqspace} \sum_{m=1}^{M} w_m L(\unobs, \unobs_m) }
  \end{align*}
  That is to say, there exists one subsequence of $\zunion$ that achieves the minimum Bayes risk.
\end{theorem}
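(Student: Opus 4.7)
The strategy is to show that any candidate $\unobs \in \seqspace$ can be transformed into some $\unobs' \subseteq \zunion$ with Bayes risk no larger. Since $\zunion$ is a finite multiset (and hence has only finitely many sub-multisets), the right-hand minimum is then attained by some such $\unobs'$, and we can take $\estimate$ to be a minimizer over $\zunion$.

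I would begin by fixing, for each $m$, an optimal alignment $\ali_m^* \in \mathcal{A}(\unobs, \unobs_m)$ so that $L(\unobs, \unobs_m) = D(\unobs, \unobs_m, \ali_m^*)$. Under these frozen alignments, the weighted cost $\sum_m w_m D(\unobs, \unobs_m, \ali_m^*)$ decomposes additively over the events $t \in \unobs$; each $t$ contributes
\[
C_t(t) \;=\; \sum_{m \in S_t} w_m\,|t - t_m^*| \;+\; C \sum_{m \notin S_t} w_m,
\]
where $S_t$ indexes the particles in whose alignment $\ali_m^*$ the event $t$ is matched, and $t_m^*$ denotes $t$'s partner in $\unobs_m$. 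All remaining terms in the weighted cost (insertion penalties from unaligned events of $\unobs_m$, plus the contributions of other events $t' \in \unobs$) are independent of $t$'s placement, so each event can be repositioned in isolation.

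Next I would minimize each $C_t$ separately. If $S_t = \emptyset$, then $t$ is unaligned in every particle and contributes $C$; deleting $t$ strictly reduces its contribution to $0$. If $S_t \neq \emptyset$, then $C_t(\cdot)$ is a piecewise-linear convex function whose breakpoints are the $t_m^*$ for $m \in S_t$, and so it attains its minimum at one such breakpoint (a weighted median of the $t_m^*$ with weights $w_m$), which already lies in $\zunion$. Applying all these moves and deletions simultaneously yields a candidate $\unobs' \subseteq \zunion$ with $\sum_m w_m D(\unobs', \unobs_m, \ali_m^*) \leq \sum_m w_m D(\unobs, \unobs_m, \ali_m^*)$, where $\ali_m^*$ is reinterpreted on $\unobs'$ by dropping any pairing whose $\unobs$-side token was deleted. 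Since $L(\unobs', \unobs_m) \leq D(\unobs', \unobs_m, \ali_m^*)$ (as $L$ is a minimum over alignments), $\unobs'$ has Bayes risk no larger than $\unobs$, completing the construction.

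The main step to verify carefully is that the pre-chosen alignment $\ali_m^*$ remains a structurally valid alignment of $\unobs'$ with $\unobs_m$ after the repositioning and deletion. This holds because an alignment is merely an abstract matching of event tokens, not a function of their times, so moving a token in $\unobs$ does not invalidate it; and any resulting suboptimality of $\ali_m^*$ for $\unobs'$ is harmless, because $L$ is defined as a minimum over alignments and can therefore only be smaller than the fixed-alignment cost we bounded.
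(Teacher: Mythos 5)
Your proof is correct, and it reaches the conclusion by a somewhat different route than the paper. The paper's proof assumes an optimal decode $\estimate$ exists, freezes the optimal alignments, and argues locally: for an event $t_i \notin \zunion$, the fixed-alignment risk is a \emph{linear} function of $t_i$ on the interval between its two neighbors in $\zunion$, and since this linear function is minimized at an interior point it must be constant there, so $t_i$ can be slid onto a neighboring element of $\zunion$ without changing the Bayes risk (no deletion step is needed, and the paper invokes the claim that $\zunion$ contains the endpoints $0$ and $T$ so that neighbors exist). You instead argue globally from an \emph{arbitrary} candidate: after freezing optimal alignments, the fixed-alignment risk decomposes per event of $\unobs$, each aligned event's contribution is convex piecewise-linear with breakpoints at its aligned partners (so a weighted-median breakpoint, which lies in $\zunion$, minimizes it), and everywhere-unaligned events are simply deleted; the bound $L(\unobs',\unobs_m)\le D(\unobs',\unobs_m,\ali_m^*)$ then closes the argument. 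Your version buys two things the paper's argument glosses over: it establishes that the minimum over the infinite space $\seqspace$ is actually attained (via finiteness of the sub-multisets of $\zunion$), and it never needs boundary points of $\zunion$ or the ``interior minimum of a linear function'' step. One small point worth making explicit: when all moves are applied simultaneously, two decode events could land at the same time; this is harmless because each moved event lands on the time of one of \emph{its own} aligned partners, and since each $\ali_m^*$ is one-to-one and $\zunion$ is a disjoint union, distinct decode events target distinct tokens of $\zunion$, so $\unobs'$ is genuinely a sub-multiset of $\zunion$.
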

The proof is given in \cref{sec:consensus_details}: it shows that if $\estimate$ minimizes the Bayes risk but is \emph{not} a subsequence of $\zunion$, then we can modify it to either improve its Bayes risk (a contradiction) or keep the same Bayes risk while making it a subsequence of $\zunion$ as desired.

Now we have reduced this decoding problem to a combinatorial optimization problem:
\begin{align}
    \estimate = \argmin_{\unobs \subseteq \zunion} 
    {\textstyle \sum_{m=1}^{M} w_m L( \unobs, \unobs_m) }
\end{align}
which is probably NP-hard, by analogy with the Steiner string problem \cite{gusfield-97-algorithms}.

Our heuristic (\cref{alg:mbr} of \cref{sec:mbr_details}) seeks to iteratively improve $\estimate$ by (1)~using \cref{alg:dp} to find the optimal alignment $\ali_m$ of $\estimate$ with each $\unobs_m$, and then (2)~repeating the following sequence of 3 phases until $\estimate$ does not change.  Each phase tries to update $\estimate$ to decrease the weighted distance $\sum_{m=1}^{M} w_m D( \estimate, \unobs_m, \ali_m)$ which by \cref{thm:comb} is an upper bound of the Bayes risk $\sum_{m=1}^{M} w_m L(\estimate, \unobs_m)$:\footnote{Note these phases compute $D(\estimate, \unobs_m, \ali_m)$ but not $L(\estimate, \unobs_m)$, so they need not call the dynamic programming algorithm.}
\begin{description}[noitemsep,align=left]
\item [Move Phase]
For each event in $\estimate$, move its time to the weighted median (using weights $w_m$) of the times of all $\leq M$ events that $\ali_m$ aligns it to (if any), while keeping the alignment edges.  This selects the new time that minimizes $\sum_{m=1}^{M} w_m D(\estimate, \unobs_m, \ali_m)$.
\item [Delete Phase] For each event in $\estimate$, delete it (together with any related edges in each $\ali_m$) if this decreases $\sum_{m=1}^{M} w_m D(\estimate, \unobs_m, \ali_m)$.
\item [Insert Phase] If we inserted $t$ into $\estimate$, we would also modify each $\ali_m$ to align $t$ to the closest unaligned event in $\unobs_m$ (if any) provided that this decreased $D(\estimate, \unobs_m, \ali_m)$.  Let $\Delta(t)$ be the resulting reduction in $\sum_{m=1}^{M} w_m D(\estimate, \unobs_m, \ali_m)$.  Let $t^* = \argmax_{t \in \zunion, t \notin \estimate} \Delta(t)$.  While $\Delta(t^*) > 0$, insert $t^*$.
\end{description}
The move or delete phase can consider events in any order, or in parallel; this does not change the result.

\section{Experiments}\label{sec:exp}
We compare our particle smoothing method with the strong particle filtering baseline---our neural version of \citet{linderman-17-bayesian}'s Hawkes process particle filter---on multiple real-world and synthetic datasets. See \cref{sec:data_details} for training details (e.g., hyperparameter selection). PyTorch code can be found at {\small \url{https://github.com/HMEIatJHU/neural-hawkes-particle-smoothing}}.

\subsection{Missing-Data Mechanisms}\label{sec:censorship}\label{sec:miss-mech}
We experiment with missingness mechanisms of the form
\begin{equation}\label{eqn:miss-prob}
\pmiss(\unobs \mid \comp) = \prod_{\kt{k_i}{t_i}\in \unobs} \pmi_{k_i}\prod_{\kt{k_i}{t_i}\in \obs} (1-\pmi_{k_i})
\end{equation}
meaning that each event in the complete stream $\comp$ is independently censored with probability $\pmi_k$ that only depends on its event type $k$.\footnote{%
	\Cref{sec:mcem} discusses how $\vec{\pmi}$ could be imputed when complete and incomplete data are both available.} 
We consider both deterministic and stochastic missingness mechanisms. 
For the deterministic experiments, we set $\pmi_k$ for each $k$ to be either $0$ or $1$, so that some event types are always observed while others are always missing. 
Then $\pmiss(\unobs \mid \comp)= 1$ if $\unobs$ consists of precisely the events in $\comp$ that ought to go missing, and $0$ otherwise.  
For our stochastic experiments, we simply set $\pmi_k = \pmi$ regardless of the event type  $k$ and experiment with $\pmi = 0.1, 0.3, 0.5, 0.7, 0.9$.
Then \cref{eqn:miss-prob} can be written as
$\pmiss(\unobs \mid \comp)= (1-\pmi)^{|\obs|} \pmi^{|\unobs|}$, whose value decreases exponentially in the number of missing events $ |\unobs|$.  As this depends on $\unobs$, the stochastic setting is definitely MNAR (not MCAR as one might have imagined).

\subsection{Datasets}\label{sec:data}\label{sec:synthetic}\label{sec:real}\label{sec:taxi}\label{sec:elevator}\label{sec:mimic}
The datasets that we use in this paper range from short sequences with mean length 15 to long ones with mean length $>$ 300.
For each of the datasets, we possess fully observed data that we use to train the model and the proposal distribution.\footnote{The focus of this paper is on inference (imputation) under a given model, so training the model is simply a preparatory step.  However, inference could be used to help train on incomplete data via the EM algorithm, provided that the missingness mechanism is known; see \cref{sec:mcem} for discussion.}
  For each dev and test example, we censored out some events from the fully observed sequence, so we present the $\obs$ part as input to the proposal distribution but we also know the $\unobs$ part for evaluation purposes.
Fully replicable details of the dataset preparation can be found in \cref{sec:exp_details}, including how event types are defined and 
which event types are missing 
in the deterministic settings.

\paragraph{Synthetic Datasets}
We first checked that we could successfully impute unobserved events that are generated from {\em known} distributions. That is, when the generating distribution actually is a neural Hawkes process, could our method outperform particle filtering in practice?
Is the performance consistent over multiple datasets drawn from different processes?
To investigate this, we synthesized 10 datasets, each of which was drawn from a different neural Hawkes process with randomly sampled parameters.

\paragraph{Elevator System Dataset \textnormal{\citep{crites-96-elevator}}.}
A multi-floor building is often equipped with multiple elevator cars
that follow {\em cooperative} strategies to transport passengers
between floors
\citep{lewis-91-elevator,Bao-94-elevator,crites-96-elevator}. 
In this dataset, the events are which elevator car stops at which floor. 
The deterministic case of this domain 
is representative of many real-world cooperative (or competitive) scenarios---observing the activities of some players and imputing those of the others.

\paragraph{New York City Taxi Dataset \textnormal{\citep{whong-14-taxi}}.}
Each medallion taxi in New York City has a sequence of time-stamped pick-up and drop-off events, where different locations have different event types. 
\Cref{fig:method} shows how we impute the pick-up events given the drop-off events (the deterministic missingness case).

\subsection{Data Fitting Results}\label{sec:eval_fit}

First, as an internal check, 
we measure {\em how probable} each ground truth reference $\truth$ is under the proposal distribution constructed by each method, i.e., $\log q(\truth \mid \obs)$. As shown in \cref{fig:cloud-5}, the improvement from particle smoothing is remarkably robust across 12 datasets, improving \emph{nearly every} sequence in each dataset. The plots for the deterministic missingness mechanisms are so boringly similar that we only show them in \cref{sec:extra-exp} (\cref{fig:cloud}). 
\begin{figure}[t]
	\begin{center}
		\begin{subfigure}[b]{0.31\linewidth}
			\includegraphics[width=\linewidth]{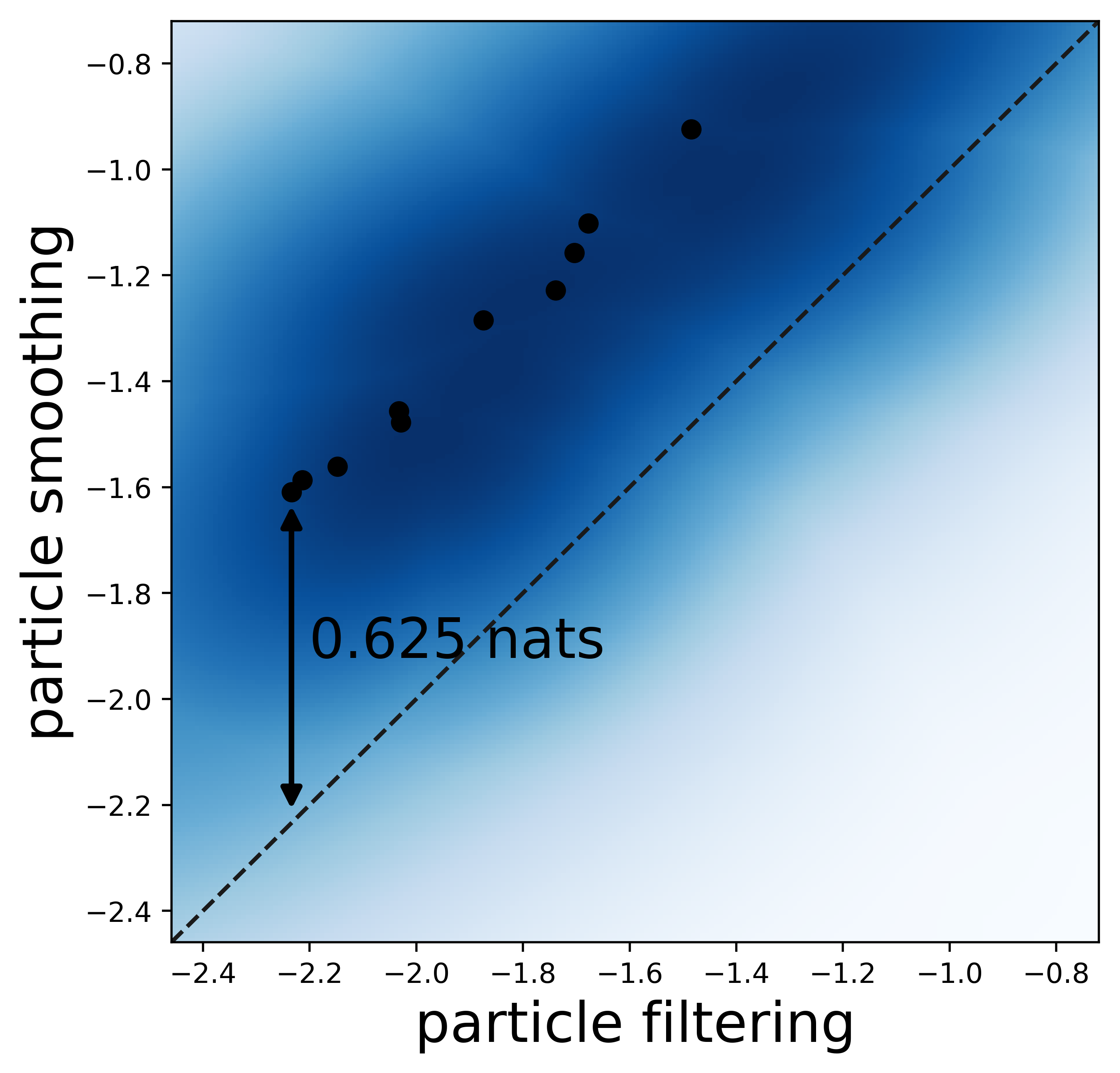}
			\caption{Synthetic Data}\label{fig:nhpcloud-5}
		\end{subfigure}
		~ 
		\begin{subfigure}[b]{0.31\linewidth}
			\includegraphics[width=\linewidth]{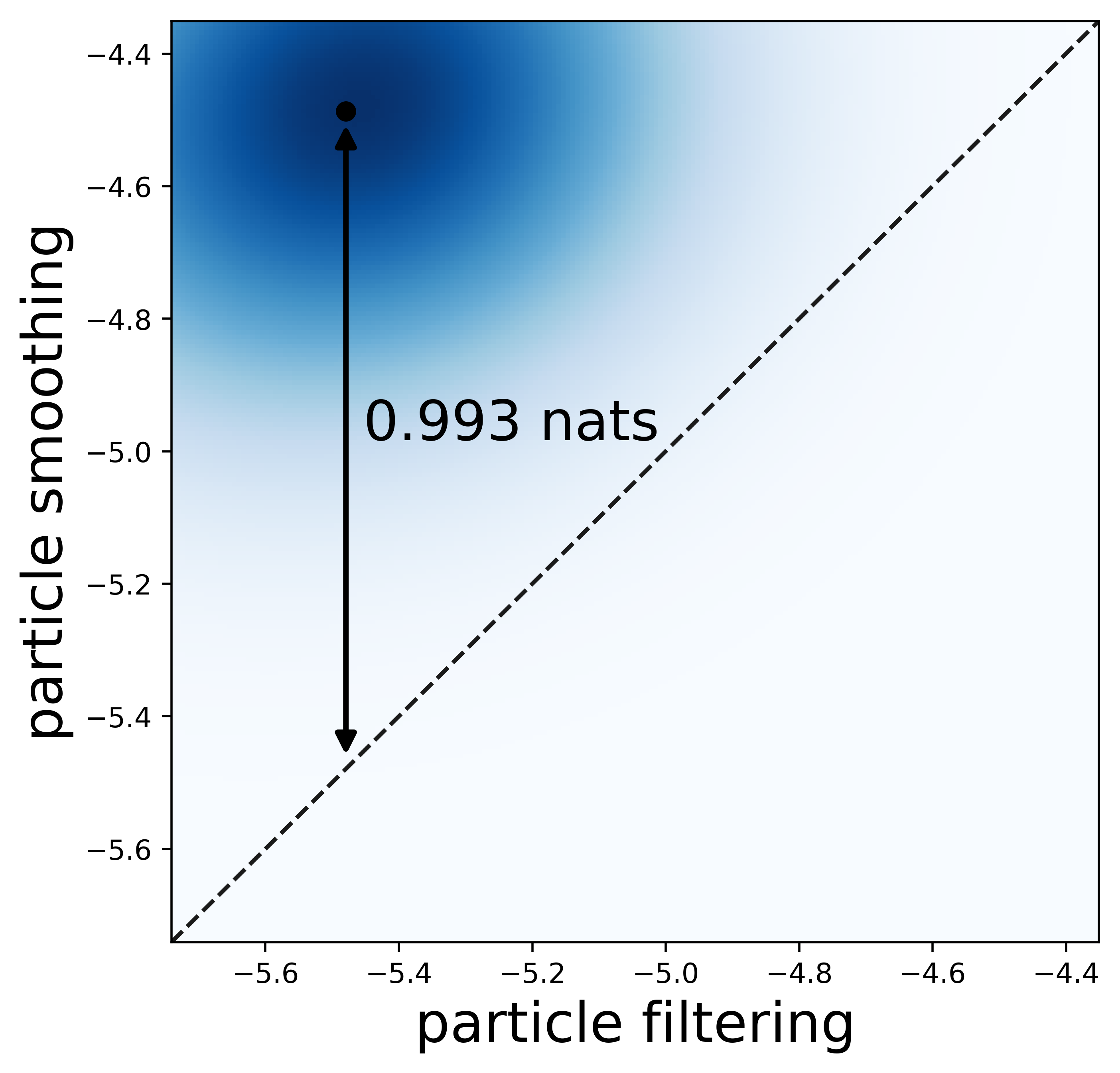}
			\caption{Elevator System}\label{fig:elevatorcloud-5}
		\end{subfigure}
		~ 
		\begin{subfigure}[b]{0.31\linewidth}
			\includegraphics[width=\linewidth]{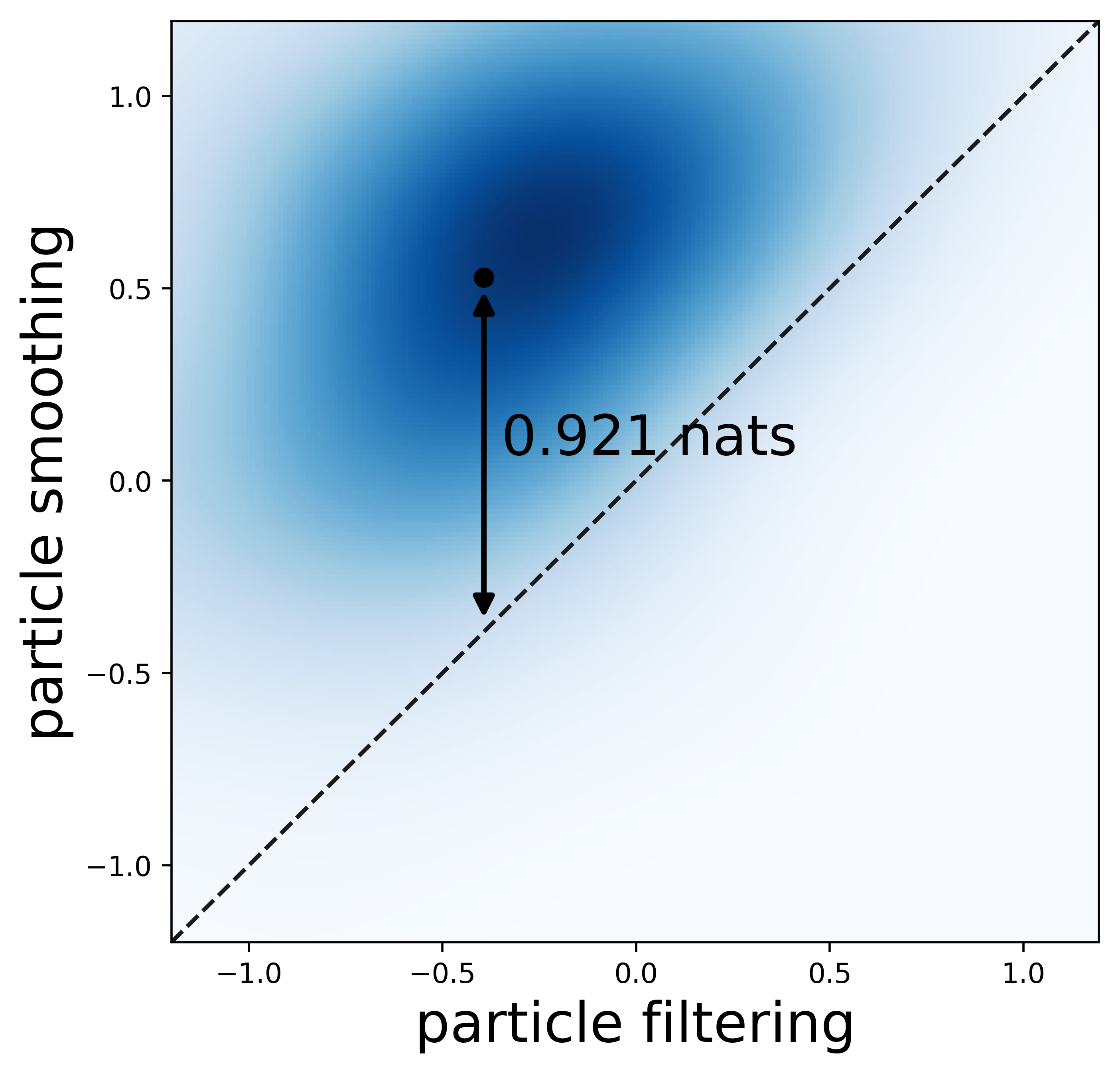}
			\caption{NYC Taxi}\label{fig:taxicloud-5}
		\end{subfigure}
		\caption{
		Scatterplots of neural Hawkes particle smoothing (y-axis) vs.\@ particle filtering (x-axis) with a stochastic missingness mechanism ($\pmi=0.5$).
		Each point represents a single \heldout sequence, and compares the values of $\log q(\truth \mid \obs) \;/\; |\truth|$.  Larger values mean that the proposal distribution is better at proposing the ground truth $\truth$.  Each dataset's scatterplot is converted to a cloud using kernel density estimation, with the centroid denoted by a black dot. A double-arrowed line indicates the improvement of particle smoothing over filtering. For the synthetic datasets, we draw ten clouds on the same figure and show the line for the dataset where smoothing improves the most. As we can see, the density  is always well concentrated above $y = x$.  That is, this is not merely an average improvement: \emph{nearly every} ground truth $\truth$ gets higher proposal probability! Particle smoothing performs well even on datasets where particle filtering performs badly.		}\label{fig:cloud-5}
	\end{center}
\end{figure}

\subsection{Decoding Results}\label{sec:eval_decode}

For each $\obs$, we now make a prediction by sampling an ensemble of $M=50$ particles (\cref{sec:method})\footnote{Increasing $M$ would increase both effective sample size (ESS) and runtime.}
 and constructing their consensus sequence $\estimate$ (\cref{sec:mbr}).  We use multinomial resampling since otherwise the effective sample size is very low (only 1--2 on some datasets).\footnote{Any multinomial resampling step drives the ESS metric to $M$.  This cannot guarantee better samples in general, but resampling did improve our decoding performance on all datasets.} 
We evaluate $\estimate$ by its optimal transport distance (\cref{sec:otd}) to the ground truth $\truth$.
Note that $\forall \ali$, we can decompose $D(\estimate, \truth, \ali)$ as 
\begin{align}\label{eqn:decomp}
C \cdot ( \underbrace{|\estimate| + |\truth| - 2 |\ali|}_{\text{total insertions+deletions}} ) + \underbrace{\textstyle{ \sum_{(t,t^*) \in \ali} |t-t^*| }}_{\text{total distance moved}}
\end{align}
Letting $\ali$ be the alignment that minimizes $D(\estimate, \truth, \ali)$,
the former term measures how well $\estimate$ predicts {\em which} events happened, and the latter measures how well $\estimate$ predicts {\em when} those events happened. 
Different choices of $C$ yield different $\estimate$ with different trade-offs between these two terms.
Intuitively, when $C \approx 0$, the decoder is free to insert and delete event tokens;
as $C$ increases, $\estimate$ will tend to insert/delete fewer event tokens and
move more of them.

\Cref{fig:eval-5} plots the performance of particle smoothing (\reddot) vs.\@ particle filtering (\bluedot) for the stochastic missingness mechanisms, showing the two terms above as the $x$ and $y$ coordinates.  The very similar plots for the deterministic missingness mechanisms are in \cref{sec:extra-exp} (\cref{fig:eval}).\footnote{We show the 2 real datasets only.  The figures for the 10 synthetic datasets are boringly similar to these.}

\begin{figure}[t]
	\begin{center}
		\begin{subfigure}[b]{0.48\linewidth}
			\includegraphics[width=\linewidth]{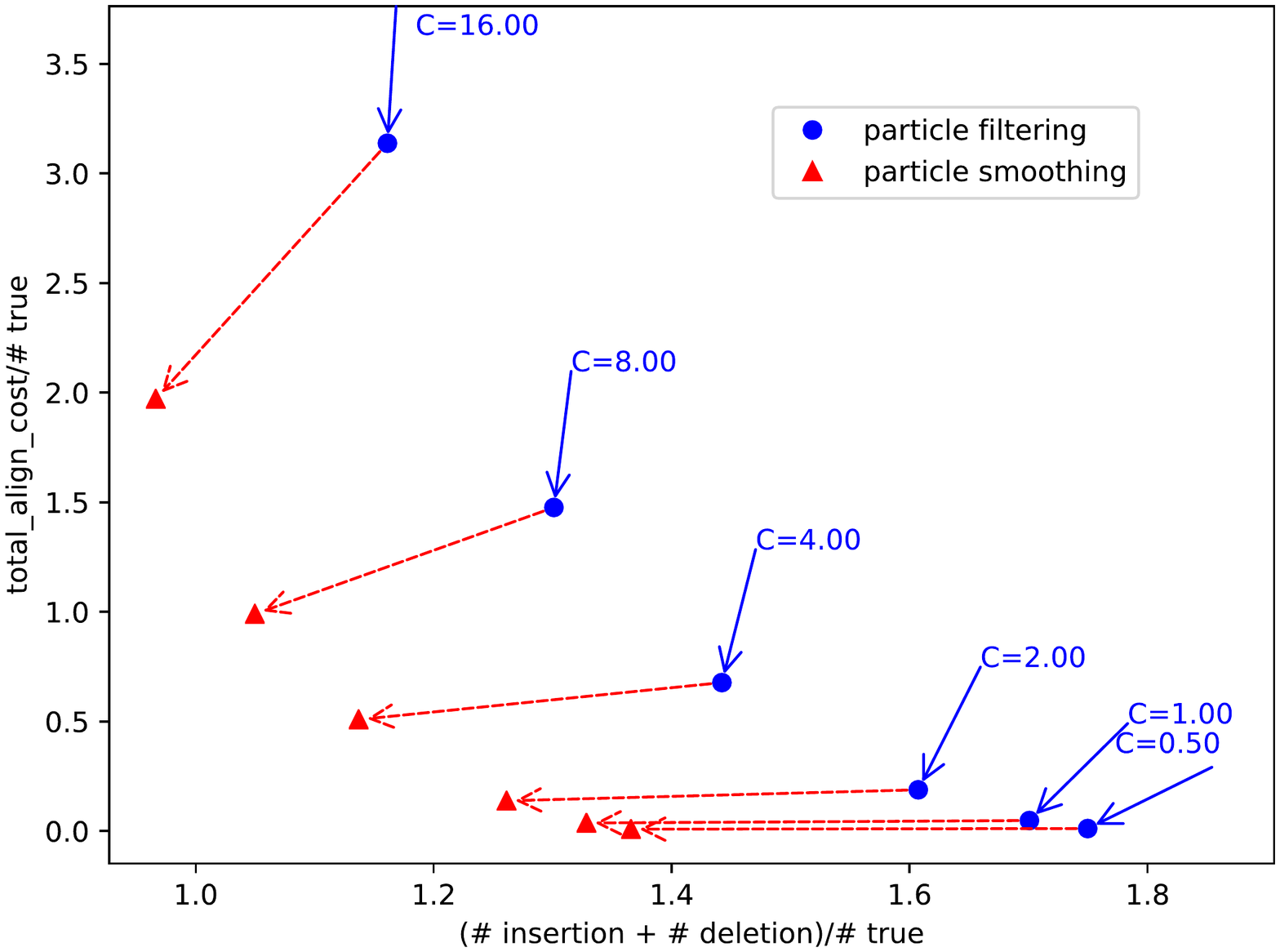}
			\caption{Elevator System}\label{fig:elevator-5}
		\end{subfigure}
		~ 
		\begin{subfigure}[b]{0.48\linewidth}
			\includegraphics[width=\linewidth]{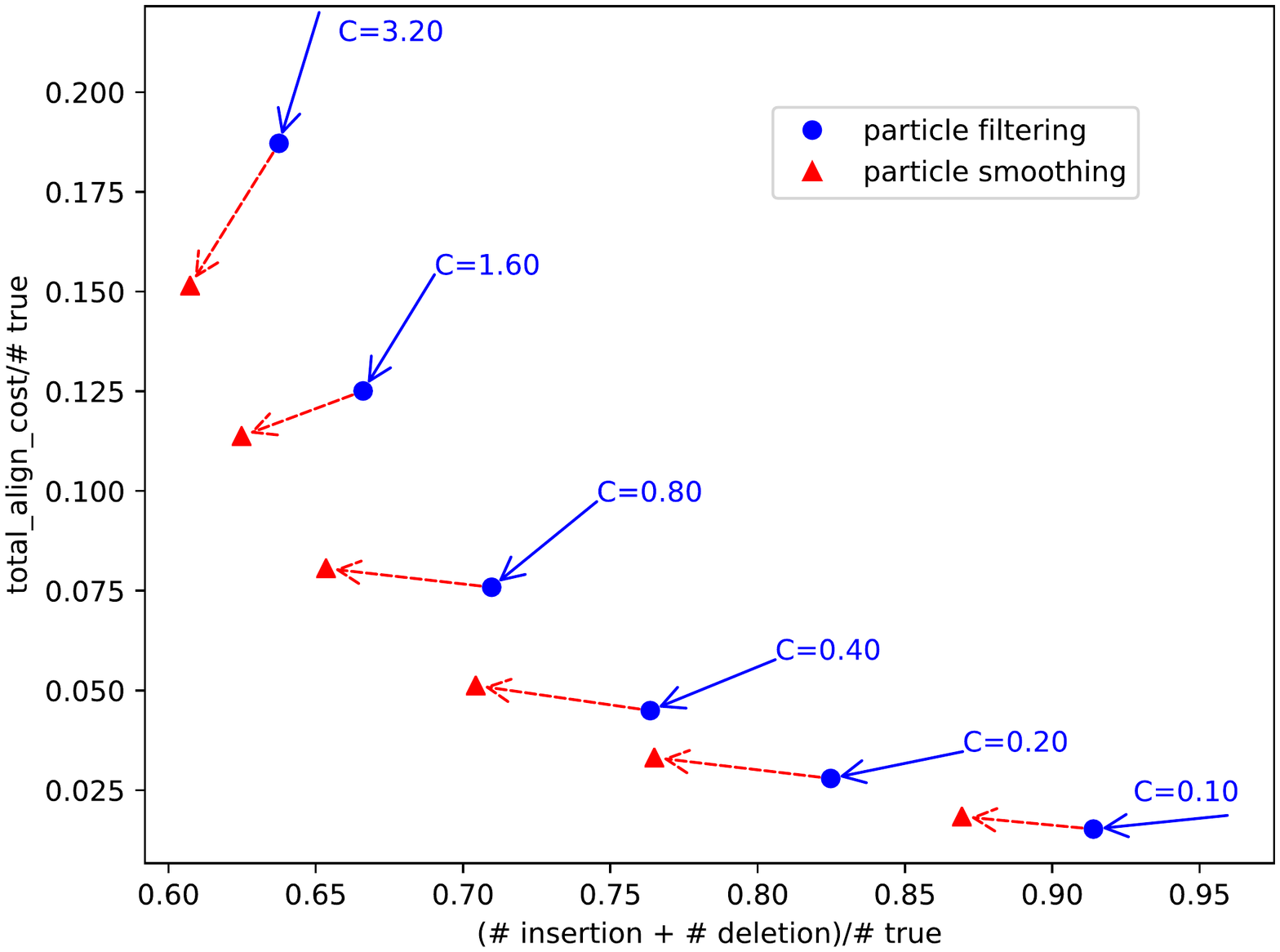}
			\caption{NYC Taxi}\label{fig:taxi-5}
		\end{subfigure}
	\caption{Optimal transport distance of particle smoothing (\reddot) vs.\@ particle filtering (\bluedot) on \heldout data with a stochastic missingness mechanism ($\pmi=0.5$). 
		In each figure, the $x$-axis is the total number of deletions and insertions in the \heldout dataset, $\sum_{n=1}^N ( |\estimate_n|+|\truth_n|-2|\ali_n| )$, and the $y$-axis is the total movement cost, $\sum_{n=1}^N \sum_{(t,t^*)\in \ali_n} |t-t^*|$. Both axes are normalized by the true total number of missing events $\sum_{n=1}^N |\truth_n|$, so the $x$-axis shows a fraction and the $y$-axis shows an average time difference.
		On each dataset, we show one \bluedot per $C$.
		According to \cref{eqn:decomp}, $(C, 1)$, denoted by \blueline, turns out to be the \emph{gradient} of $\sum_{n=1}^N D(\estimate_n, \truth_n, \ali_n)$ at this \bluedot.
		The \redline
		shows the actual improvement obtained by switching to particle smoothing (which is, indeed, an improvement because it has positive dot product with the gradient \blueline). 
		The Pareto frontier (convex hull) of the \reddot symbols dominates the Pareto frontier of the \bluedot symbols---lying everywhere to its left---which means that our particle smoothing method outperforms the filtering baseline.
	}\label{fig:eval-5}
	\end{center}
\end{figure}

\subsection{Sensitivity to Missingness Mechanism}\label{sec:sensitivity}

For the stochastic missingness mechanisms, we also did experiments with different values of missing rate $\pmi=0.1, 0.3, 0.7, 0.9$. Our particle smoothing method consistently outperforms the filtering baseline in all the experiments
(\cref{fig:diff_rhos} in \cref{sec:sensitivity_details}),  similar to \cref{fig:eval-5}. 

\subsection{Runtime}\label{sec:runtime}

The theoretical runtime complexity is $O(MI)$ where $M$ is the number of particles and $I$ is the number of observed events.  In practice, we generate the particles in parallel, leading to acceptable speeds of 300-400 milliseconds per event for the final method. More details about the wall-clock runtime can be found in \cref{sec:wallclock}.

\section{Discussion and Related Work}\label{sec:related}

To our knowledge, this is the first time a bidirectional recurrent neural network has been extended to predict events in continuous time.  Bidirectional architectures have proven effective at predicting linguistic words and their properties given their left \emph{and right} contexts~\citep{graves-13-hybrid,bahdanau-15,peters-18-elmo,devlin-18-bert}: in particular, \citet{lin-eisner-2018-naacl} recently applied them to particle smoothing for discrete-time sequence tagging. 

Previous work that infers unobserved events in continuous time exploits special properties of simpler models, including Markov jump processes \citep{rao-12-mcmc,rao-13-fast}, continuous-time Bayesian networks \citep{fan-10-is} and Hawkes processes \citep{shelton-18-missing}.
Such properties no longer hold for our more expressive neural model, necessitating our approximate inference
method.  

Metropolis-Hastings would be an alternative to our particle smoothing method. The transition kernel could propose a single-event change to $\unobs$ (insert, delete, or move). Unfortunately, this would be quite slow for a neural model like ours, because any proposed change early in the sequence would affect the LSTM state and hence the probability of all subsequent events. Thus, a single move takes $O(|\comp|)$ time to evaluate. Furthermore, the Markov chain may mix slowly because a move that changes only one event may often lead to an incoherent sequence that will be rejected. The point of our particle smoothing is essentially to avoid such rejection by proposing a {\em coherent sequence of events}, left to right but considering future $\obs$ events, from an approximation $\proposal$ to the true posterior.
(One might build a better Metropolis-Hastings algorithm by designing a transition kernel that makes use of our current proposal distribution, e.g., via particle Gibbs \cite{chopin-singh-2015}.)

\vspace{-1pt}
We also introduced an optimal transport distance between event sequences, which is a valid metric. It essentially regards each event sequence as a 0-1 function over times, and applies a variant of Wasserstein distance \citep{villani-08-optimal} or Earth Mover's distance \citep{kantorovitch-58-translocation,levina-01-earth}. 
Such variants are under active investigation \citep{benamou-03-numerical,chizat-15-unbalanced,frogner-15-learning,chizat-18-interpolating}.
Our version allows event insertion and deletion during alignment, where these operations can only apply to an entire event---we cannot align half of an event and delete the other half. Due to these constraints, dynamic programming rather than a linear programming relaxation is needed to find the optimal transport.
\cutforspace{In this sense, our distance is similar to edit distance  or dynamic time warping}
\citet{xiao-17-wgan} also proposed an optimal transport distance between event sequences that allows event insertion and deletion; however, their insertion and deletion costs turn out to depend on the timing of the events in (we feel) a peculiar way.

\vspace{-1pt}
We also gave a method to find a single ``consensus'' reconstruction with small average distance to our particles.  This problem is related to Steiner string \citep{gusfield-97-algorithms}, which is usually reduced to multiple sequence alignment (MSA) \citep{mount-04-msa} and heuristically solved by progressive alignment construction using a guide tree \citep{feng-87-progressive,larkin-07-clustal,notredame-00-coffee} and iterative realignment of the initial sequences with addition of new sequences to the growing MSA \citep{hirosawa-95-comprehensive,gotoh-96-significant}.  These methods might also be tried in our setting.  For us, however, the $i$\textsuperscript{th} event of type $k$ is not simply a character in a finite alphabet such as $\{\texttt{A,C,G,T}\}$ but a time that falls in the infinite set $[0,T)$.  The substitution cost between two events of type $k$ is then their time difference.

On multiple synthetic and real-world datasets, our method turns out to be effective at inferring the ground truth sequence of unobserved events. The improvement of particle smoothing upon particle filtering is substantial and consistent, showing the benefit of training a proposal distribution.

\section*{Acknowledgments}

We are grateful to Bloomberg L.P.\@ for enabling this work through a Ph.D.\@ Fellowship Award to the first author.  The work was also supported by the National Science Foundation under Grant No.\@ 1718846.
We thank the anonymous reviewers, Hongteng Xu at Duke University, and our lab group at Johns Hopkins University's Center for Language and Speech Processing for helpful comments. 
We also thank NVIDIA Corporation for kindly donating two Titan X Pascal GPUs and the state of Maryland for the Maryland Advanced Research Computing Center.
The first version of this work appeared on OpenReview in September 2018.

\bibliographystyle{icml2019}
\vspace{0pt}
\bibliography{particle-smoothing}
\clearpage
\appendix
\appendixpage

\section{\citet{little-rubin-1987}'s Missing-Data  Taxonomy}\label{sec:miss_details}

\citet{little-rubin-1987}'s classical taxonomy of MNAR, MAR, and MCAR mechanisms\footnote{\defn{Missing not at random (MNAR)} makes no assumptions about the missingness mechanism.  \defn{Missing at random (MAR)} is a modeling assumption: determining from data whether the MAR property holds is ``almost impossible'' \citep{mohan-18-graphical}.  \defn{Missing completely at random (MCAR)} is a simple special case of MAR.} was meant for graphical models. A graphical model has a \emph{fixed set} of random variables.  The missingness mechanisms envisioned by \citet{little-rubin-1987} simply decide which of those variables are suppressed in a joint observation.  For them, an observed sample always reveals \emph{which} variables were observed, and thus it reveals \emph{how many} variables are missing.

In contrast, our incomplete event stream is most simply described as a single random variable $\Comp$ that is \emph{partly} missing.  If we tried to describe it using $|\comp|$ random variables with values like  $\kt{k}{t}$, then the observed sample $\obs$ would \emph{not} reveal the number of missing variables $|\unobs|$ nor the total number of variables $|\comp|$.  There would not be a fixed set of random variables.

To formulate our model in \citeauthor{little-rubin-1987}'s terms, we would need a fixed set of uncountably many random variables $K_t$ where $t$ ranges over the set of times.  $K_t = k$ if there is an event of type $k$ at time $t$, and otherwise $K_t = 0$.  For some finite set of times $t$, we observe a specific value $K_t > 0$, corresponding to some observed event.  For all other times $t$, the value of $K_t$ is missing, meaning that we do not know whether or not there is any event at time $t$, let alone the type of such an event.  
A crucial point is that 0 values are never observed in our setting, because we are never told that an event did {\em not} happen at time $t$.  In contrast, a value $> 0$ (corresponding to an event) may be either observed or unobserved.  Thus, the probability that $K_t$ is missing depends on whether $K_t > 0$, meaning that this setting is MNAR.

We preferred to present our model (\cref{sec:partial}) in terms of the finite sequences that are generated or read by our LSTMs.  This simplified the notation later in the paper.  But it does not cure the MNAR property: see \cref{sec:miss-mech}.

Again, our presentation does not allow a \citet{little-rubin-1987} style formulation in terms of a finite fixed set of random variables, some of which have missing values.  That formulation would work if we knew the total number of events $I$, and were simply missing the value $k_i$ and/or $t_i$ for some indices $i$.  But in our setting, the number of events is itself missing: after each observed event $i$, we are missing $J_i$ events where $J_i$ is itself unknown.  In other words, we need to impute even the number of variables in the complete sequence $\comp$, not just their values.

Our definition of MAR in \cref{sec:partial} is the correct generalization of \citet{little-rubin-1987}'s definition: namely, it is the case in which the second factor of \cref{eqn:target} can be ignored. The ability to ignore that factor is precisely why anyone cares about the MAR case.  This was mentioned at \cref{eqn:target},
and is discussed in conjunction with the EM algorithm in \cref{sec:mcem}.

Since missing-event settings tend to violate this desirable MAR property, all our experiments address MNAR problems.  As \citet{little-rubin-1987} explained, the more general case of MNAR data cannot be treated without additional knowledge.  The difficulty is that identifying $\model$ {\em jointly} with $\pmiss$ becomes impossible. If you observe few 50-year-olds on your survey, you cannot know (beyond your prior) whether that's because there are few 50-year-olds, or because 50-year-olds are very likely to omit their age.

Fortunately, we do have additional knowledge in our setting.
Joint identification of $\model$ and $\pmiss$ is unnecessary if either
(1)~one has separate knowledge of the missingness distribution $\pmiss$, or
(2)~one has separate knowledge of the complete-data distribution $\model$.
In fact, both (1) and (2) hold in the MNAR experiments of this paper (\crefrange{sec:miss-mech}{sec:data}).
But in general, if we know at least one of the distributions, then we can still infer the other
(\cref{sec:mcem}). 

\subsection{Obtaining Complete Data}

Readers might wonder why (2) above would hold in a missing-data situation.  In practice, where would we obtain a dataset of complete event streams (as in \cref{sec:data}) for supervised training of $\model(\comp)$?

In some event stream scenarios, a training dataset of complete event streams can be collected at extra expense.  This is the hope in the medical and user-interface scenarios in \cref{sec:intro}.  For our imputation method to work on partially observed streams $\comp$, their complete streams should be distributed like the ones in the training dataset.

Other scenarios could be described as having \defn{eventually complete} streams.  Complete information about each event stream \emph{eventually} arrives, at no extra expense, and that event stream can then be used for training.  For example, in the competitive game scenario in \cref{sec:intro}), perhaps including wars and political campaigns, each game's true complete history is revealed after the game is over and the need for secrecy has passed.  While a game is underway, however, some events are still missing, and imputing them is valuable.  Both (1) and (2) hold in these settings.

An interesting subclass of eventual completeness arises in monitoring settings such as epidemiology, journalism, and sensor networks.  These often have \defn{reporting delays}, so that one observes each event only some time after it happens.  Yet one must make decisions at time $t < T$ based on the events that have been observed so far.  This may involve imputing the past and predicting the future.  The missingness mechanism for these reporting delays says that more recent events (soon before the current time $t$) are more likely to be missing.   The probability that such an event would be missing depends on the specific distribution of delays, which can be learned with supervision once all the data have arrved.

We point out that in all these cases, the ``complete'' streams $\comp$ that are used to train $\model$ do not actually have to be \emph{causally} complete.  It may be that in the real world, there are additional latent events $\vec{w}$ that cause the events in $\comp$ or mediate their interactions.  \Citet[section 6.3]{mei-17-neuralhawkes} found that the neural Hawkes process was expressive enough in practice to ignore this causal structure and simply use $\comp$ streams to directly train a neural Hawkes process model $\model(\comp)$ of the \emph{marginal} distribution of $\comp$, without explicitly considering $\vec{w}$ in the model or attempting to sum over $\vec{w}$ values.  The assumption here is the usual assumption that $\comp$ will have the same distribution in training and test data, and thus $\vec{w}$ will be missing in both, with the same missingness mechanism in both.  By contrast, $\unobs$ is missing only in test data.  It is not possible to impute $\vec{w}$ because it was not modeled explicitly, nor observed even in training data.  However, it remains possible to impute $\unobs$ in test data based on its distribution in training data.

\section{Complete Data Model Details}\label{sec:model_details}

Our complete data model, such as a neural Hawkes process, gives the probability  $\model(\comp)$ that $\comp$ will be the complete set of events on a given interval $[0,T)$.  this probability can always be written in the factored form
\begin{align}\label{eqn:model}
\Big( \prod_{i=0}^{I} \prod_{j=0}^{J_i} p\!\left( \kt{k_{i,j}}{t_{i,j}} \mid \history(t_{i,j}) \right) \Big) \cdot p(\kt{}{\!\geq\!T} \mid \history(T) )
\end{align}
where $p(\kt{k}{t} \mid \history(t))$ denotes the probability density that the first event following $\history(t)$ (which is the set of events occurring \emph{strictly} before $t$) will be $\kt{k}{t}$, and $p(\kt{}{} \geq t' \mid \history(t))$ denotes the probability that this event will fall at some time $\geq t'$.  

Thus, the final factor of \eqref{eqn:model} is the probability that there are no more events on $[0,T)$ following the last event of $\comp$.  The initial factor $p(\kt{k_0}{t_0} \mid \history{t_0})$ is defined to be 1, since the boundary event $\kt{k_0}{t_0}$ is given (see \cref{sec:partial}).

\subsection{Neural Hawkes Process Details}\label{sec:nhp_details}

In this section we elaborate slightly on \cref{sec:nhp}.  Again, $\lambda_k(t \mid \history(t))$ is defined by \cref{eqn:hawkes_mod_a} in terms of the hidden state of a \emph{continuous-time} left-to-right LSTM. We spell out the continuous-time LSTM equations here; more details about them may be found in \citet{mei-17-neuralhawkes}.
\begin{align}
\state(t) = \vec{o}_{i} \odot (2\sigma(2\cell(t))-1) \text{ for } t \in (t_{i-1}, t_{i}]
\end{align}
where the interval $(t_{i-1},t_{i}]$ has consecutive observations $\kt{k_{i-1}}{t_{i-1}}$ and  $\kt{k_{i}}{t_{i}}$ as endpoints.
At $t_i$, the continuous-time LSTM reads $\kt{k_{i}}{t_{i}}$ and updates the current (decayed) hidden cells $\cell(t)$ to new initial values $\cell_{i+1}$, based on the current (decayed) hidden state $\state(t_{i})$, as follows:\footnote{The upright-font subscripts $\mathrm{i}$, $\mathrm{f}$, $\mathrm{z}$ and $\mathrm{o}$ are not variables, but constant labels that distinguish different $\vec{W}$, $\vec{U}$ and $\vec{d}$ tensors. The $\statforgetgate$ and $\statinputgate$ in~\cref{eqn:ct_cell_target_forward} are defined analogously to $\forgetgate$ and $\inputgate$ but with different weights.}
\minipage[t]{0.49\textwidth}
\begin{subequations} 
	\begin{align}
	\inputgate_{i+1}
	&\leftarrow \sigma \left( \vec{W}_{\mathrm{i}} \vec{k}_{i} + \vec{U}_{\mathrm{i}} \state(t_{i}) + \vec{d}_{\mathrm{i}}  \right) \\
	\forgetgate_{i+1}
	&\leftarrow \sigma \left( \vec{W}_{\mathrm{f}} \vec{k}_{i} + \vec{U}_{\mathrm{f}} \state(t_{i}) + \vec{d}_{\mathrm{f}}  \right) \\
	\zgate_{i+1}
	&\leftarrow 2 \sigma \left( \vec{W}_{\mathrm{z}} \vec{k}_{i} + \vec{U}_{\mathrm{z}} \state(t_{i}) + \vec{d}_{\mathrm{z}}  \right) - 1 \\
	\outputgate_{i+1}
	&\leftarrow \sigma \left( \vec{W}_{\mathrm{o}} \vec{k}_{i} + \vec{U}_{\mathrm{o}} \state(t_{i}) + \vec{d}_{\mathrm{o}}  \right)
	\end{align}
\end{subequations}
\endminipage
\hfill
\minipage[t]{0.49\textwidth}
\begin{subequations}
	\begin{align}
	\cell_{i+1}
	&\leftarrow \forgetgate_{i+1} \odot \cell(t_{i}) + \inputgate_{i+1} \odot \zgate_{i+1}\\
	\statcell_{i+1}
	&\leftarrow \statforgetgate_{i+1} \odot \statcell_{i} + \statinputgate_{i+1} \odot \zgate_{i+1} \label{eqn:ct_cell_target_forward} \\
	\decaygate_{i+1}
	&\leftarrow f \left( \vec{W}_{\mathrm{d}} \vec{k}_{i} + \vec{U}_{\mathrm{d}} \state(t_{i}) + \vec{d}_{\mathrm{d}} \right)
	\end{align}
\end{subequations}
\endminipage

The vector $\vec{k}_{i} \in \{ 0,1 \}^K$ is the $i$\th input: a one-hot encoding of the new event $k_{i}$, with non-zero value only at the entry indexed by $k_{i}$.
Then, $\cell(t)$ { for } $t \in (t_{i-1},t_{i}]$ is given by \eqref{eqn:c_decay_forward}, which continues to control $\state(t)$ except that $i$ has now increased by 1).
\begin{align}
\cell(t) &\defeq \statcell_{i+1} + \left( \cell_{i+1} - \statcell_{i+1} \right) \exp \left( -\decaygate_{i+1} \left( t - t_i \right) \right)
\label{eqn:c_decay_forward}
\end{align}

On the interval $(t_{i},t_{i+1}]$, $\cell(t)$ follows an exponential curve that begins at $\cell_{i+1}$ (in the sense that $\lim_{t \rightarrow t_{i}^+} \cell(t) = \cell_{i+1}$) and decays, as time $t$ increases, toward $\statcell_{i+1}$ (which it would approach as $t \rightarrow \infty$, if extrapolated).

The \defn{intensity} $\lambda_k(t \mid \history(t)) \in \Real_{\geq 0}$ may be
thought of as the instantaneous {\em rate} of events of type $k$ at
time $t$.  
More precisely, as $dt \rightarrow 0^+$, the expected number of
events of type $k$ occurring in the interval $[t,t+dt)$, divided by
$dt$, approaches $\lambda_k(t \mid \history(t))$.  If no event of any
type occurs in this interval (which becomes almost sure as
$dt \rightarrow 0^+$), one may still occur in the next interval
$[t+dt,t+2dt)$, and so on.
The intensity functions $\lambda_k(t \mid \history(t))$ are continuous on intervals during which no event occurs (note that $\history(t)$ is constant on such intervals).  They jointly determine a distribution over the time of the next event after $\history(t)$, as used in every factor of \cref{eqn:model}.  As it turns out \citep{mei-17-neuralhawkes}, $\log \model(\Comp=\comp)$ becomes 
\begin{equation}\label{eqn:logpmodel}
\sum_{\ell} \log \inten{k_\ell}{t_\ell} - \int_{t=0}^{T} \sum_{k=1}^{K}\inten{k}{t} dt
\end{equation}
where the first sum ranges over all events $\kt{k_\ell}{t_\ell}$ in $\comp$. 

We can therefore train the parameters $\vec{\theta}$ of the $\lambda_k$ functions by maximizing log-likelihood on training data.  The first term of \cref{eqn:logpmodel} can be differentiated by back-propagation.  \citet{mei-17-neuralhawkes} explain how simple Monte Carlo integration (see also our \cref{sec:integral}) gives an unbiased estimate of the second term of \cref{eqn:logpmodel}, and how the random terms in the Monte Carlo estimate can similarly be differentiated to give a stochastic gradient.  

\section{Sequential Monte Carlo Details}\label{sec:smc_details}

Our main algorithm is presented as \cref{alg:sis}.  It covers both particle filtering and particle smoothing, with optional multinomial resampling.  

\begin{algorithm*}
\caption{Sequential Monte Carlo --- Neural Hawkes Particle Filtering/Smoothing}\label{alg:sis}
\begin{algorithmic}[1]
	\INPUT observed sequence $\obs=\kt{k_0}{t_0},\ldots, \kt{k_{I+1}}{t_{I+1}}$ with $t_0 =0,t_{I+1}=T$;\newline
                    model $p$; missingness mechanism $p_{\text{miss}}$; proposal distribution $q$; number of particles $M$;\newline
                    boolean flags $\mathit{smooth}$ and $\mathit{resample}$
  \OUTPUT collection $\{(\unobs_1,w_1),\ldots,(\unobs_M,w_M)\}$ of weighted particles
  \Procedure{SequentialMonteCarlo}{$\obs, p, p_{\text{miss}}, q, M, \mathit{smooth}, \mathit{resample}$}
  \For{$m = 1$ {\bfseries to} $M$} \Comment{init weighted particles $(\unobs_m,w_m)$.  History $\history_m$ combines $\unobs_m$ with a prefix of $\obs$}
  \State $\unobs_m \gets \text{empty sequence}$; $w_m \gets 1$; $\history_m \gets \text{empty stack}$; push $\kt{k_0}{t_0}$ onto $\history_m$   \label{line:sis:pushBOS}
  \EndFor
   \State $\future \gets \text{empty stack}$
   \For{$i = I$ {\bfseries downto} $0$} \Comment{stack of all future observed events}
           \State push $\kt{k_{i+1}}{t_{i+1}}$ onto $\future$ \Comment{as we reach these events, we'll pop from $\future$ and push onto $\history_m$ ($\forall m$)}\label{line:sis:pushF}
   \EndFor
  \For{$i = 0$ {\bfseries to} $I$} \Comment{propose unobserved events on interval $(t_i, t_{i+1})$, then observe next event $\kt{k_{i+1}}{t_{i+1}}$} \label{line:sis:outer}
	\For{$m = 1$ {\bfseries to} $M$} \label{line:sis:inner}
    \State $\textsc{DrawSegment}(i,m)$ \Comment{destructively extend $\unobs_m, w_m, \history_m$ with events on $(t_i,t_{i+1}]$}
    \EndFor
    \IfThen{$\mathit{resample}$ \& \textsc{LowESS()}}{\textsc{Resample}()} \Comment{optional multinomial resampling replaces all weighted particles}
  \EndFor
  \State\label{line:return_ensemble} \textbf{return} $\{(\unobs_m, w_m/\sum_{m=1}^{M} w_m) \}_{m=1}^{M}$  \Comment{$M$ particles with weights normalized as in \cref{eqn:weight_b}}
  \EndProcedure
  \Procedure{LowESS}{} \Comment{check if effective sample size is low}
  	\State $\text{ESS} \gets {(\sum_{m=1}^{M} w_m)^2} / {\sum_{m=1}^{M} (w_m)^2}$
  	\IfThen{$\text{ESS} < M/2$}{\textbf{return} \textbf{true} }
  	\State \textbf{return} \textbf{false}
  \EndProcedure
  \Procedure{Resample}{}   \Comment{has access to global variables}
    	\For{$m = 1$ {\bfseries to} $M$} \Comment{often draws multiple copies of good (high-weight) particles, 0 copies of bad ones}
    		\State\label{line:normweights} $\tilde{\unobs}_m \sim \Categorical(\{\unobs_m \mapsto \frac{w_m}{\sum_{m=1}^{M} w_m} \}_{m=1}^{M})$ 
    	\EndFor
    	\For{$m = 1$ {\bfseries to} $M$}
    		\State\label{line:unifweights} ${\unobs}_m \gets \tilde{\unobs}_m$; $w_m \gets 1$ \Comment{update particles and their weights}
    	\EndFor
  \EndProcedure
  \Procedure{DrawSegment}{$i,m$} \Comment{has access to global variables}
    \LineComment{$p$ gives info to define function $\lambda^p_k(t) \defeq \lambda_k(t \mid \history_m)$}
    \LineComment{$q$ gives info to define function $\lambda^q_k(t) \defeq \lambda_k(t \mid \history_m, \future)$, or simply $\lambda^q_k(t)=\lambda^p_k(t)$ if $\mathit{smooth}=\textbf{false}$}\label{line:qdef}
    \LineComment{these functions consult the state of a left-to-right LSTM that's read $\history_m$ and possibly a right-to-left LSTM that's read $\future$}
    \LineComment{we also define the \defn{total intensity functions} $\lambda^p(t) \defeq \sum_{k=1}^K
      \lambda^p_k(t)$ and $\lambda^q(t) \defeq \sum_{k=1}^K
      \lambda^q_k(t)$}
    \State $i' \gets i$; $j \gets 0$; $t \gets t_i$ \Comment{where $t_i$ is time of top element of $\history_m$}
    \Repeat \Comment{each iteration adds a new event with index $\angles{i',j} = \angles{i,1},\ldots,\angles{i,J_i},\angles{i+1,0}$}
      \State $j \gets j+1$
      \Repeat\label{line:thinning} \Comment{thinning algorithm \citep[see][]{mei-17-neuralhawkes}}
        \State\label{line:lambdastar} find any $\lambda^{*} \geq \sup\;\{\lambda^q(t'): t' \in (t,t_{i+1}]\}$ \Comment{e.g., old $\lambda^{*}$ still works if $i$ unchanged; see \cref{sec:lambdastar}}
        \State\label{line:expdraw} draw $\Delta \sim \Exp(\lambda^*)$, $u \sim \Uniform(0,1)$
        \State $t \pluseq \Delta$ \Comment{time of next proposed event (before thinning)}
        \If{$t > t_{i+1}$}\label{line:keeptie} \Comment{where $t_{i+1}$ is time of top element of $\future$}
          \State $\kt{k}{t} \gets \text{pop }\future$; $i' \gets i+1$; $j \gets 0$; \Comment{preempt proposed event by $\kt{k_{i+1}}{t_{i+1}}$ (popped from future into present)}\label{line:sis:pop}
          \State \textbf{break}  \label{line:preempt}  
        \EndIf
      \Until{$u\lambda^* \leq \lambda^q(t)$} \Comment{thinning: accept
        proposed time $t$ only with prob $\frac{\lambda^q(t)}{\lambda^*} \leq 1$}
      \LineComment{we've now chosen next event time $t_{i',j}$ to be $t$; let $t_{\text{prev}}$ denote time of top element of $\history_m$}
      \If{$i'=i$}  \Comment{it's a missing event}
        \State draw $k \in \{1,\ldots,K\}$ where probability of $k$ is proportional to $\lambda^q_k(t)$ \Comment{choose next event type $k_{i',j}$}
        \State append $\kt{k}{t}$ to $\unobs_m$
        \State\label{line:integralq}$w_m \gets w_m / 
              \exp{(-\int_{t'=t_{\text{prev}}}^t \lambda^q(t')\,dt' )} \cdot \lambda^q_k(t)$ \Comment{new factor within $q$ in denominator of \eqref{eqn:weight_b}; see \cref{sec:integral}}      
      \EndIf
      \If{$i' \leq I$} \Comment{skip final boundary event $I+1$ (not
        generated by $\model$)}
        \State\label{line:integralp}\label{line:downweight}$w_m \gets w_m\cdot
              \exp{(-\int_{t'=t_{\text{prev}}}^t \lambda^p(t')\,dt' )} \cdot \lambda^p_k(t)$ \Comment{new factor within $\model$ in numerator of \eqref{eqn:weight_b}; see \cref{sec:integral}}      
        \State push $\kt{k}{t}$ onto $\history_m$ \Comment{event $\angles{i,j}$ just generated now becomes part of the past}\label{line:sis:pushH}
        \State\label{line:missfactor} $w_m \gets w_m \cdot p_{\text{miss}}((\kt{k}{t} \in \Unobs) = (i'=i) \mid \history_m)$ \Comment{new factor within $\pmiss$ in numerator of \eqref{eqn:weight_b}: missing or obs}
       \EndIf
    \Until{$i' = i+1$}
  \EndProcedure
\end{algorithmic}
\end{algorithm*}

In this section, we provide some additional details and notes on the design and operation of the pseudocode.  

\subsection{Explicit Formula for the Proposal Distribution}\label{sec:proposal}

The proposal distribution $\proposal$ factors as follows, and the pseudocode uses this factorization to construct $\unobs$ by sampling its individual events from left to right:
\begin{align}\label{eqn:proposal}
&\prod_{i=0}^I \Big( \prod_{j=1}^{J_i} \big( 
q(\kt{k_{i,j}}{t_{i,j}} \mid \history(t_{i,j}), \future(t_{i,j})) \big) \\
&\hspace{15pt}\cdot q(\kt{}{\!\geq\!t_{i+1}} \mid \history(t_{i+1}), \future(t_{i,J_i}) ) \Big) \nonumber
\end{align}
Here the notation for $q(\cdot \mid \cdot)$ is the same as that for $p(\cdot \mid \cdot)$ in \cref{sec:model_details}.   However, the $q(\cdot \mid \cdot)$ terms are proposal probabilities that condition on different evidence---not only the set $\history(t)$ of all events (observed and unobserved) at times $< t$, but also the set $\future(t)$ of events at times $> t$.\footnote{In particular, the second $q$ factor above is the probability that the event at time $t_{i,J_i}$ is the last one before $t_{i+1}$, given knowledge of all past events up through and including the one at $t_{i,J_i}$, and all future observed events starting with the one at $t_{i+1}$.}
All of the proposal probabilities $q(\cdot \mid \cdot)$ are determined by the intensity functions in \eqref{eqn:both}.  

We can sample $\unobs$ from $\proposal$ in chronological order: for each $0 \leq i \leq I$ in turn, draw a sequence of $J_i$ unobserved events that follow the observed event $\kt{k_i}{t_i}$.  The probabilities of these $J_i$ events are the inner factors in \cref{eqn:proposal}.  This sequence ends (thereby determining $J_i$) if the next proposed event would have fallen after $t_{i+1}$ and thus is preempted by the observed event $\kt{k_{i+1}}{t_{i+1}}$.  The probability of so ending the sequence corresponds to the $q(\kt{}{\!\geq\!t_{i+1}} \mid \cdots)$ factor in \cref{eqn:proposal}.

\Cref{eqn:proposal} resembles \cref{eqn:model},
but it conditions each proposed unobserved event not only on the history but also on the {future}.
\Cref{sec:train} tries to train $\proposal$ to approximate the target distribution $\target$, by making $q(\cdot \mid \history,\future) \approx p(\cdot \mid \history,\future)$.  In other words, at each step, $q$ should draw the next proposed event approximately from the posterior of the model $p$, even though we have no closed form computation for that posterior.

Just as \cref{eqn:model} yields the formula \eqref{eqn:logpmodel} for $\log \model$ when we use a neural Hawkes process model, 
\cref{eqn:proposal} yields the following formula for $\log \proposal$ when we use the proposal intensities from \eqref{eqn:both}:
\begin{align}
&\sum_{\ell} \log \intenbothq{k_\ell}{t_\ell} \nonumber \\
&\qquad - \int_{t=0}^{T} \sum_{k=1}^{K}\intenbothq{k}{t}\,dt \label{eqn:logq}
\end{align}
where the first sum ranges over all events $\kt{k_\ell}{t_\ell}$ in $\unobs$ only. 

\subsection{Managing LSTM State Information}\label{sec:lstm_states}

The push and pop operations shown in the pseudocode must be implemented so that they also have the effect of updating LSTM configurations.

Our $\model$ uses a left-to-right LSTM to construct its state after reading all events so far from left to right (\cref{sec:nhp}).  Since each particle posits a different event sequence, we maintain a separate LSTM configuration for each particle $m=1,2,\ldots,M$.  If $\textit{smooth}=\textbf{true}$, our $q$ additionally uses a right-to-left LSTM whose state has read all future observed events from right to left (\cref{sec:nhps}).  We maintain the configuration of this LSTM as well.  

Specifically, in \cref{alg:sis}, when we push an event to the stack $\history_m$ (\cref{line:sis:pushBOS,line:sis:pushH}), we update the configuration of particle $m$'s left-to-right LSTM (including gates, cell memories and states).  

If $\textit{smooth}=\textbf{true}$, then when we push an event to the stack $\future$ (\cref{line:sis:pushF}), we update the configuration of the right-to-left LSTM.  Moreover, before updating that configuration, we push it onto an parallel stack, so that we can revert the update when we later \emph{pop} the event from $\future$ (\cref{line:sis:pop}).

These LSTM configurations provide the $\state(t)$ and $\stateb(t)$ vectors for the computation of intensities $\lambda^{p}_k(t)$ and $\lambda^{q}_k(t)$ in \cref{eqn:hawkes_mod_a,eqn:both}.  These intensities are needed in \cref{line:integralq,line:integralp} of the algorithm.

\subsection{Integral Computation}\label{sec:integral}

\citet[section B.2]{mei-17-neuralhawkes} construct a Monte Carlo estimator of the $\int_0^T$ integral in \cref{eqn:logpmodel}, by evaluating $\sum_k \inten{k}{t} \cdot T$ at a random $t \sim \Uniform(0,1)$.  While even one such sample would provide an unbiased estimate, they draw $N=O(I)$ such samples, where $I$ is the number of events, and average over these samples.  This reduces the variance of the estimator, which decreases as $O(1/N)$.  Notice that because they sample the $N$ time points uniformly on $[0,T)$, longer intervals between observed events will tend to contain more points, which is appropriate.

\citet{mei-17-neuralhawkes} (Appendix C.2) found that rather few samples could be used to estimate the integral.  Indeed, even sampling at only $I$ time points gave a standard deviation of log-likelihood---for the whole sequence---that was on the order of 0.1\% of absolute (Mei, p.c.).

Our particle methods involve \emph{comparing probabilities}.  For each observed sequence $\obs$, we use \eqref{eqn:weight_b} to reweight the $M$ particles according to their probability under the model divided by their probability under the proposal distribution.  This means contrasting \emph{two} probabilities for each particle (the $p$ and $q$ probabilities).  It also means \emph{comparing} the resulting probability ratios across all $M$ particles, resulting in the normalized weights of \cref{eqn:weight_b}.

For each of the $M$ particles, the $\model$ factor in \cref{eqn:weight_b} is obtained by exponentiating \cref{eqn:logpmodel}, while the $q$ factor is obtained by exponentiating \cref{eqn:logq}.  This means that each of these $2M$ factors contains the $\exp$ of an integral.  To make all of these integral estimates more comparable and thus reduce the variance in the importance weights $w_m$ (\cref{eqn:weight_b}), we evaluate all $2M$ integrals at the same set of $N$ time points (see \cref{sec:wallclock}).  This practice ensures a ``paired comparison'' among particles: $w_m$ and $w_{m'}$ differ only because they have different intensities at the sampled points, and not also because they sample at different points.

In \cref{alg:sis}, these integral estimates are accumulated gradually at \cref{line:integralp,line:integralq}.  The idea is that particle $\unobs_m$ partitions $[0,T)$ into the intervals between successive events of $\comp_m$.  Thus, the (estimated) integral over $[0,T)$ can be expressed by summing the (estimated) integrals over these intervals.  The estimate over an interval uses only the small subset of the $N$ time points that fall into the interval.  When we exponentiate the integrals to convert from log-probabilities into probabilities, this sum turns into a product, as shown at \cref{line:integralp,line:integralq}.

This gradual accumulation method gives the same result as if we had computed each integral ``all at once'' before exponentiating.  However, it is useful to begin weighting the particles before they are complete.  After each event $\kt{k_i}{t_i}$ (for $0 \leq i \leq I+1$), the partial particles up through this event already have partial weights $w_m$.  It is these partial weights that are used by the $\textsc{Resample}$ procedure (when $\mathit{resample}=\textbf{true}$).

In all experiments in this paper, we first sampled $I+1$ points uniformly on $[0,T)$, for an average of only 1 time point per interval.  In addition, for each interval $(t_i,t_{i+1})$, we sampled 1 point uniformly on that interval if it did not yet contain any points.  Thus, $N \in [I+1,2I+1]$.  

Sampling at more points might be wise in settings where there are many missing events per interval (e.g., large $\pmi$ in \cref{sec:miss-mech}).  This is especially true when $\mathit{resample}=\textbf{true}$.  Resampling allows us to try multiple extensions of a high-weight particle; at the next resampling step, we prefer to keep the extensions that fared best.  The danger is that if only a few sampling points happen to fall between resampling steps, then we may make a poor (high-variance) estimate of which extensions fared best.

For our setting, however, we found only negligible changes in the results by increasing to 5 time points per interval (i.e., sampling $5I+5$ points at the first step).  Our evaluation metric (the minimum of \eqref{eqn:decomp} over all alignments $\ali$) became slightly better for some values of $C$ and slightly worse for others, but never by more than 2\% relative.  This is about the same variance that we get across different runs (with different random seeds) that have 1 time point per interval.  Thus, we report only the results of the faster scheme.  We caution that this hyperparameter might be more important in other settings.  

\subsection{Choice of $\lambda^*$}\label{sec:lambdastar}

How do we construct the upper bound $\lambda^*$ (\cref{line:lambdastar} of \cref{alg:sis})? For particle filtering, we follow the recipe in B.3 of \citet{mei-17-neuralhawkes}: we can express $\lambda^* = f_k(\max_{t}g_1(t)+\ldots+\max_{t}g_n(t))$ where each summand $v_{kd} h_d(t) = v_{kd} \cdot o_{id} \cdot (2\sigma(2c_d(t))-1)$ is upper-bounded by $\max_{c \in \{c_{id},\underline{c}_{id}\}} v_{kd} \cdot o_{id} \cdot (2\sigma(2c)-1)$.  Note that the coefficients $v_{kd}$ may be either positive or negative.

For particle smoothing, we simply have more summands inside $f_k$ so $\lambda^* = f_k(\max_{t}g_1(t)+\ldots+\max_{t}g_n(t) + \max_{t}\bar{g}_1(t)+\ldots+\max_{t}\bar{g}_{\bar{n}}(t))$ where each extra summand $u_{kd} \bar{h}_d(t) = u_{kd} \cdot \bar{o}_{id} \cdot (2\sigma(2\bar{c}_d(t))-1)$ is upper-bounded by $\max_{c \in \{\bar{c}_{id},\bar{\underline{c}}_{id}\}} u_{kd} \cdot \bar{o}_{id} \cdot (2\sigma(2c)-1)$ and each $u_{kd}$ is the $d$-th element of vector $\vec{v}_k^{\top} \vec{B}$ (\cref{eqn:both}). Note that the $\bar{o}_{id}, \bar{c}_{id}, \bar{\underline{c}}_{id}$ of newly added summands $\bar{g}$ are actually from the right-to-left LSTM while those of $g$ are from the left-to-right LSTM. 

\subsection{Missing Data Factors in $p$}\label{sec:missp}

Recall that the joint model \eqref{eqn:target} includes a factor $\pmiss(\unobs \mid \comp)$, which appears in the numerator of the unnormalized importance weight \eqref{eqn:weight_b}.  Regardless of the form of this factor, it could be multiplied into the particle's weight $\tilde{w}_m$ at the \emph{end} of sampling (\cref{line:return_ensemble} of \cref{alg:sis}).

However, for some $\pmiss$ distributions, there is a better way.  \Cref{alg:sis} assumes that the missingness of each event $\kt{k}{t}$ depends only on that event and preceding events,\footnote{This assumption could trivially be relaxed to allow it to also depend on the missingness of the preceding events, and/or on the future observed events $\future(t)$.}
so that $\pmiss(\unobs \mid \comp)$ factors as
\begin{align}\label{eqn:missfactors}
 \smashoperator[lr]{\prod_{\ell \in \mathrm{indices}(\unobs)}} 
      p_{\text{miss}}( \kt{k_\ell}{t_\ell} \in Z  \mid \{\kt{k_{\ell'}}{t_{\ell'}}: \ell' \leq \ell\}) \\ 
 \cdot \smashoperator[lr]{\prod_{\ell \in \mathrm{indices}(\obs)}} 
      p_{\text{miss}}( \kt{k_\ell}{t_\ell} \not\in Z  \mid \{\kt{k_{\ell'}}{t_{\ell'}}: \ell' \leq \ell\}) \nonumber
\end{align}
\cref{alg:sis} can thus {\em incrementally} incorporate the subfactors of \cref{eqn:missfactors}, and does so at \cref{line:missfactor} of \cref{alg:sis}.  
For example, with the missingness mechanism in our experiments, \cref{eqn:miss-prob},
the $\pmiss$ factor in \cref{line:missfactor} is $\pmi_k$ if the event is unobserved (that is, $i'=i$) or $1-\pmi_k$ if it is observed.  

These subfactors are therefore taken into account as the particles are constructed, and thus play a role in resampling.

\subsection{Optional Missing Data Factors in $q$}\label{sec:missq}
We can optionally improve the particle filtering proposal intensities to incorporate the $\pmiss$ factor discussed in \cref{sec:missp} (in which case that factor will be multiplied into the denominator of \eqref{eqn:weight_b} and not just the numerator).  This makes $\proposal$ better match $\target$: it means we will rarely posit an unobserved event that would rarely have gone missing.  

Specifically, if a completed-data event $\kt{k}{t}$ would have probability $\pmi_k(t \mid \history(t))$ of going missing given the preceding events $\history(t)$, it is wise to define $\lambda^q_k(t \mid \history(t)) = \lambda^p_k(t \mid \history(t)) \cdot \pmi_k(t \mid \history(t))$.

We do include this extra $\pmi_k$ factor when defining $\lambda^q_k$ for our experiments (\cref{sec:exp}); that is, we modify the definition of $\lambda^q_k$ at \cref{line:qdef}.
The factor is particularly simple in our experiments, where $\pmi_k$ is constant for each $k$.

Was this factor really necessary in the case of particle smoothing?  One might say no: particle smoothing already tries to ensure through training that the proposal distribution will incorporate $\pmiss$.  That is because \cref{sec:train} aims to train $\lambda^q_k(t \mid \history(t),\future(t))$ so that the resulting $\proposal \approx \target$, and the posterior distribution $\target$ does condition on the missingness of $\unobs$.

Still, if the $\pmi_k$ factor is known, why not include it explicitly in the proposal distribution, instead of having to train the BiLSTM to mimic it?  Thus, in effect, we have modified the right-hand side of \cref{eqn:both} to include a factor of $\pmi_k$.  This yields a more expressive and better-factored family of proposal distributions: missingness is now handled by the known $\pmi_k$ factor and the BiLSTM does not have to explain it.  
Additionally, our proposal distribution becomes more conservative about proposing missing events, because having a lot of missing events is \emph{a posteriori} improbable.  In other words, $\pmiss$ as given in \cref{eqn:miss-prob} falls off with the number of missing events $|\unobs|$.

Modifying \cref{eqn:both} in this way is particularly useful in the special case $r_k=0$ (i.e., event type $k$ is never missing and should not be proposed).  There, it enforces the hard constraint that $\lambda^q_k = 0$ (something that the BiLSTM by itself could not achieve); and since this constraint is enforced regardless of the BiLSTM parameters, the events of type $k$ appropriately become irrelevant to the training of the BiLSTM, which can focus on predicting other event types.

\subsection{Events with Equal Times}

In contrast to the notation in the main paper, our pseudocode is written in terms of sequence of events, rather than sets of events.  As a result, it can handle the generalization noted in \cref{fn:setseq}, where a 0 delay is allowed between an event and the preceding event in the complete sequence.  If this occurs, it means that multiple events have fallen at the same time---yet they still have a well-defined order in which they are generated and read by the LSTM.

An unobserved event may have a 0 delay, if \cref{line:expdraw} proposes $\Delta=0$ and the proposal is accepted.  The neural Hawkes model can make in principle make such a proposal, but it has zero probability.  However, it might have positive probability under a slightly different model.

An observed event may also have a 0 delay, if $t=t_{i+1}$ at \cref{line:keeptie} and the proposal is accepted.\footnote{It may seem improbable to propose $t=t_{i+1}$ exactly, but if $t_i=t_{i+1}$, then proposing an unobserved event between these two observed events is just a case of proposing with 0 delay, as in the previous paragraph.}  In this way, it is possible for the proposal distribution to propose any number of unobserved events at time $t_{i+1}$ and immediately before the actual observed event $\kt{k_{i+1}}{t_{i+1}}$.  However, once the proposal distribution happens to propose $\Delta > 0$, the actual observed event $\kt{k_{i+1}}{t_{i+1}}$ will preempt the proposal, ending this sequence of $J_i$ unobserved events.

\section{Right-to-Left Continuous-Time LSTM}\label{sec:r2l}

Here we give details of the right-to-left LSTM from \cref{sec:nhps}. Note that this set of formulas is nearly the same as that of \cref{sec:nhp_details}---after all, it is a continuous-time LSTM that has the same architecture but different parameters from the one in the neural Hawkes process. The difference is that it reads only the observed events, and does so from right to left.

At each time $t \in (0, T)$, its hidden state $\stateb(t)$ is continually obtained from the memory cells $\vec{c}(t)$ as the cells decay:
\begin{align}
	\label{eqn:hawkes_mod_b}
  \stateb(t) = \vec{o}_{i} \odot (2\sigma(2\cellb(t))-1) \text{ for } t \in (t_{i-1}, t_{i}]
\end{align}
where the interval $(t_{i-1},t_{i})$ has consecutive observations $\kt{k_{i-1}}{t_{i-1}}$ and  $\kt{k_{i}}{t_{i}}$ as endpoints.
At $t_i$, the continuous-time LSTM reads $\kt{k_{i}}{t_{i}}$ and updates the current (decayed) hidden cells $\cellb(t)$ to new initial values $\cellb_{i-1}$, based on the current (decayed) hidden state $\stateb(t_{i})$, as follows:
\minipage[t]{0.49\textwidth}
\begin{subequations} \label{eqn:ct_lstm}
\begin{align}
	\inputgateb_{i-1}
	&\leftarrow \sigma \left( \vec{W}_{\mathrm{i}} \vec{k}_{i} + \vec{U}_{\mathrm{i}} \stateb(t_{i}) + \vec{d}_{\mathrm{i}}  \right) \\
	\forgetgateb_{i-1}
	&\leftarrow \sigma \left( \vec{W}_{\mathrm{f}} \vec{k}_{i} + \vec{U}_{\mathrm{f}} \stateb(t_{i}) + \vec{d}_{\mathrm{f}}  \right) \\
	\zgateb_{i-1}
	&\leftarrow 2 \sigma \left( \vec{W}_{\mathrm{z}} \vec{k}_{i} + \vec{U}_{\mathrm{z}} \stateb(t_{i}) + \vec{d}_{\mathrm{z}}  \right) - 1 \\
	\outputgateb_{i-1}
	&\leftarrow \sigma \left( \vec{W}_{\mathrm{o}} \vec{k}_{i} + \vec{U}_{\mathrm{o}} \stateb(t_{i}) + \vec{d}_{\mathrm{o}}  \right) \label{eqn:ot}
\end{align}
\end{subequations}
\endminipage
\hfill
\minipage[t]{0.49\textwidth}
\begin{subequations} \label{eqn:ct_cell}
\begin{align}
\cellb_{i-1}
&\leftarrow \forgetgateb_{i-1} \odot \cellb(t_{i}) + \inputgateb_{i-1} \odot \zgateb_{i-1}\label{eqn:ct_cell_instant} \\
\statcellb_{i-1}
&\leftarrow \statforgetgateb_{i-1} \odot \statcellb_{i} + \statinputgateb_{i-1} \odot \zgateb_{i-1} \label{eqn:ct_cell_target} \\
\decaygateb_{i-1}
&\leftarrow f \left( \vec{W}_{\mathrm{d}} \vec{k}_{i} + \vec{U}_{\mathrm{d}} \stateb(t_{i}) + \vec{d}_{\mathrm{d}} \right) \label{eqn:ct_cell_rate}
\end{align}
\end{subequations}
\endminipage

The vector $\vec{k}_{i} \in \{ 0,1 \}^K$ is the $i$\th input: a one-hot encoding of the new event $k_{i}$, with non-zero value only at the entry indexed by $k_{i}$.
Then, $\cellb(t)$ { for } $t \in (t_{i-1},t_{i}]$ is given by \eqref{eqn:c_decay}, which continues to control $\stateb(t)$ except that $i$ has now decreased by 1).
\begin{align}
\cellb(t) &\defeq \statcellb_{i-1} + \left( \cellb_{i-1} - \statcellb_{i-1} \right) \exp \left( -\decaygateb_{i-1} \left( t_i - t \right) \right)
\label{eqn:c_decay}
\end{align}

On the interval $[t_{i-1},t_{i})$, $\cellb(t)$ follows an exponential curve that begins at $\cellb_{i-1}$ (in the sense that $\lim_{t \rightarrow t_{i}^-} \cellb(t) = \cellb_{i-1}$) and decays, as time $t$ decreases, toward $\statcellb_{i-1}$.

\section{Optimal Transport Distance Details}\label{sec:dpdetails}

\begin{algorithm*}[t]
	\caption{A Dynamic Programming Algorithm to Find Optimal Transport Distance}\label{alg:dp}
\begin{algorithmic}[1]
    \INPUT {proposal $\estimate$; reference $\truth$}
    \OUTPUT optimal transport distance $d_{\text{}}$; alignment $\ali$
    \Procedure{OTD}{$\estimate, \truth$}
      \State $d \gets 0$; $\ali \gets \text{empty collection}\ \{\}$
      \For{$k \gets 1$ {\bfseries to} $K$}
        \State $d\supk, \ali\supk \gets \textsc{Align}(\estimate\supk, \truth\supk)$
        \State $d \gets d + d\supk$; $\ali \gets \ali \cup \ali\supk$
      \EndFor
    \State \textbf{return} $d, \ali$
    \EndProcedure
    \Procedure{Align}{$\estimate\supk, \truth\supk$}
    \State $\hat{I} \gets |\estimate\supk|$; $I^{*} \gets |\truth\supk|$ \Comment{$\estimate\supk = \hat{t}_1, \ldots, \hat{t}_{\hat{I}}$ and $\truth\supk = {t}^{*}_1, \ldots, {t}^{*}_{{I}^{*}}$}
	  \State $\vec{D} \gets \text{zero matrix with } (\hat{I}+1) \text{ rows and } (I^{*}+1) \text{ columns}$
	  \State $\vec{P} \gets \text{empty matrix with } \hat{I} \text{ rows and } I^{*} \text{ columns}$ \Comment{back pointers}
    \For{$\hat{i} \gets 1$ {\bfseries to} $\hat{I}$} \Comment{transport reference of length $0$ to proposal of length $\hat{i}$}
      \State $\vec{D}_{\hat{i},0} \gets \vec{D}_{\hat{i}-1, 0} + C_{\text{delete}}$ \Comment{delete $\hat{t}_{\hat{i}}$ (and prefixes are matched)}
    \EndFor
    \For{$i^* \gets 1$ {\bfseries to} $I^*$} \Comment{transport preference of length $i^*$ to proposal of length $0$}
      \State $\vec{D}_{0,i^*} \gets \vec{D}_{0, i^*} + C_{\text{insert}}$ \Comment{insert $\hat{t}_{{i}^*} = t^*_{{i^*}}$ to decode (and their prefixes are matched)}
    \EndFor
	  \For{$\hat{i} \gets 1$ {\bfseries to} $\hat{I}$} \Comment{proposal prefix of length $\hat{i}$}
	    \For{$i^{*} \gets 1$ {\bfseries to} $I^{*}$} \Comment{to match reference of length $i^*$}
        \State $D_{\text{delete}} \gets \vec{D}_{\hat{i}-1, i^*}+C_{\text{delete}}$ \Comment{if the event token at $\hat{t}_{\hat{i}}$  is deleted from $\estimate\supk$ }
        \State $D_{\text{insert}} \gets \vec{D}_{\hat{i}, i^*-1}+C_{\text{insert}}$ \Comment{if an event token at $t^{*}_{i^*}$ is inserted to $\estimate\supk$}
        \State $D_{\text{move}} \gets \vec{D}_{\hat{i}-1, i^*-1}+|\hat{t}_{\hat{i}} - t^{*}_{i^*}|$ \Comment{if the event at $t^{*}_{i^*}$ of $\truth\supk$ is aligned to event at $\hat{t}_{\hat{i}}$ of $\estimate\supk$}
        \State $\vec{D}_{\hat{i}, i^*} \gets \min\{ D_{\text{insert}}, D_{\text{delete}}, D_{\text{move}} \}$ \Comment{choose the edit that yields the shortest distance}
        \State $\vec{P}_{\hat{i}, i^*} \gets \argmin_{e\in\{\text{insert}, \text{delete}, \text{move}\}} D_{e}$ \Comment{$e$ represents the kind of edition}
      \EndFor
	  \EndFor
    \State $\hat{i} \gets \hat{I}; i^* \gets I^*; \ali \gets \text{empty collection} \{\}$
      \While{$\hat{i} > 0 \textbf{ and } i^* > 0$} \Comment{back trace}
    \If{$\vec{P}_{\hat{i}, i^*} = \text{delete}$} \Comment{token $\hat{t}_{\hat{i}}$ is deleted.}
        \State $\hat{i} \gets \hat{i} - 1$
    \EndIf
    \If{$\vec{P}_{\hat{i}, i^*} = \text{insert}$} \Comment{a token at $t^*_{i^*}$ is inserted}
        \State $i^* \gets i^* - 1$
    \EndIf
    \If{$\vec{P}_{\hat{i}, i^*} = \text{move}$} \Comment{token $t^*_{i^*}$ is aligned to $\hat{t}_{\hat{i}}$}
        \State $\hat{i} \gets \hat{i} - 1; i^* \gets i^* - 1$
        \State $\ali \gets \ali \cup \{(\hat{t}_{\hat{i}}, t^*_{i^*})\}$
    \EndIf
    \EndWhile
  \State \textbf{return} $\vec{D}_{\hat{I},I^*}, \ali$
  \EndProcedure
\end{algorithmic}
\end{algorithm*}

Pseudocode is presented in \cref{alg:dp} for finding optimal transport distance and the corresponding alignment.
In the remainder of this section, we prove that optimal transport distance is a valid metric.

It is trivial that OTD is non-negative, since movement, deletion and insertion costs are all positive.

It is also trivial to prove that the following statement is true:
\begin{align}
    \loss(\unobs_1, \unobs_2) = 0 \Leftrightarrow \unobs_1 = \unobs_2,
    \label{eqn:zero}
\end{align}
where $\unobs_1$ and $\unobs_2$ are two sequences. If $\unobs_1$ is not identical to $\unobs_2$, the distance of them must be larger than 0 since we have to do some movement, insertion or deletion to make them exactly matched, so the right direction of \cref{eqn:zero} holds. If the distance between $\unobs_1$ and $\unobs_2$ is zero, which means they are already matched without any operations, $\unobs_1$ and $\unobs_2$ must be identical, thus the left direction of \cref{eqn:zero} holds.

OTD is symmetric, that is, $\loss(\unobs_1, \unobs_2) = \loss(\unobs_2, \unobs_1)$, if we set $C_\text{insert} = C_\text{delete}$.
Suppose that $\ali$ is an alignment between $\unobs_1$ and $\unobs_2$.
It's easy to see that the only difference between $D(\unobs_1, \unobs_2, \ali)$ and $D(\unobs_2, \unobs_1, \ali)$
\footnote{We abuse the notation $\ali$, which we think could represent both the movement from $\unobs_1$ to $\unobs_2$ and from $\unobs_2$ to $\unobs_1$.}
is that the insertion and deletion operations are exchanged.
For example, if we delete a token $t_i \in \unobs_1$ when calculating $D(\unobs_1, \unobs_2, \ali)$, we should insert a token at $t_i$ to $\unobs_2$ when calculating $D(\unobs_2, \unobs_1, \ali)$.
If we set $C_\text{insert} = C_\text{delete}$, we have
\begin{align}
    D(\unobs_1, \unobs_2, \ali) =
    D(\unobs_2, \unobs_1, \ali), \quad \forall \ali \in \mathcal{A}(\unobs_1, \unobs_2).
\end{align}
Therefore, we could obtain
\begin{subequations}
\begin{align*}
\loss(\unobs_1, \unobs_2)
&= \min_{\ali^*\in\mathcal{A}(\unobs_1, \unobs_2)}D(\unobs_1, \unobs_2, \ali^*)\\
&=\min_{\ali^*\in\mathcal{A}(\unobs_1, \unobs_2)} D(\unobs_2, \unobs_1, \ali^*)
=\loss(\unobs_2, \unobs_1)
\end{align*}
\end{subequations}

Finally let's prove that OTD satisfies triangle inequality, that is:
\begin{align}
    \loss(\unobs_1, \unobs_2) + \loss(\unobs_2, \unobs_3) \geq \loss(\unobs_1, \unobs_3), \label{eqn:tri}
\end{align}
where $\unobs_1$, $\unobs_2$ and $\unobs_3$ are three sequences.
This property could be proved intuitively.
Suppose that the operations on $\unobs_1$ with minimal costs to make $\unobs_1$ matched to $\unobs_2$ are denoted by $o_1, o_2, \dots, o_{n_1}$,
and those on $\unobs_2$ to make $\unobs_2$ matched to $\unobs_3$ are denoted by $o_1', o_2', \dots, o_{n_2}'$. $o_i$ could be a deletion, insertion or movement on a token.
To make $\unobs_1$ matched to $\unobs_3$, one possible way, which is not necessarily the optimal, is to do $o_1, o_2, \dots, o_{n_1}, o_1', o_2', \dots, o_{n_2}'$ on $\unobs_1$.
Since the total cost is the accumulation of the cost of each operation, and the operations on $\unobs_1$ above to make $\unobs_1$ matched to $\unobs_3$ might not be optimal, the triangle inequality \cref{eqn:tri} holds.

\section{Approximate MBR Details}\label{sec:mbr_details}\label{sec:consensus_details}

\begin{algorithm*}
	\caption{Approximate Consensus Decoding}\label{alg:mbr}
	\begin{algorithmic}[1]
	\INPUT collection of weighted particles $\seqspace_M = \{(\unobs_m, w_m)\}_{m=1}^{M}$
	\OUTPUT consensus sequence $\estimate$ with low $\sum_{m=1}^{M} w_m \loss(\estimate, \unobs_m )$
	\Procedure{ApproxMBR}{$\seqspace_M$}
  \State $\estimate \gets \text{empty sequence}$
  \For{$k = 1$ {\bfseries to} $K$}
    \State $\estimate\supk \gets \textsc{DecodeK}(\{ ( \unobs\supk_{m}, w_m ) \}_{m=1}^{M})$; $\estimate \gets \estimate \sqcup \estimate\supk$ \Comment{decode for type-$k$ by calling \textsc{DecodeK}}
  \EndFor
  \State \textbf{return} $\estimate$
  \EndProcedure
  \Procedure{DecodeK}{$\seqspace_M$}
  \LineComment{$\seqspace_M$ actually means $\seqspace\supk_M = \{(\unobs\supk_m, w_m)\}_{m=1}^{M}$ throughout the procedure; $\unobs_m$ is constant}
  \State $\unobs \gets \argmax_{\unobs \in \{\unobs_m\}_{m=1}^{M}} w_{m}$ \Comment{init decode as highest weighted particle and it is global}
  \Repeat
  	\For{$m = 1$ {\bfseries to} $M$} \Comment{\bf Align Phase}
  		\State $d_m, \ali_m \gets \textsc{Align}(\unobs, \unobs_m )$ \Comment{call method in \cref{alg:dp}; $d_m, \ali_m$ are global}
  	\EndFor
  	\State $r_\text{min} \gets \sum_mw_md_m$ \Comment{track the risk of current $\unobs$}
	\State $\unobs, \{d_m, \ali_m\}_{m=1}^{M} \gets \textsc{Move}(\unobs, \{\unobs_m, d_m, \ali_m\}_{m=1}^{M})$ \Comment{see \cref{alg:mbr_subs}}
	\State $\unobs, \{d_m, \ali_m\}_{m=1}^{M} \gets \textsc{Delete}(\unobs, \{\unobs_m, d_m, \ali_m\}_{m=1}^{M})$ \Comment{see \cref{alg:mbr_subs}}
	\State $\unobs, \{d_m, \ali_m\}_{m=1}^{M} \gets \textsc{Insert}(\unobs, \{\unobs_m, d_m, \ali_m\}_{m=1}^{M})$ \Comment{see \cref{alg:mbr_subs}}
  \Until{$\sum_{m=1}^Mw_md_m = r_\text{min}$}
  \Comment{risk stops decreasing}
  \State \textbf{return} $\unobs$
  \EndProcedure
\end{algorithmic}
\end{algorithm*}

\begin{algorithm*}
	\caption{Subroutines for Approximate Consensus Decoding}\label{alg:mbr_subs}
	\begin{algorithmic}[1]
  \Procedure{Move}{$\unobs, \{\unobs_m, d_m, \ali_m\}_{m=1}^{M}$} \Comment{\bf Move Phase}
  		\For{${t}$ {\bfseries in} $\unobs$} 
        \For{${t'} \in \{ {t'}: ({t}', {t}) \in \bigcup_{m=1}^{M}\ali_{m} \} $}
        \Comment{may replace $t$ with ${t}'$ which is aligned to $t$}
      		\State $(\forall m) {d}_{m}' \gets d_{m}$
      		\For{$(t'', m) \in \{({t}'', m): (t'', t) \in \ali_{m}, m \in \{1,\ldots, M\} \}$}
      		    \State ${d}_{m}' \gets {d}_{m}' - |{t}'' - t| + |{t}'' - {t}'|$
      		\EndFor
      		\If{$\sum_m w_m {d}_{m}' < \sum_m w_m d_m$}
      		    \State ($\forall m$)$d_m\gets {d}_{m}'$; $t \gets t'$ \Comment{$t$ move to $t'$ for lower risk}
      		\EndIf
  		\EndFor
  		\EndFor
  	\State \textbf{return} $\unobs, \{d_m, \ali_m\}_{m=1}^{M}$
  	\EndProcedure
  	\Procedure{Delete}{$\unobs, \{\unobs_m, d_m, \ali_m\}_{m=1}^{M}$} \Comment{\bf Delete Phase}
  		 \For{$t$ {\bfseries in} $\unobs$} \Comment{may delete this event}
  		\For{$m=1$ {\bfseries to} $M$}
  		\Comment{update each $d_m$}
          	\If{$\exists t' \in \unobs_m$ {\bfseries and} $(t', t) \in \ali_m$} \Comment{find the only, if any, $t' \in \unobs_m$ that is aligned to $t$}
                    \LineComment{if we delete $t$ and its alignment $(t',t)$, $d_m$ decreases by the alignment cost (because we do not need to align it)}
                    \LineComment{but increases by an insertion cost (because we need to insert an event at $t$ to match $\unobs_m$)}
          	    \State ${d}_m' \gets d_m + C_\text{insert} - |t' - t|$ 
          	\Else \Comment{otherwise, this event has been deleted when matching with $\unobs_m$ }
          	    \State ${d}_m' \gets d_m - C_\text{delete}$ \Comment{we do not need to pay deletion cost when matching with $\unobs_m$ if we do not have this event at $t$ in $\unobs$}
          	\EndIf
          \If{$\sum_{m}w_m {d}_m' < \sum_m w_m d_m$}
  			\State delete $t$ from $\unobs$; $(\forall m)$ delete $(t',t)$ from $\ali_m$; $d_m \gets {d}_m'$
  		    \EndIf
  		\EndFor
  	\EndFor
  \State \textbf{return} $\unobs, \{d_m, \ali_m\}_{m=1}^{M}$
  \EndProcedure
  \Procedure{Insert}{$\unobs, \{\unobs_m, d_m, \ali_m\}_{m=1}^{M}$} \Comment{\bf Insert Phase}
  \Repeat
  \State $t \gets \text{None}$, $\Delta \gets -\infty$
  \For{$t_c \in \bigsqcup_m \unobs_m$ such that $t_c \notin \unobs$} \Comment{may insert $t_c$ if it is not in $\unobs$ yet}
  \For{$m=1$ {\bfseries to} $M$}
  \State $\unobs_m' \gets \{t' : \forall t'', (t'',t') \notin \ali_m \text{ and } t' \in \unobs_m\}$ \Comment{find $t'$ in $\unobs_m$ that is not aligned yet}
  \If{$\unobs'$ is not empty {\bfseries and} $\min_{t'\in \unobs_m'}|t' - t_c | < C_{\text{insert}} + C_{\text{delete}}$} \Comment{if there is any that is close enough to $t_c$}
  \State $d_m' \gets d_m - C_{\text{insert}} +  \min_{t'\in \unobs_m'}|t' - t_c |$; $\ali_m' \gets \ali_m \cup \{ (t_c, t') \}$ \Comment{align the closest one to $t_c$}
  \Else
  \State $d_m' \gets d_m + C_{\text{delete}}$; $\ali_m' \gets \ali_m$
  \EndIf
  \EndFor
  \If{$\sum_m w_m d_m - \sum_m w_m d_m' > \Delta$}
  \State $t \gets t_c$; $\Delta \gets \sum_m w_m d_m - \sum_m w_m d_m'$
  \EndIf
  \EndFor
  \If{$\Delta > 0$}
  \State $\unobs \gets \unobs \sqcup \{t\}$; $(\forall m)\ali_m \gets \ali_m'$; $d_m \gets d_m'$
  \EndIf
  \Until{$\Delta \leq 0$}
  \State \textbf{return} $\unobs, \{d_m, \ali_m\}_{m=1}^{M}$
  \EndProcedure
\end{algorithmic}
\end{algorithm*}

Our approximate consensus decoding algorithm is given as \cref{alg:mbr}.  In the remainder of this section, we prove \cref{thm:comb} from \cref{sec:mbr}, namely:

\setcounter{theorem}{0}
\begin{theorem}
	Given $\{\unobs_m\}_{m=1}^{M}$, if we define $\zunion=\bigsqcup_{m=1}^{M} \unobs_m$, then $\exists \estimate \subseteq \zunion$ such that
	\begin{align*}
	{\textstyle \sum_{m=1}^{M} w_m L( \estimate, \unobs_m ) } 
	= 
	{\textstyle \min_{\unobs \in \seqspace} \sum_{m=1}^{M} w_m L(\unobs, \unobs_m) }
	\end{align*}
	That is to say, there exists one subsequence of $\zunion$ that achieves the minimum Bayes risk.
\end{theorem}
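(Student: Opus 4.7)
My strategy is an exchange argument: start from any minimizer $\estimate^{*}$ of the Bayes risk, and show that whenever $\estimate^{*}$ contains an event at a time $t \notin \zunion$, I can relocate that event to some element of $\zunion$ without increasing the risk (or, if not, reach a contradiction with optimality). Iterating one event at a time, in at most $|\estimate^{*}|$ steps, produces a minimizer contained in $\zunion$, which is exactly the claim.

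\textbf{Key step: local replacement via a weighted median.} Fix a minimizer $\estimate^{*}$ and, for each $m$, fix an optimal alignment $\ali_{m}^{*} \in \mathcal{A}(\estimate^{*}, \unobs_{m})$, so that $L(\estimate^{*}, \unobs_{m}) = D(\estimate^{*}, \unobs_{m}, \ali_{m}^{*})$. Pick any $t \in \estimate^{*}$ with $t \notin \zunion$, and gather the times it is aligned to across the particles, together with their weights: $T_{t} = \{(t_{m}', w_{m}) : (t, t_{m}') \in \ali_{m}^{*}\}$. Holding the other events and the alignments fixed, the contribution of $t$ to $\sum_{m} w_{m} D(\estimate^{*}, \unobs_{m}, \ali_{m}^{*})$ is $g(s) = \sum_{(t_{m}', w_{m}) \in T_{t}} w_{m} |s - t_{m}'| + C_{\text{delete}} \sum_{m : t \text{ unaligned in } \ali_{m}^{*}} w_{m}$, viewed as a function of the position $s$ one chooses for this event. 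If $T_{t}$ is empty, then simply deleting $t$ and dropping it from every $\ali_{m}^{*}$ strictly decreases each $D(\estimate^{*}, \unobs_{m}, \ali_{m}^{*})$ by $C_{\text{delete}} > 0$, hence strictly decreases $\sum_{m} w_{m} L(\estimate, \unobs_{m})$ (since $L \leq D$), contradicting optimality of $\estimate^{*}$. Otherwise $g$ is a piecewise-linear convex function of $s$ with breakpoints at the $t_{m}'$ values, so it attains its minimum at a weighted median, which can always be chosen to be one of those breakpoints and therefore lies in $\zunion$. Let $t^{\star}$ denote such a weighted median and let $\estimate'$ be $\estimate^{*}$ with $t$ replaced by $t^{\star}$.

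\textbf{Closing the loop.} The alignments obtained from $\{\ali_{m}^{*}\}$ by relabeling every edge incident to $t$ as incident to $t^{\star}$ are valid alignments between $\estimate'$ and $\unobs_{m}$, with total weighted cost $\sum_{m} w_{m} D(\estimate', \unobs_{m}, \ali_{m}^{*}) \leq \sum_{m} w_{m} D(\estimate^{*}, \unobs_{m}, \ali_{m}^{*}) = \sum_{m} w_{m} L(\estimate^{*}, \unobs_{m})$, where the inequality is the defining property of the weighted median applied to $g$. Because $L(\estimate', \unobs_{m}) \leq D(\estimate', \unobs_{m}, \ali_{m}^{*})$, the Bayes risk of $\estimate'$ is no larger than that of $\estimate^{*}$; by optimality of $\estimate^{*}$ it must equal it. So $\estimate'$ is another minimizer, with one more event located in $\zunion$ than $\estimate^{*}$ had. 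Iterating terminates in at most $|\estimate^{*}|$ steps and yields the desired minimizer contained in $\zunion$.

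\textbf{Anticipated obstacle.} The only subtle point is that relocating $t$ can in principle \emph{change} the optimal alignment from $\estimate'$ to some $\unobs_{m}$; I sidestep this cleanly by always upper-bounding $L(\estimate', \unobs_{m})$ by the cost $D(\estimate', \unobs_{m}, \ali_{m}^{*})$ under the explicitly relabeled alignment (which may be suboptimal but remains valid). A secondary nuance is that $\zunion = \bigsqcup_{m} \unobs_{m}$ is a disjoint union (multiset), so ``$\estimate \subseteq \zunion$'' is multiset containment; this is consistent with the way $\estimate\supk$ is handled throughout \cref{sec:mbr} and ensures there is no issue if several events of $\estimate^{*}$ end up relocated to the same time in $\zunion$.
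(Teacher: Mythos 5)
Your proof is correct and follows essentially the same route as the paper's: take an optimal decode, fix optimal alignments, perturb a single event whose time is not in $\zunion$ while holding the alignments fixed, and use $L \le D$ together with optimality to conclude the Bayes risk cannot change, then iterate. The only difference is the local step---you jump the event to a weighted median of its aligned partners (handling the everywhere-unaligned case by a strict-improvement contradiction), whereas the paper slides it within the gap between adjacent points of $\zunion$, observing that the fixed-alignment cost is linear there and hence constant; both arguments are valid and yield the same conclusion.
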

\begin{proof}
Here we assume that there is only one type of event. Since the distances of different types of events are calculated separately, our conclusion is easy to be extended to the general case.

Suppose $\estimate$ is an optimal decode, that is,
$$\sum_{m=1}^{M} w_m \loss(\unobs_m, \estimate)=\min_{\unobs \in \seqspace}\sum_{m=1}^{M} w_m \loss(\unobs_m, \unobs).$$
If $\estimate \subseteq \zunion$, the proof is done. If not, we can choose some $t_i \notin \zunion$.  Let $t_l = \max \{t \in \zunion: t < t_i\}$ and $t_r = \min \{t \in \zunion: t > t_i\}$.  (These sets are nonempty because $\zunion$ always contains the endpoints 0 and $T$.)
We will show that if we move $t_i$ around, as long as $t_i \in [t_l, t_r]$, the weighted optimal transport distance, i.e. $\sum_{m=1}^{M} w_m \loss(\unobs_m, \estimate)$, will neither increase nor decrease.

Suppose $\alihat = \argmin_{\ali_m \in \mathcal{A}(\unobs_m, \estimate)}\sum_{m=1}^{M} w_m D(\unobs_m, \estimate, \ali_m)$. Let's use $r(t)$ to indicate the weighted transport distance of $\estimate$ with fixed alignment if we move $t_i$ to $t$, that is,
$$r(t) \defeq \sum_{m=1}^{M} w_m D(\unobs_m, \estimate(t), \alihat),$$
where $\estimate(t)$ is the sequence $\estimate$ with $t_i$ moved to $t$.
Because $\estimate(t_i)$ is an optimal decode, and $\alihat$ is the optimal alignment for $\estimate(t_i)$, we should have
$$r(t_i) = \min_{t}r(t).$$
Note that the transport distance is comprised of three parts: deletion, insertion and alignment costs. Since every $\alihat$ is fixed, if we change $t$, only the alignment cost that related to token $t$ will affect $r(t)$. This part of $r(t)$ is linear to $t$, since we have a constraint $t \in [t_l, t_r]$, which guarantees that it will not cross any other tokens in $\zunion$.

Since $r(t)$ is linear to $t\in[r_l, t_r]$ and $r(t)$ gets minimized at $t_i \in (t_l, t_r)$, we conclude that
$$r(t) = r(t_i)=\text{Const}, \forall t \in [t_l, t_r].$$

Since $r(t)$ is the upper bound of the weighted optimal transport distance $\sum_{m=1}^{M} w_m \loss(\unobs_m, \estimate(t))$, which also gets the same minimal value at $t_i \in (t_l, t_r)$ as $r(t)$, we could conclude that $\forall t\in[t_l,t_r]$:
$$\sum_{m=1}^{M} w_m \loss(\unobs_m, \estimate(t)) = \sum_{m=1}^{M} w_m \loss(\unobs_m, \estimate(t_i)) = \text{Const}$$

Therefore we could move token $t_i$ to either $t_l$ or $t_r$ without increasing the Bayes risk. We could do this movement for each $t_i \notin \zunion$ to get a new decode $\hat{\unobs} \subseteq \zunion$, which is also an optimal decode.

\end{proof}

\section{Experimental Details}\label{sec:exp_details}\label{sec:data_details}
In this section, we elaborate on the details of data generation, processing, and experimental results.

In all of our experiments, the distribution $\model$ is trained on the complete (uncensored) version of the training data.  The system is then asked to complete the incomplete (censored) version of the test (or dev) data.  For particle smoothing, the proposal distribution is trained using both the complete and incomplete versions of the training data, as explained at the end of \cref{sec:train}. We used the Adam algorithm with its default settings \citep{kingma-15}. Adam is a stochastic gradient optimization algorithm that continually adjusts the learning rate in each dimension based on adaptive estimates of low-order moments.
Each training example for Adam is a complete event stream $\comp$ over some time interval $[0,T)$.  We stop training early when we detect that log-likelihood has stopped increasing on the held-out development dataset.  We do no other regularization.

\subsection{Dataset Statistics}\label{sec:data_stats}
\Cref{tab:stats_dataset} shows statistics about each dataset that we use in this paper.
\begin{table*}[t]
\begin{center}
\begin{small}
\begin{sc}
\begin{tabularx}{1.00\textwidth}{l *{1}{S}*{3}{R}*{3}{S}}
\toprule
Dataset & \multicolumn{1}{r}{$K$} & \multicolumn{3}{c}{\# of Event Tokens} & \multicolumn{3}{c}{Sequence Length} \\
\cmidrule(lr){3-8}
  &  & Train & Dev & Test & Min & Mean & Max \\
\midrule
Synthetic & $4$ & $\approx 74967$ & $\approx 7513$ & $\approx 7507$ & $10$ & $\approx 15$ & $20$ \\
NYCTaxi & $10$ & $157916$ & $15826$ & $15808$ & $22$ & $32$ & $38$ \\
Elevator & 10 & $313043$ & $31304$ & $31206$ & $235$ & $313$ & $370$ \\
\bottomrule
\end{tabularx}
\end{sc}
\end{small}
\end{center}
 \caption{Statistics of each dataset.  We write ``$\approx N$'' to indicate that $N$ is the average value over multiple datasets of one kind (synthetic); the variance is small in each such case.}
\label{tab:stats_dataset}
\end{table*}

\subsection{Training Details}\label{sec:training}
We used single-layer LSTMs \citep{hochreiter-97-lstm}, selected the number $D$ of hidden nodes of the left-to-right LSTM, and then $D'$ of the right-to-left one from a small set $\{16, 32, 64, 128, 256, 512, 1024\}$ based on the performance on the dev set of each dataset. The best-performing $(D,D')$ pairs are $(256,128)$ on {Synthetic}, $(256, 256)$ on {Elevator}
 $(256,256)$ on {NYC Taxi}, but we empirically found that the model performance is robust to these hyperparameters.
 For the chosen $(D,D')$ pair on each dataset, we selected $\beta$ based on the performance on the dev set, and $\beta=1.0$ yields the best performance across all the datasets we use. 
For learning, we used Adam with its default settings \citep{kingma-15}.

Our Monte Carlo integral estimates are in fact unbiased (\cref{sec:integral}).
As a result, our stochastic gradient estimate is also unbiased, as required (assuming that the complete data is distributed according to $\model$). 
Why? Since $\beta=1$, our stochastic gradient is simply \cref{eqn:in_kl}. 
No particle filtering or smoothing is used to estimate \cref{eqn:in_kl}, because we train it using complete data, as explained in the last long paragraph of \cref{sec:train}. 
The only randomness is the integral over $[0,T)$ (similar to the one in \cref{eqn:logpmodel}) that is required to estimate the term $\log \proposal$ in \cref{eqn:in_kl}: as just noted, this integral estimate is unbiased.

It is true that if $\beta < 1$, we would compute the exclusive KL gradient using particle filtering or smoothing with $M$ particles, and this would introduce bias in the gradient.  Nonetheless, since the bias vanishes as $M\rightarrow \infty$, it would be possible to restore a theoretical convergence guarantee by increasing $M$ at an appropriate rate as SGD proceeds \citep[page 107]{spall-05-opt}.\footnote{SGD methods succeed, both theoretically and practically, with even high-variance estimates of the batch gradient (e.g., where each stochastic estimate is derived from a single randomly chosen training example).  Thus, one should be fine with a noisy sampling-based gradient as long as it is unbiased.}

\subsection{Details of the Synthetic Datasets}\label{sec:syn_details}
Each of the ten neural Hawkes processes has its parameters sampled from $\Uniform[-1.0, 1.0]$. Then a set of event sequences is drawn from each of them via the plain vanilla thinning algorithm \citep{mei-17-neuralhawkes}.
For each of the ten synthetic datasets, we took $K=4$ as the number of event types. To draw each event sequence, we first chose the sequence length $I$ (number of event tokens) uniformly from $\{11, 12, \ldots, 20\}$ and then used the thinning algorithm to sample the first $I$ events over the interval $[0,\infty)$. For subsequent training or testing, we treated this sequence (appropriately) as the complete set of events observed on the interval $[0,T)$ where $T=t_I$, the time of the last generated event. 

We generate $5000$, $500$ and $500$ sequences for each training, dev, and test set respectively.
For the missingness mechanism: in the deterministic settings, we censor all events of type 3 and 4---in other words, we set $\pmi_1=\pmi_2 = 0$ and $\pmi_3 = \pmi_4 = 1$; in the stochastic settings, we set $\pmi_k=0.5$ for all $k$. 

\subsection{Elevator System Dataset Details}\label{sec:elevator_details}
We examined our method in a simulated 5-floor building with 2 elevator cars.
During a typical afternoon down-peak rush hour (when passengers go from floor-2,3,4,5 down to the lobby), elevator cars travel to each floor and pick up passengers that have (stochastically) arrived there according to a traffic profile \citep{Bao-94-elevator}. Each car will also avoid floors that already are or will soon be taken care of by the other.
Having observed when and where car-1 has stopped (to pick up or drop off passengers) over this hour, we are interested in when and where car-2 has stopped during the same time period.
In this dataset, each event type is a tuple of (car number, floor number) so there are $K=10$ in total in this simulated 5-floor building with 2 elevator cars. 

Passenger arrivals at each floor are
assumed to follow a inhomogeneous Poisson process, with arrival rates that vary during the course of the day. The simulations we use follows a human-recorded traffic profile \citep{Bao-94-elevator} which dictates arrival rates for
every 5-minute interval during a typical afternoon down-peak rush hour. \cref{tab:profile} shows the mean number of passengers (who are going to the lobby) arriving at floor-2,3,4,5 during each 5-minute interval.

We simulated the elevator behavior following a naive baseline strategy documented in \citet{crites-96-elevator}.\footnote{We rebuilt the system in Python following the original Fortran code of \citet{crites-96-elevator}.}
In details, each car has a small set of primitive actions. If it is stopped at a floor, it must either ``move up'' or ``move down''. If it is in motion between floors, it must either ``stop at the next floor'' or ``continue past the next floor''. Due to passenger expectations, there are two constraints on these actions: a car cannot pass a floor if a passenger wants to get off there and cannot turn until it has serviced all the car buttons in its current direction. Three additional action constraints were made in an attempt to build in some primitive prior knowledge: 1) a car cannot stop at a floor unless someone wants to get on or off there; 2) it cannot stop to pick up passengers at a floor if another car is already stopped there; 3) given a choice between moving up and down, it should prefer moving up (since the down-peak traffic tends to push the cars toward the bottom of the building). Because of this last constraint, the only real choices left to each car are the stop and continue actions, and the baseline strategy always chooses to continue. The actions of the elevator cars are executed asynchronously since they may take different amounts of time to complete.

We repeated the (one-hour) simulation 700 times to collect the event sequences, each of which has around 300 time-stamped records of which car stops at which floor. We randomly sampled disjoint train, dev and test sets with 500, 100 and 100 sequences respectively.
\begin{table*}[t]
\begin{center}
\begin{small}
\begin{sc}
\begin{tabularx}{0.99\textwidth}{l *{12}{R}}
\toprule
Start Time (min) & $00$ & $05$ & $10$ & $15$ & $20$ & $25$ & $30$ & $35$ & $40$ & $45$ & $50$ & $55$ \\
Mean \# Passenger & $1$ & $2$ & $4$ & $4$ & $18$ & $12$ & $8$ & $7$ & $18$ & $5$ & $3$ & $2$ \\
\bottomrule
\end{tabularx}
\end{sc}
\end{small}
\end{center}
\caption{The Down-Peak Traffic Profile}
\label{tab:profile}
\end{table*}

For the missingness mechanism: in the deterministic settings, we set $\pmi_k=0$ for $k=1,\ldots, 5$ and $\pmi_k=1$ for $k=6,\ldots,10$ (meaning that the events (of arriving at floor 1, 2, \ldots, 5) of car 1 are all observed, but those of car 2 are not); in the stochastic settings, we set $\pmi_k=0.5$ for all $k$. 

\subsection{New York City Taxi Dataset Details}\label{sec:taxi_details}
The New York City Taxi dataset (\cref{sec:taxi}) includes 189,550 taxi pick-up and drop-off records in the city of New York in 2013. Each record has its medallion ID, driver license and time stamp. Each combination of medallion ID and driver license naturally forms a sequence of time-stamped pick-up and drop-off events. Following the processing recipe of previous work \citep{du-16-recurrent}, we construct shorter sequences by breaking each long sequence wherever the temporal gap between a drop-off event and its following pick-up event is larger than six hours. Then the left boundary of this gap is treated as the \eos of the sequence before it, while the right boundary is set as the \bos of the following sequence.

We randomly sampled a month from 2013 and then randomly sampled disjoint train, dev and test sets with 5000, 500 and 500 sequences respectively from that month.

In this dataset, each
event type is a tuple of (location, action).  The location
is one of the 5 boroughs $\{$Manhattan,
Brooklyn, Queens, The Bronx, Staten Island$\}$.
The action can be
either pick-up or drop-off. Thus, there are $K=5\times 2=10$ event types in total.  

For the missingness mechanism: in the deterministic settings, we set $\pmi_k=0$ for $k=1,\ldots, 5$ and $\pmi_k=1$ for $k=6,\ldots,10$ (which means that all drop-off events but no pick-up events are observed); in the stochastic settings, we set $\pmi_k=0.5$ for all $k$. 

\subsection{Experiments with Deterministic Missingness Mechanisms}\label{sec:extra-exp}

We show our experimental results for the deterministic missingness mechanisms in \cref{fig:cloud,fig:eval}.

\begin{figure}[t]
	\begin{center}
		\begin{subfigure}[b]{0.31\linewidth}
			\includegraphics[width=\linewidth]{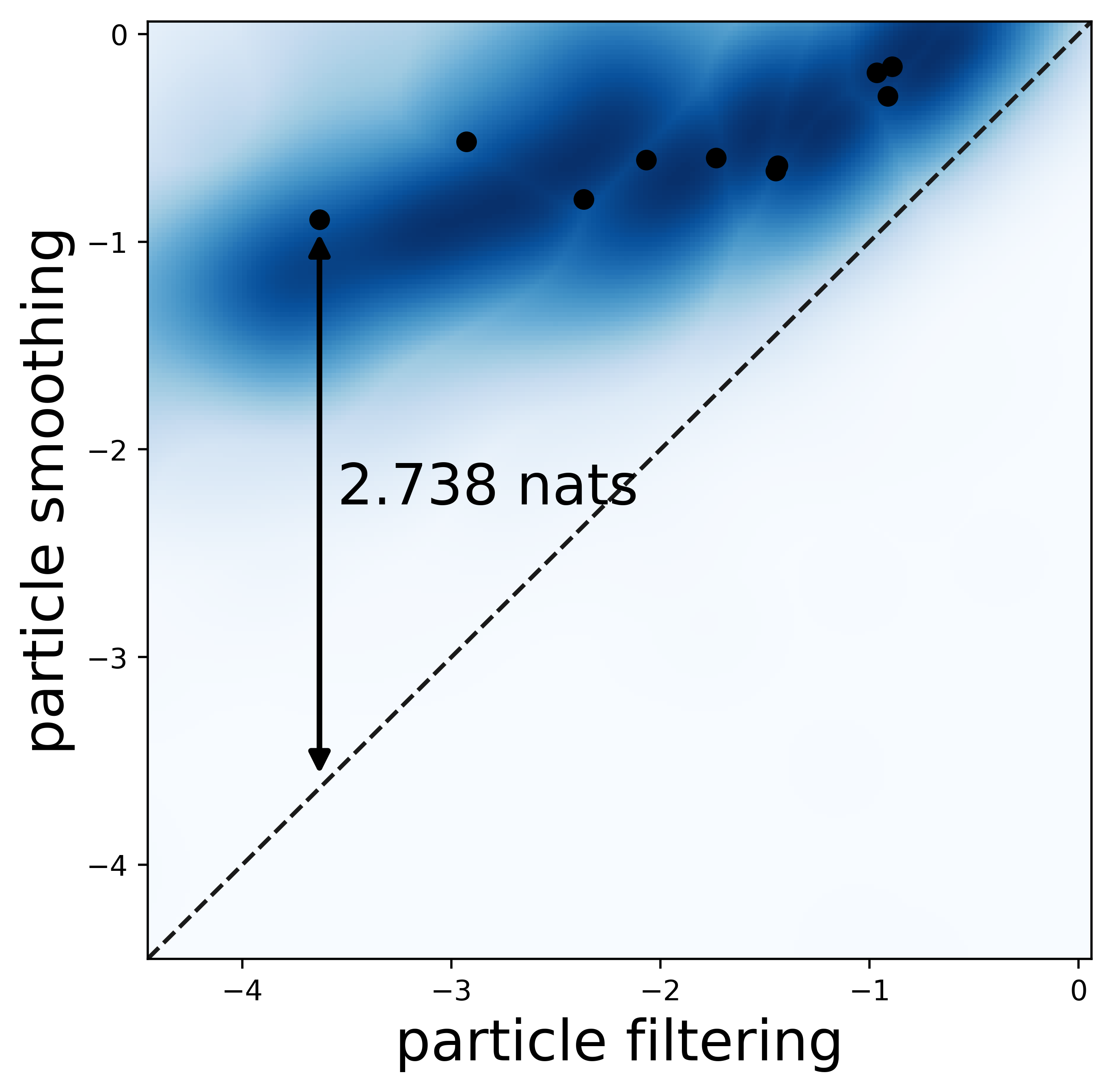}
			\caption{Synthetic}\label{fig:nhpcloud}
		\end{subfigure}
		~
		\begin{subfigure}[b]{0.31\linewidth}
			\includegraphics[width=\linewidth]{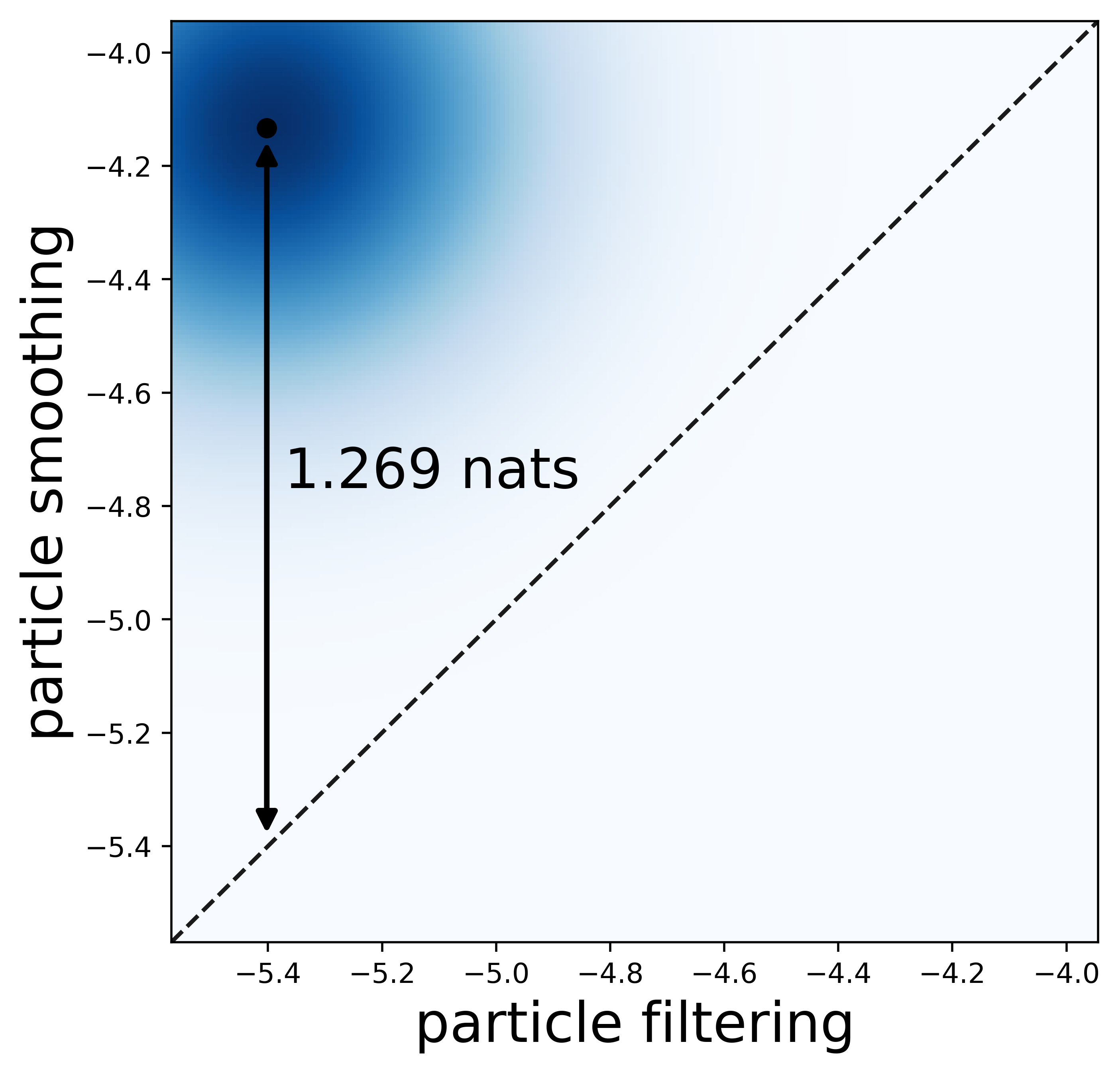}
			\caption{Elevator System}\label{fig:elevatorcloud}
		\end{subfigure}
		~
		\begin{subfigure}[b]{0.31\linewidth}
			\includegraphics[width=\linewidth]{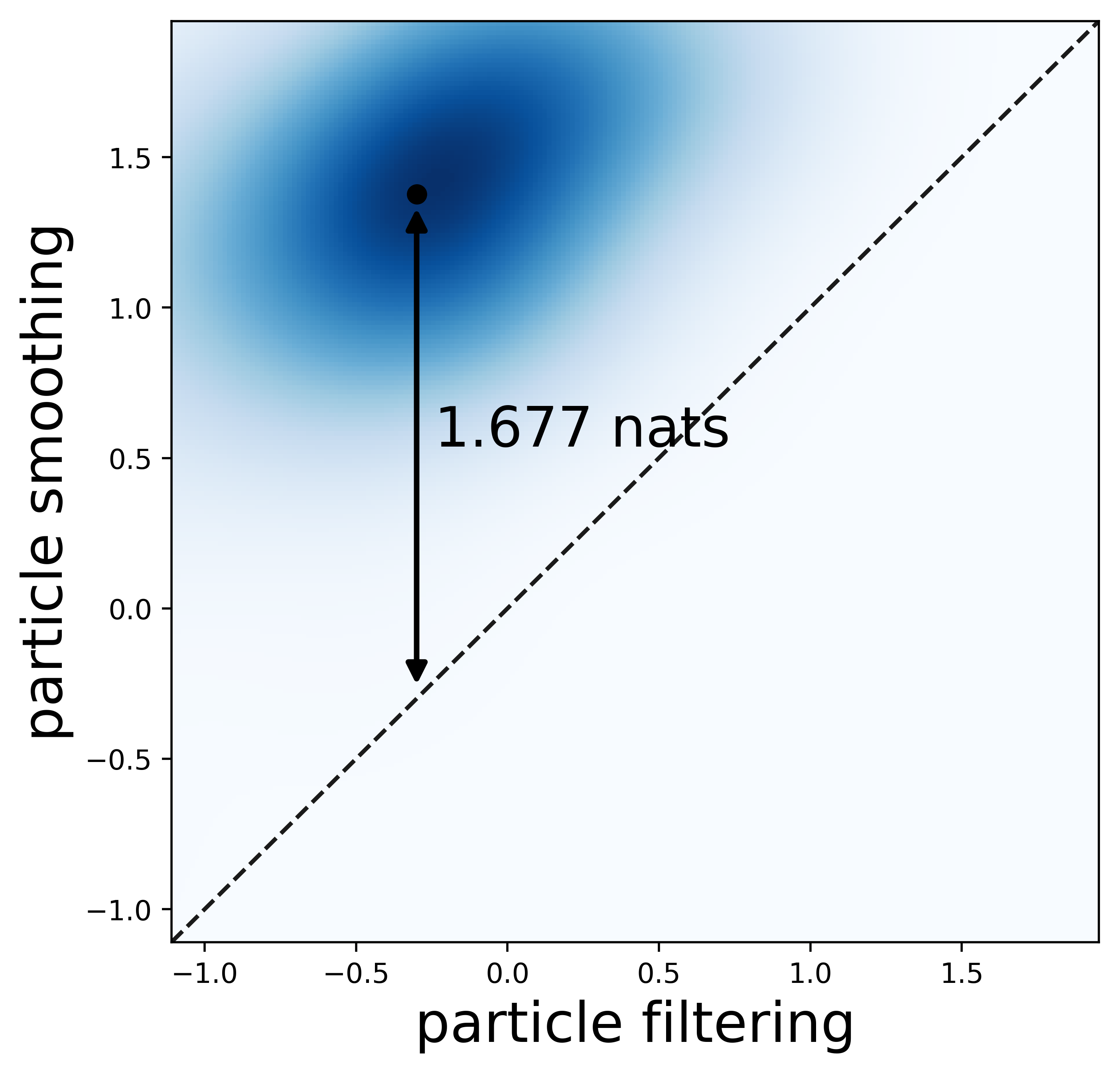}
			\caption{NYC Taxi}\label{fig:taxicloud}
		\end{subfigure}
		\caption{Scatterplots with a deterministic missingness mechanism. Again, the method works, with very similar qualitative behavior to \cref{fig:cloud-5}.
		}\label{fig:cloud}
	\end{center}
\end{figure}

\begin{figure}[t]
	\begin{center}
		\begin{subfigure}[b]{0.46\linewidth}
			\includegraphics[width=\linewidth]{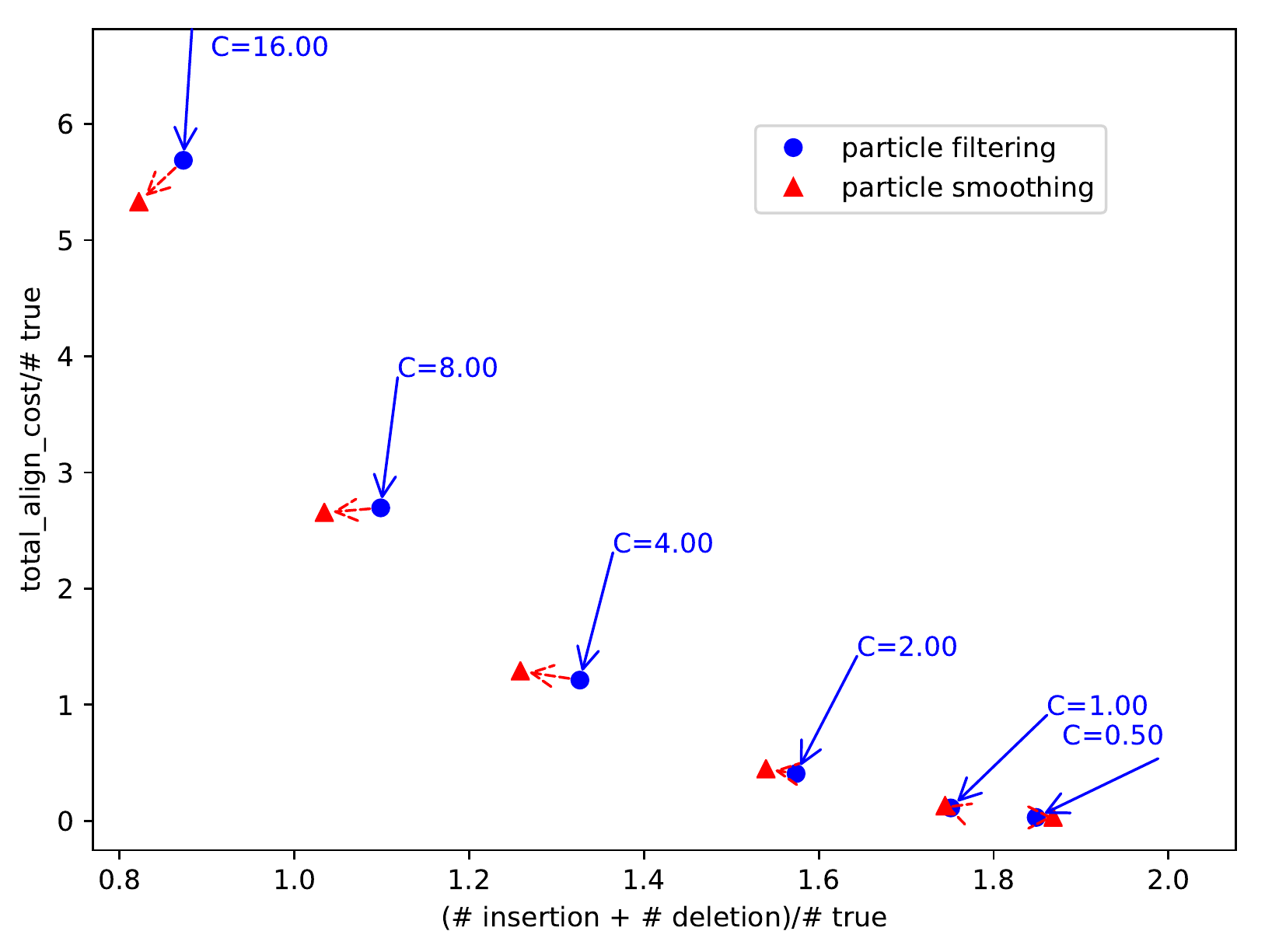}
			\caption{Elevator System}\label{fig:elevator}
		\end{subfigure}
		~
		\begin{subfigure}[b]{0.46\linewidth}
			\includegraphics[width=\linewidth]{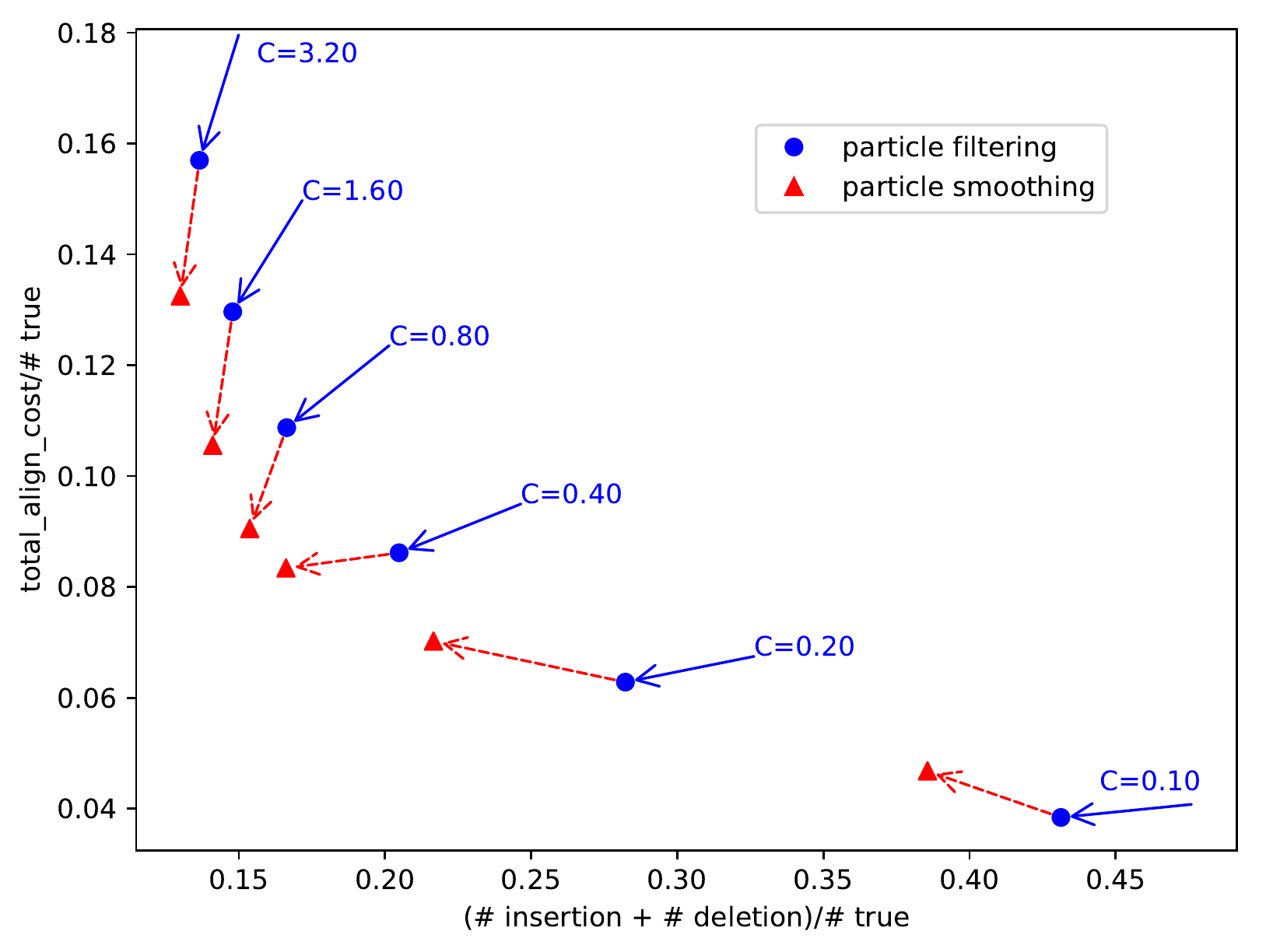}
			\caption{NYC Taxi}\label{fig:taxi}
		\end{subfigure}
		\caption{Optimal transport distance results with a deterministic missingness mechanism. Again, the method works, with very similar qualitative behavior to \cref{fig:eval-5}.
		}\label{fig:eval}
	\end{center}
\end{figure}

\subsection{Sensitivity Experiment Details}\label{sec:sensitivity_details}

\Cref{fig:diff_rhos} displays the optimal transport distance with various values of $\pmi$: our particle smoothing method consistently outperforms the filtering baseline.

\begin{figure}[!ht]
	\begin{center}
		\begin{subfigure}[b]{0.48\linewidth}
			\includegraphics[width=0.99\linewidth]{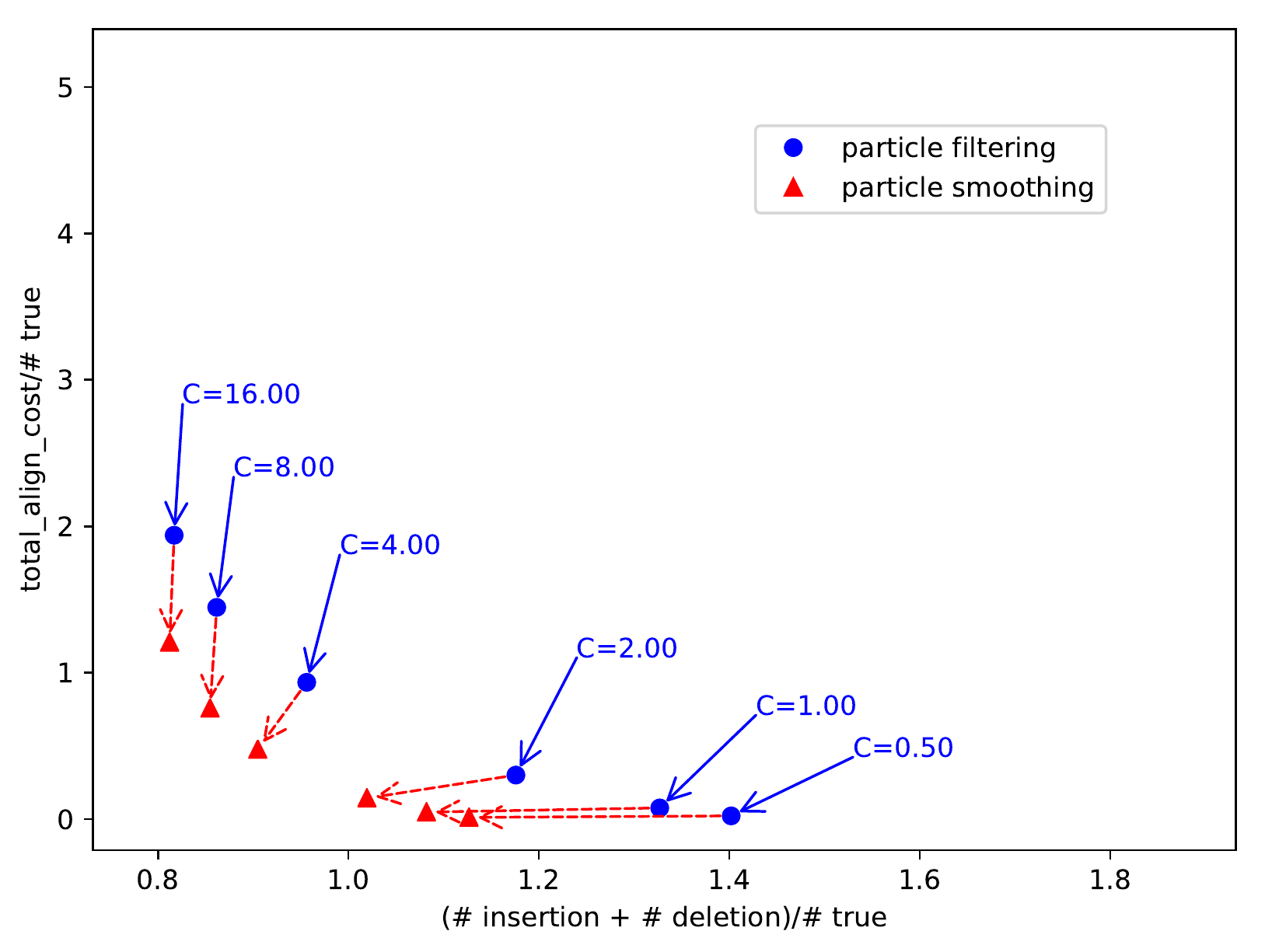}
			\includegraphics[width=0.99\linewidth]{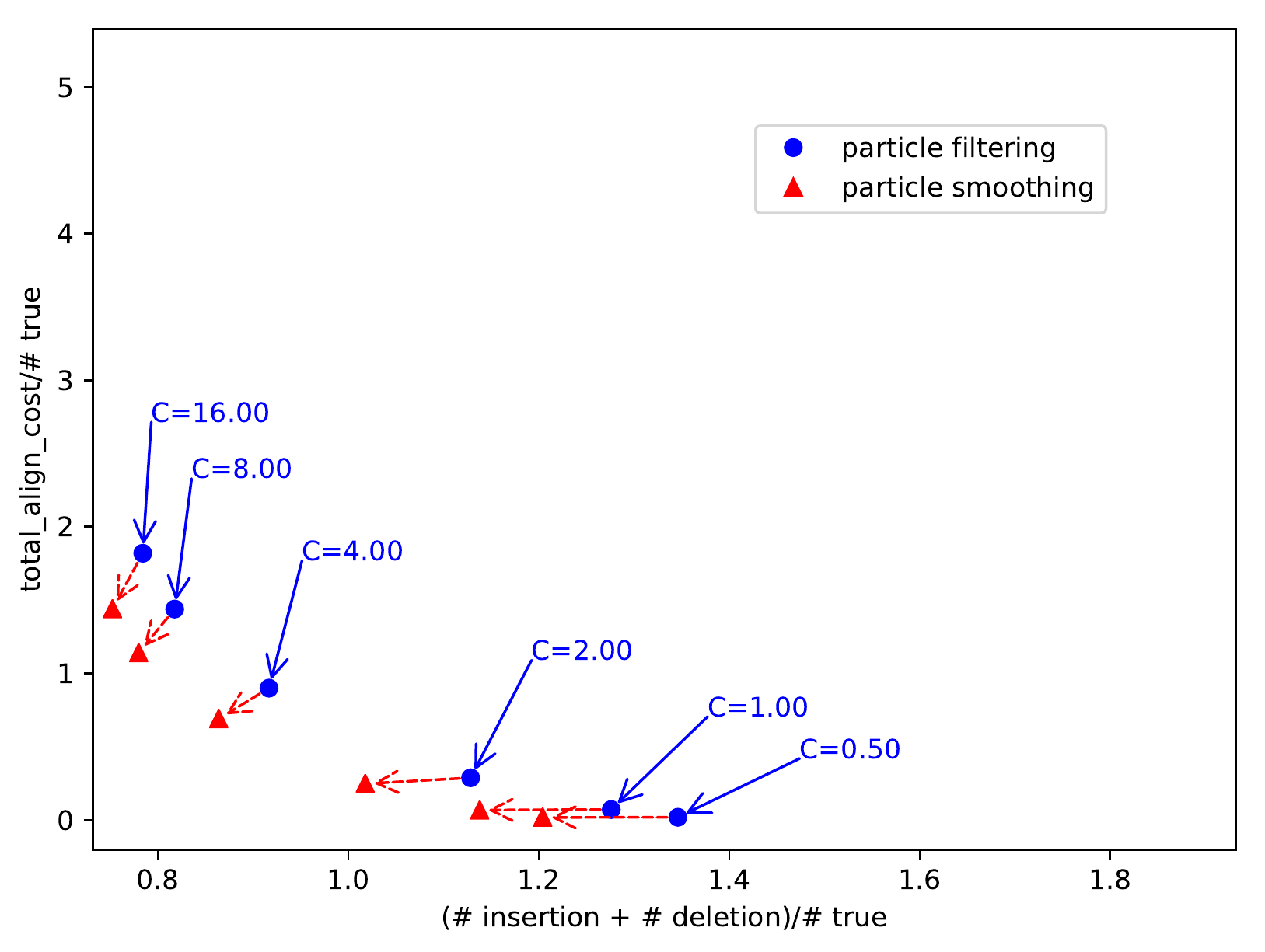}
			\includegraphics[width=0.99\linewidth]{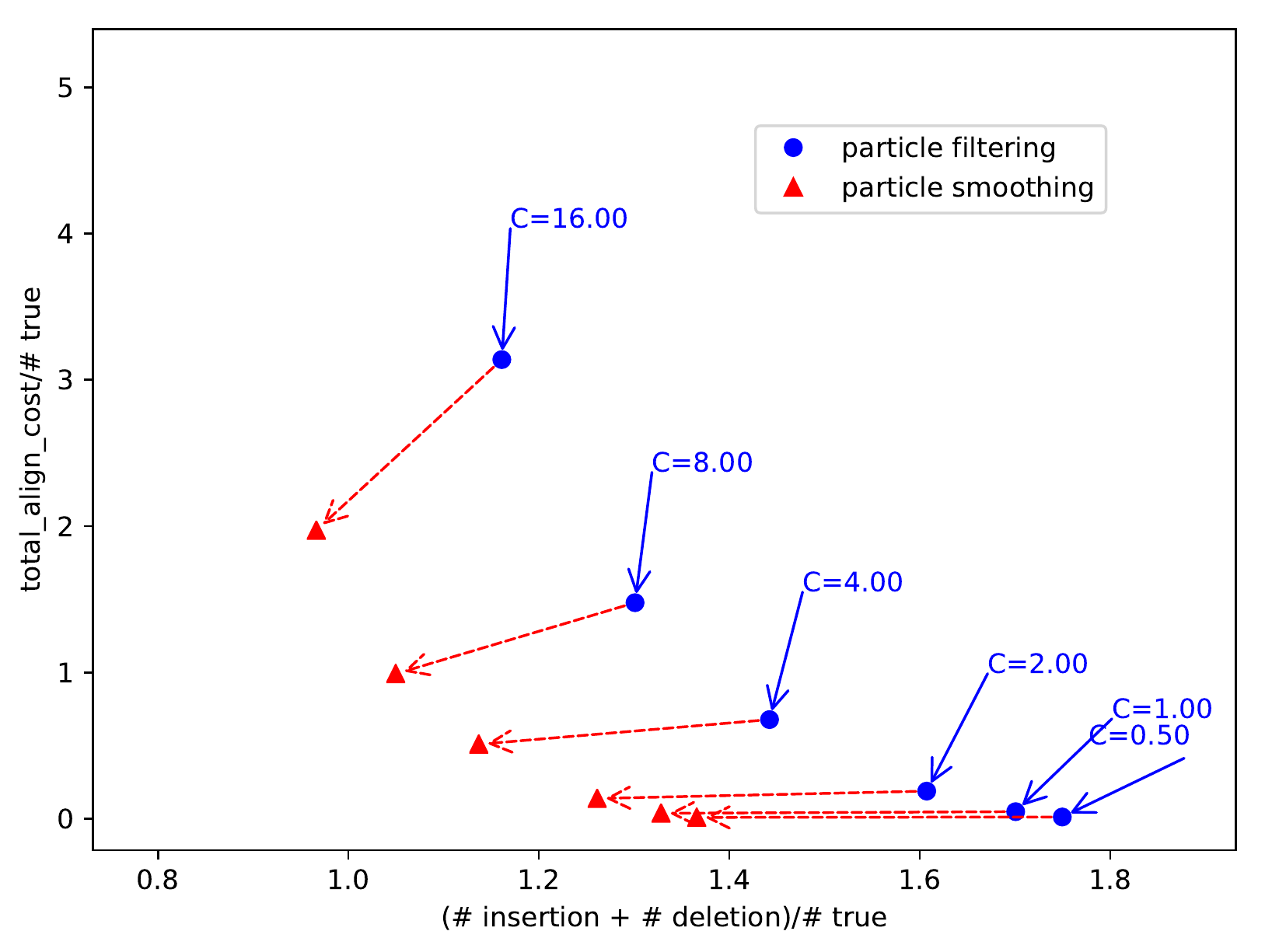}
			\includegraphics[width=0.99\linewidth]{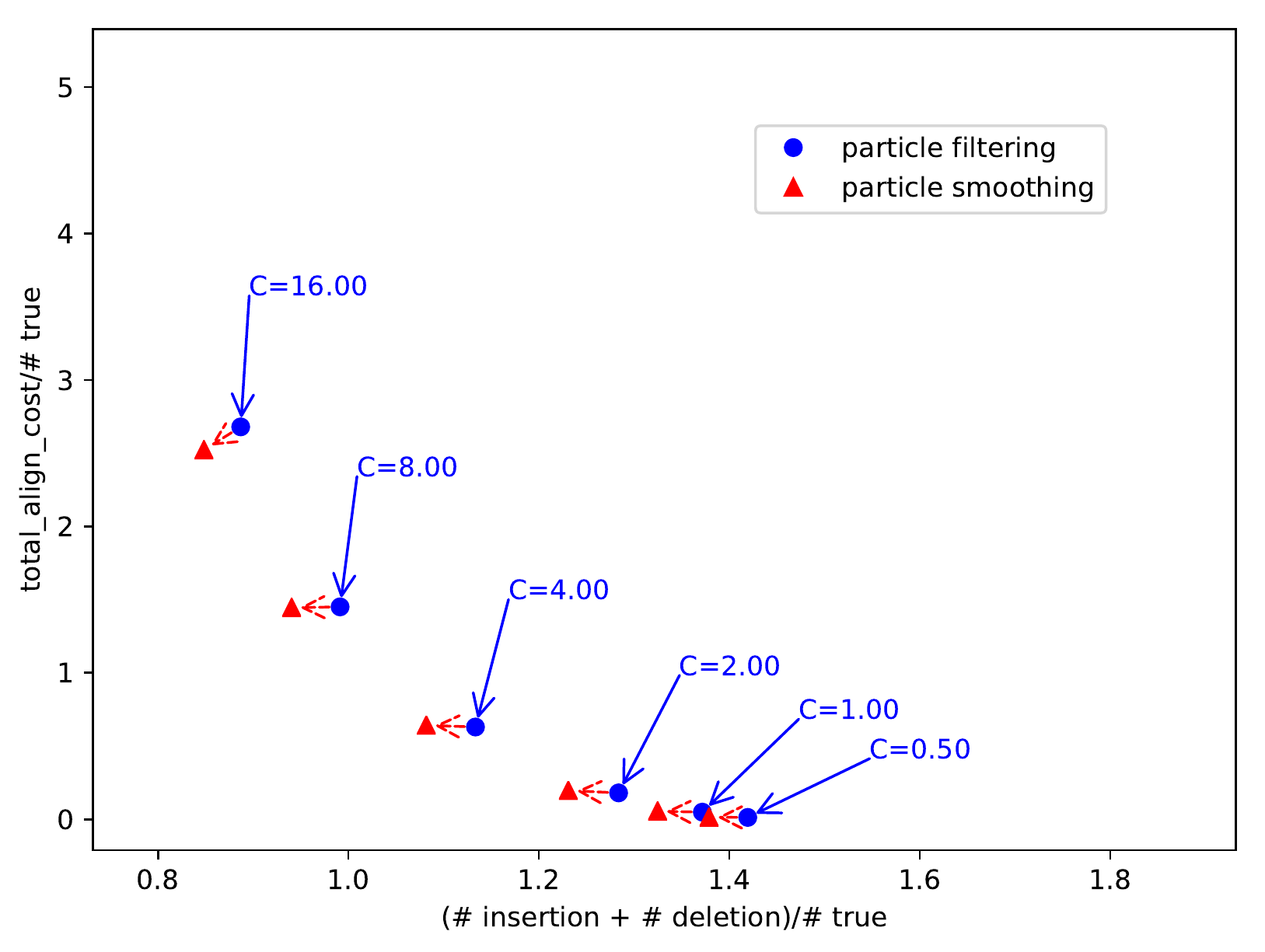}
			\includegraphics[width=0.99\linewidth]{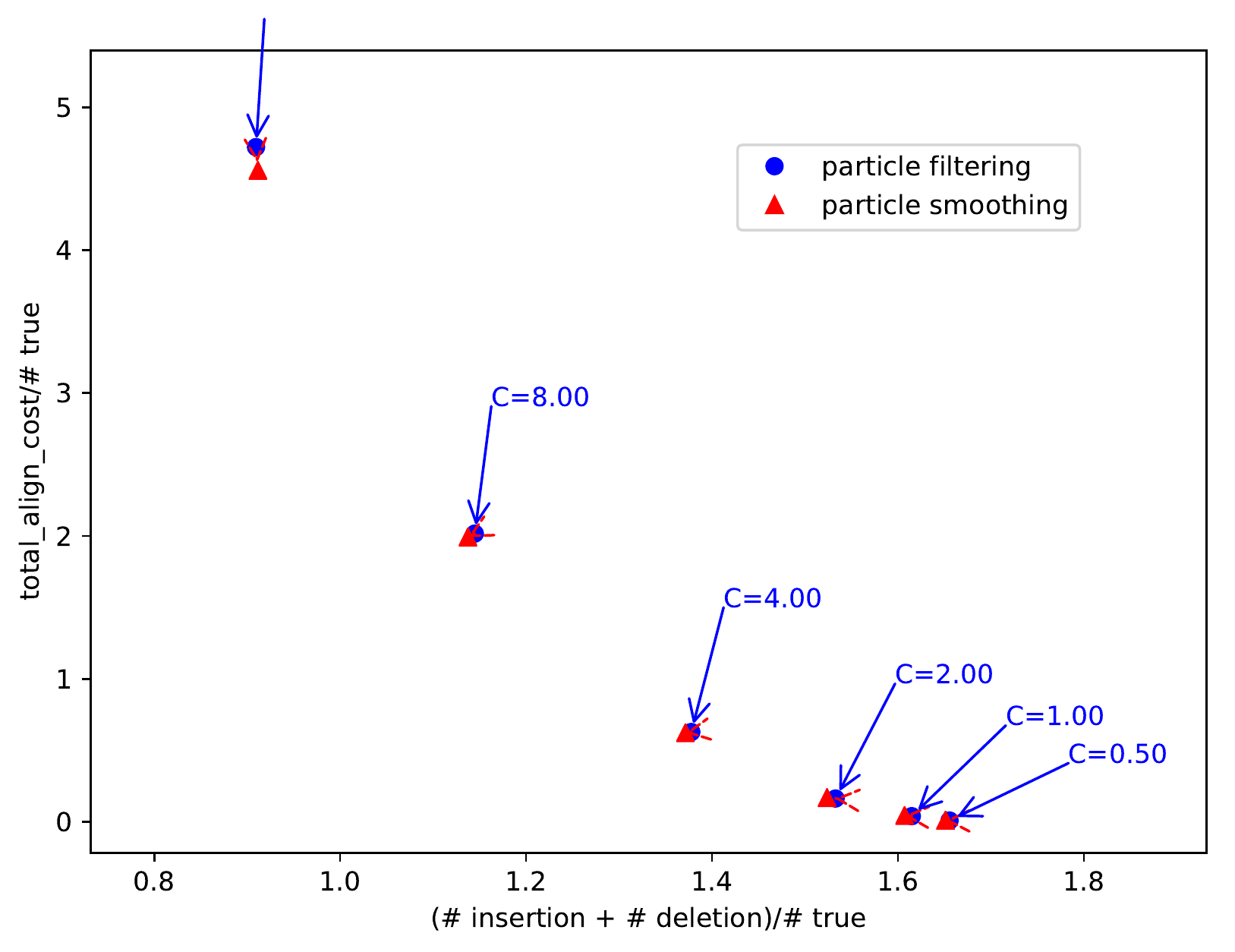}
			\caption{Elevator System}\label{fig:rhos_elevator}
		\end{subfigure}
		\begin{subfigure}[b]{0.48\linewidth}
			\includegraphics[width=0.99\linewidth]{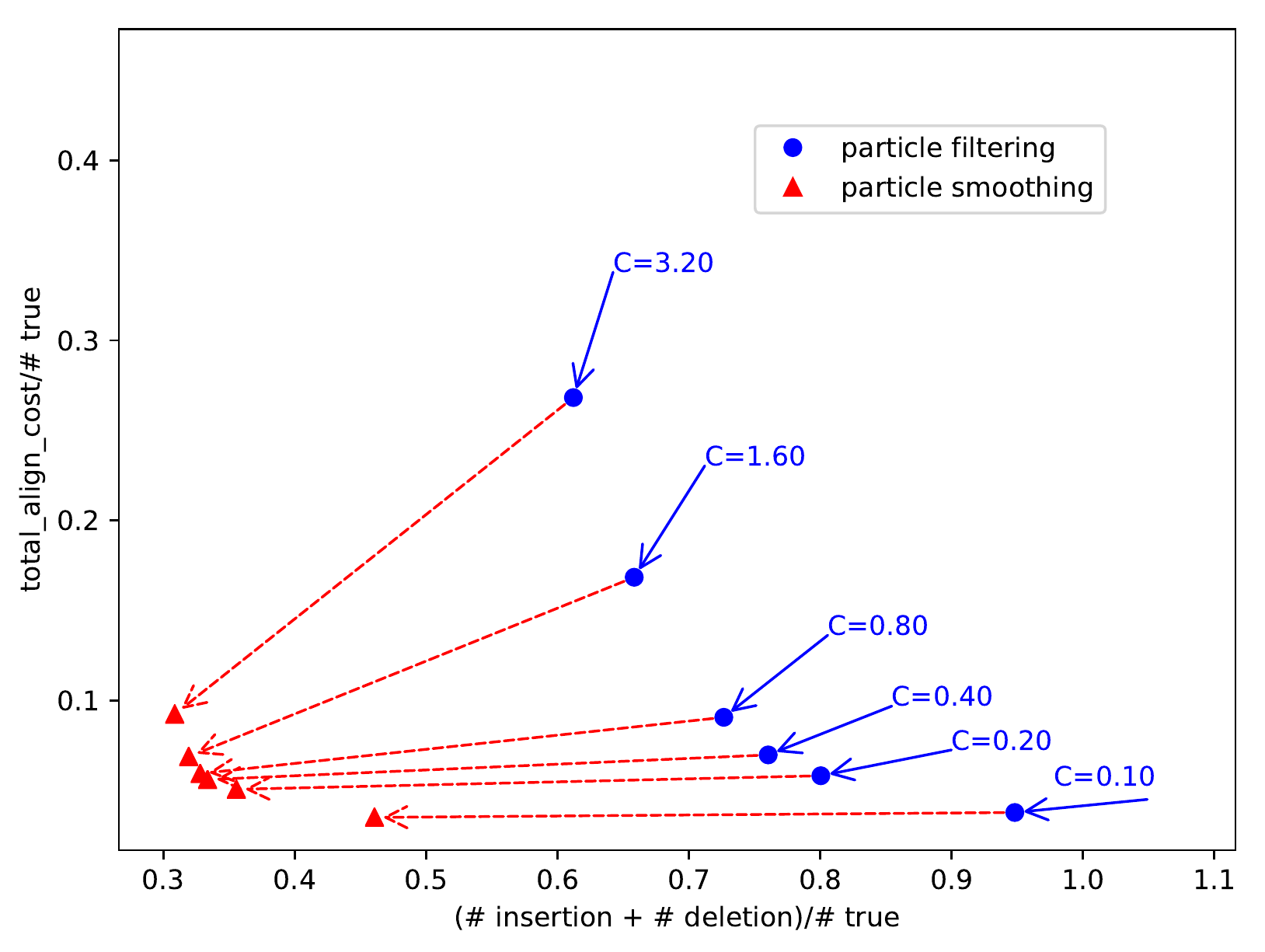}
			\includegraphics[width=0.99\linewidth]{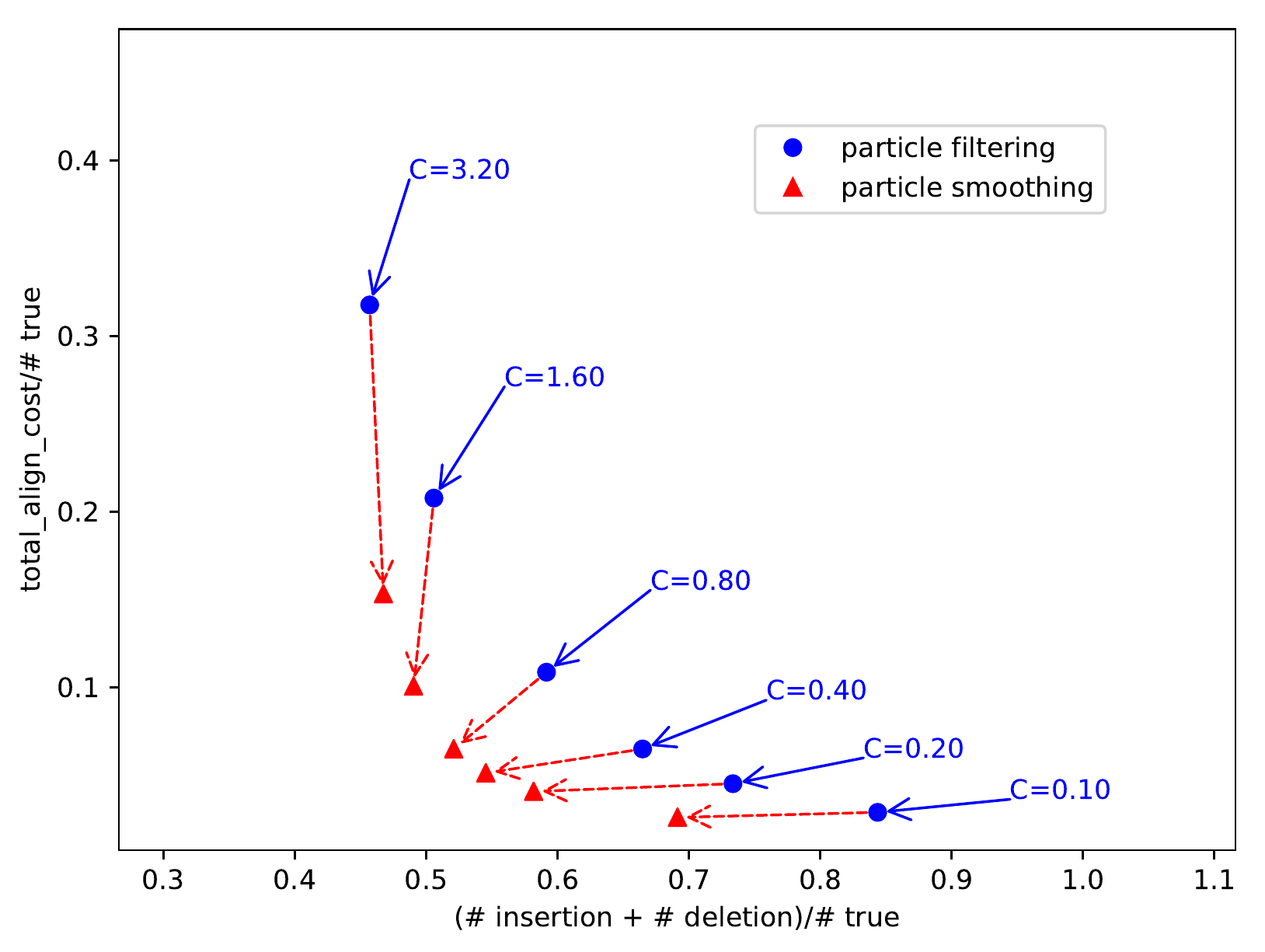}
			\includegraphics[width=0.99\linewidth]{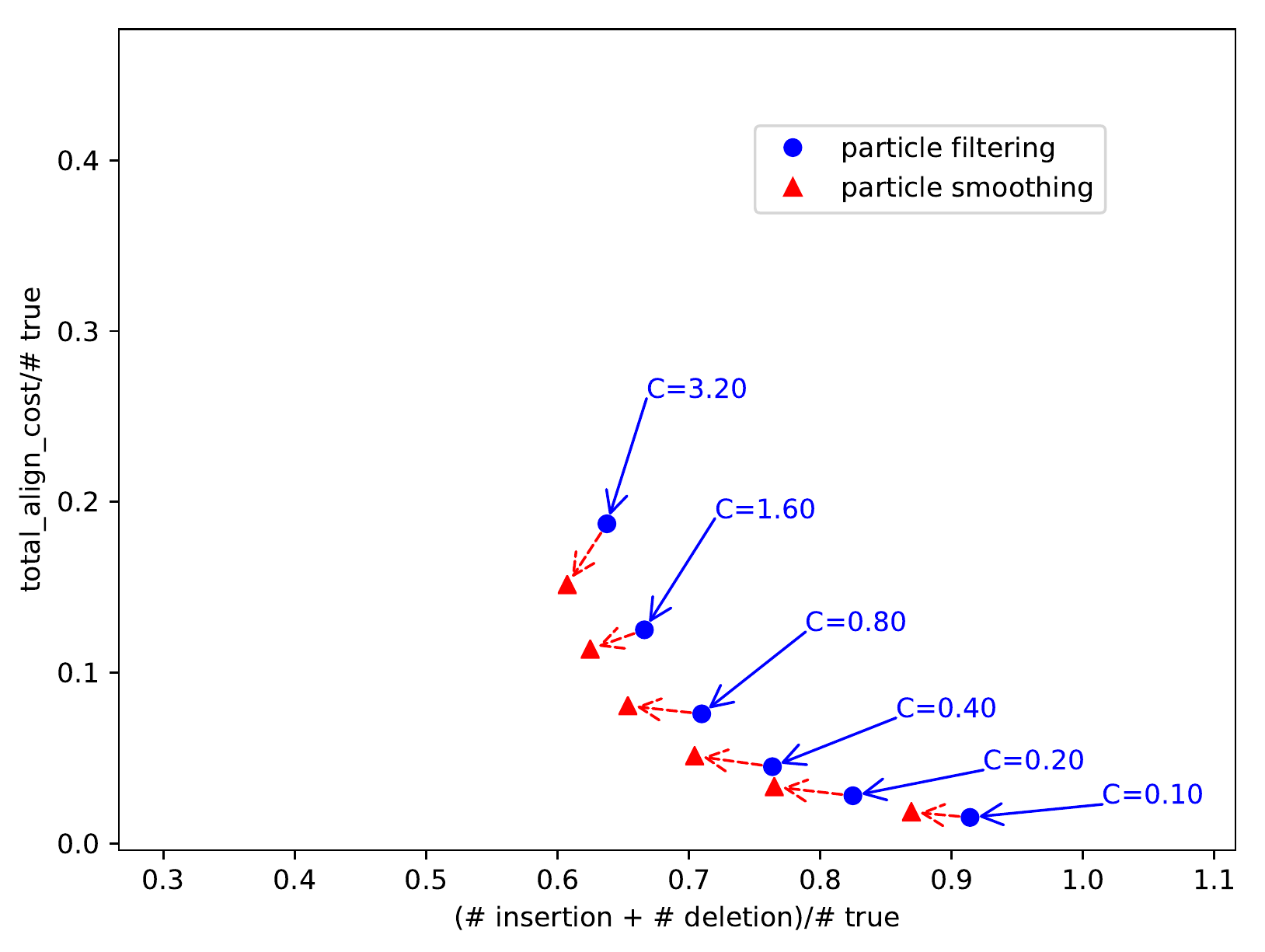}
			\includegraphics[width=0.99\linewidth]{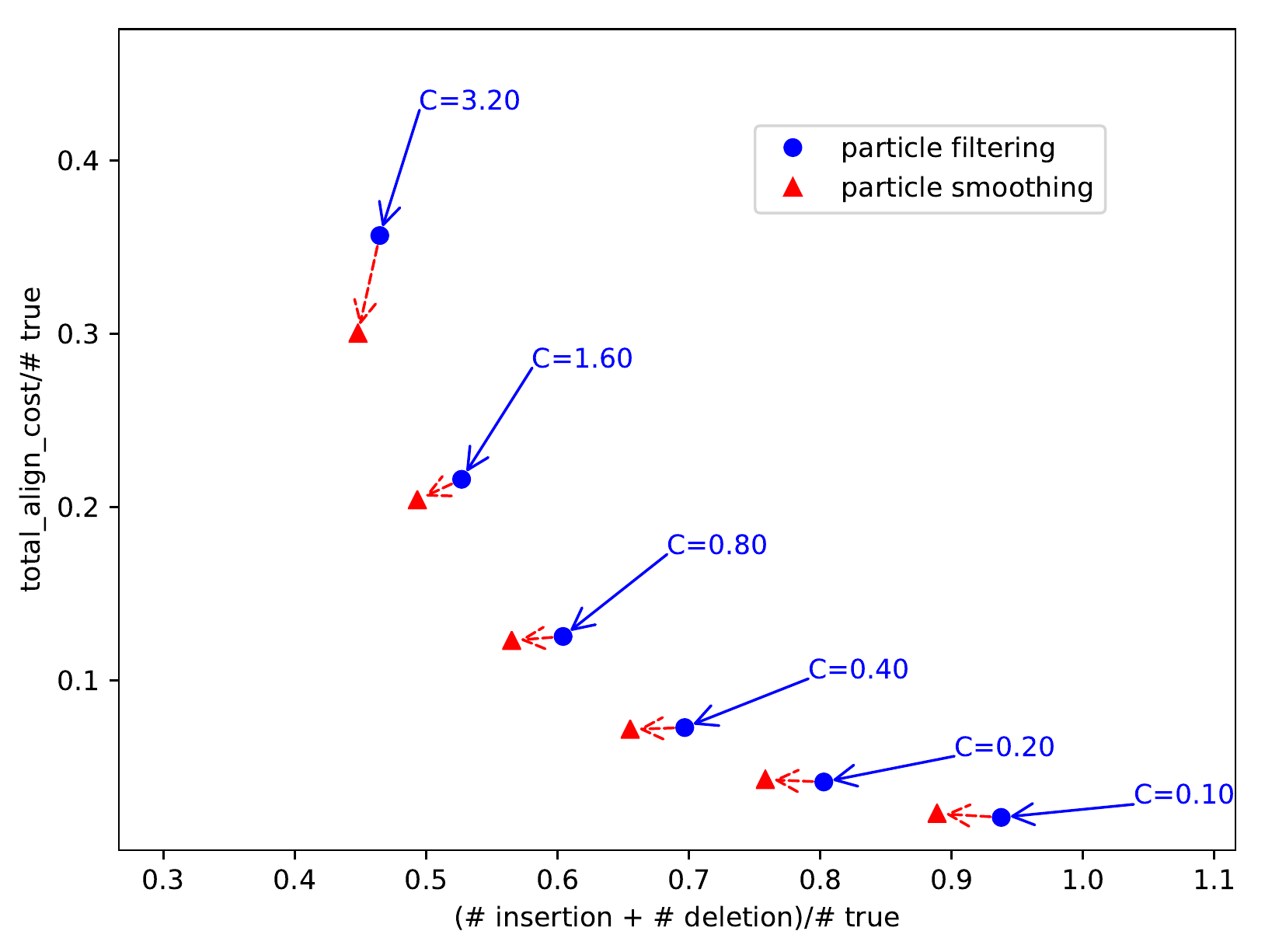}
			\includegraphics[width=0.99\linewidth]{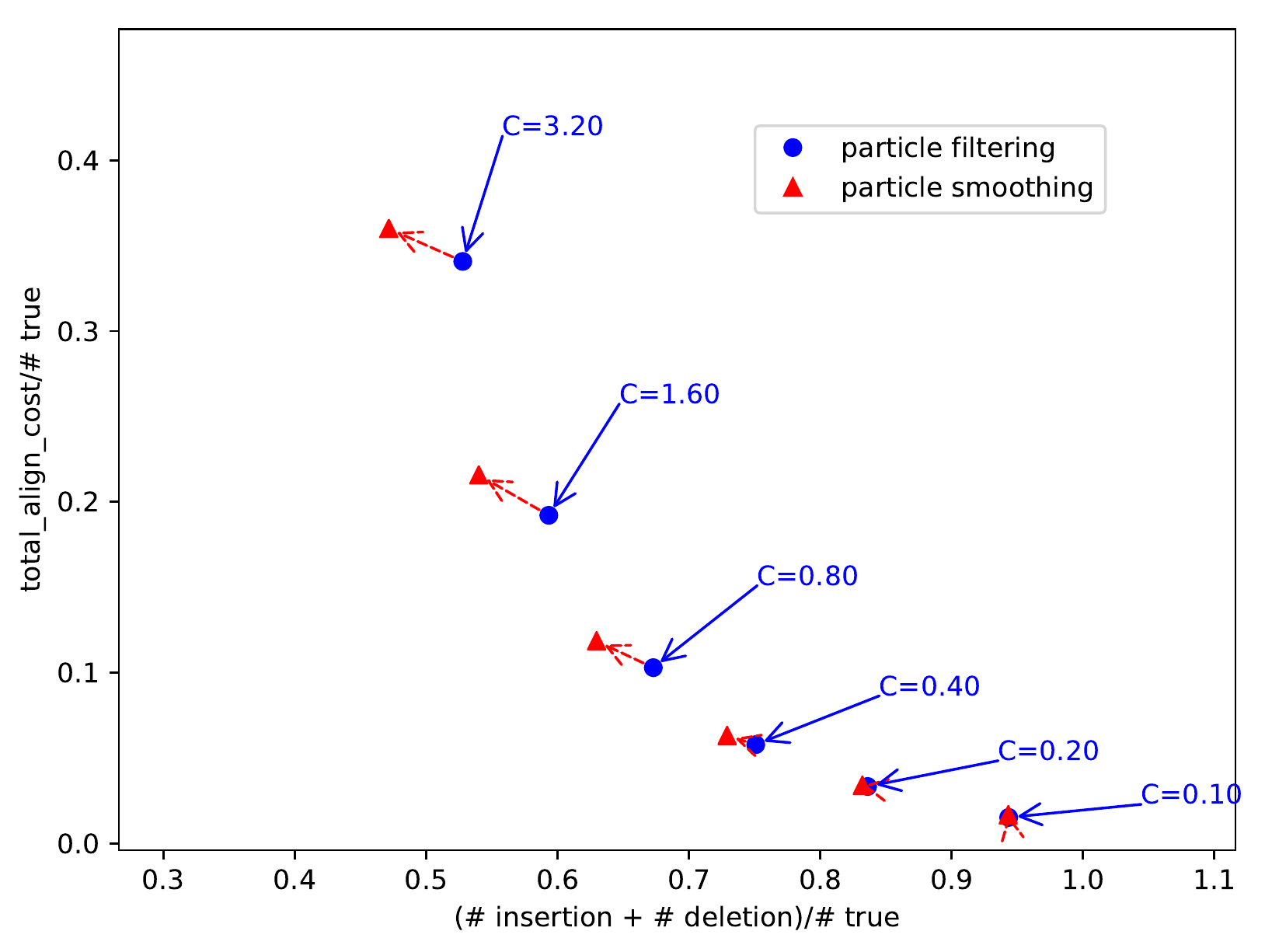}
			\caption{NYC Taxi}\label{fig:rhos_taxi}
		\end{subfigure}
		\caption{
			Optimal transport distance results with varying missingness rate $\pmi$. Rows (top-down) are results with $\pmi= 0.1, 0.3, 0.5, 0.7, 0.9$. As we can see, our particle smoothing consistently outperforms the filtering baseline with different $\pmi$, although no clear trend with increasing $\pmi$ is found on either dataset.
		}\label{fig:diff_rhos}
	\end{center}
\end{figure}

\subsection{Wall-Clock Runtime Details}\label{sec:wallclock}

A given run of particle smoothing begins by drawing $O(I)$ time points from $\Uniform([0,T))$, where $I$ is the number of observed events. All particles are evaluated using integrals that are estimated by evaluating the function at these time points (\cref{sec:integral}).  

The theoretical runtime complexity is $O(MI)$ because drawing a particle requires $O(I)$ time---the outer loop over time steps (\cref{line:sis:outer} of \cref{alg:sis})---and we draw $M$ particles in total---the inner loop over particles (\cref{line:sis:inner} in \cref{alg:sis}).
Our GPU implementation (which we will release) parallelizes the inner loop over particles. We sample 50 particles in parallel in these experiments, but we have tested with 1000 particles in parallel as well. So this is not a real problem with a GPU.

We reported experiments that we performed to demonstrate the practicality.  
On average, drawing an ensemble of 50 particles takes 5 seconds per example on the synthetic datasets (average length 15 events), 12 seconds per example on the NYC Taxi dataset (average length 32 events) and 100 seconds per example on the Elevator System dataset (average length 313 events)---that is, 300-400 milliseconds per event. Such speeds are acceptable in many incomplete data applications, compared to the cost of collecting complete data---all the applications in \cref{sec:intro} involve real-time decision making at a human timescale.

\section{Monte Carlo EM}\label{sec:mcem}

We normally assume (\cref{sec:train}) that some complete sequences are available for training the neural Hawkes process models.  If incomplete sequences are also available, our particle smoothing method can be used to (approximately) impute the missing events, which yields additional complete sequences for training.  Indeed, if we are willing to make a MAR assumption \citep{little-rubin-1987}, then we can do imputation without modeling the missingness mechanism.  Training on such imputed sequences is an instance of \defn{Monte Carlo expectation-maximization (MCEM)} \citep{dempster-77-em,wei-90-mcem,mclachlan-07-em}, with particle smoothing as the Monte Carlo E-step, and makes it possible to train with incomplete data only.  

In the more general MNAR scenario, we can extend the E-step to consider the not-at-random missingness mechanism (see \cref{eqn:weight_b} below), but then we need both complete and incomplete sequences at training time in order to fit the parameters of the missingness mechanism (unless these parameters are already known) jointly with those of the neural Hawkes process.  Although training with incomplete data is out of the scope of our experiments, we describe the methods here and provide MCEM pseudocode.

In this case, we would like to know the (marginal) probability of the observed data $\obs$ under the target distribution $p$:
\begin{align}
p(\obs)
= \sum_{\unobs} \model(\comp) \pmiss(\unobs \mid \comp)
\end{align}
If we propose $\unobs$ from $\proposal$, then it can be rewritten as:
\begin{subequations}
\begin{align}
p(\obs)
&= \sum_{\unobs} \model(\comp) \pmiss(\unobs \mid \comp) \frac{\proposal}{\proposal} \\
&= \E[\unobs \sim \proposal]{ \frac{\model(\comp)\pmiss(\unobs \mid \comp)}{\proposal}}
\end{align}
\end{subequations}

Given a finite number $M$ of proposed particles $\{\unobs_m\}_{m=1}^{M}$, this expectation can be estimated with empirical average:
\begin{align}
p(\obs)
= \frac{1}{M} \sum_{m=1}^{M} \frac{ \model(\comp_m ) \pmiss(\unobs_m \mid \comp_m) }{q(\unobs_m \mid \obs )}
\end{align}
and it is obvious that
\begin{subequations}
\begin{align}
  \log p(\obs) 
  &\geq \frac{1}{M} \sum_{m=1}^{M} ( b_m - \log{q(\unobs_m \mid \obs} ) )\label{eqn:elbo} \\
  b_m &= \log{\model(\comp_m)} + \log \pmiss(\unobs_m \mid \comp_m)
\end{align}
\end{subequations}
where the right-hand-side (RHS) term of \cref{eqn:elbo} is the \defn{Evidence Lower Bound (ELBO)} that we would maximize in order to maximize the log-likelihood.

The MCEM algorithm is composed of two steps:
\begin{description}
\item[E(xpectation)-step] We train the proposal distribution $\proposal$ using the method in \cref{sec:train} and then sample $M$ weighted particles from $\proposal$ by calling \cref{alg:sis}.
\item[M(aximization)-step] We train the neural Hawkes process $\model(\comp)$ by maximizing the ELBO (\cref{eqn:elbo}).
\end{description}

Note that in the MAR case, $\pmiss(\unobs \mid \comp)$ is constant of $\unobs$ so the it can be omitted from the formulation (and thus the algorithms).
Also note that, for particle filtering, the proposal distribution $\proposal$ is only part of $\model(\comp)$ so we do not need to train $\proposal$ at the E-step.

Maximum-likelihood estimation remains sensible in the MNAR case provided that we know one of the distributions $\model$ or $\pmiss$, in which case we can use EM to estimate the other distribution.  

(1)~If $\pmiss$ is known and fixed, as in our experiments, this gives a minor variant of ordinary EM. Ordinary EM makes the MAR assumption that the $\pmiss$ factor of \cref{eqn:target} can be ignored.  However, if we know $\pmiss$, we can incorporate it rather than ignoring it; then it need not satisfy the MAR assumption.

(2)~Conversely, if $\model$ is known and fixed because we estimated it from a sufficient quantity of {\em complete} data, then we can use incomplete data to learn the MNAR missingness distribution $\pmiss$. This setting would even lets us learn contextual missingness mechanisms in which the probability that an event is censored depends not only on the event itself, but also on the surrounding events and whether they are censored.  For example, one could try to fit $\pmiss$ with an LSTM model or a BiLSTM-CRF model \cite{huang2015bidirectional} that performs structured joint prediction of the missingness of all events in the sequence.  Extending that method to use continuous-time LSTMs would allow it to take timing into account.

The E step of Monte Carlo EM uses the current guesses of $\model$ and/or $\pmiss$ to sample from the posterior distribution $p(\unobs\mid\obs)$ of the missing values. That posterior is uncontroversially defined by the simple Bayesian formula \eqref{eqn:target}.  Notice that even if $\model$ and $\pmiss$ were \emph{both} unknown, we could still run MCEM to locally maximize the likelihood $p(\obs)$, but unfortunately the parameters would be unidentifiable in this case.  Thus, there would be many missing-data models with the same likelihood, as explained in \cref{sec:miss_details}, and they would make different predictions of $\unobs$.

\end{document}